
\documentclass{article}

\usepackage{microtype}
\usepackage{booktabs} %

\usepackage{hyperref}

\usepackage[accepted]{icml2025}

\usepackage{amsmath}
\usepackage{amssymb}
\usepackage{mathtools}
\usepackage{amsthm}
\usepackage{xspace}
\usepackage[italicdiff]{physics}
\usepackage{graphicx}
\usepackage{wrapfig}
\usepackage{subcaption}
\usepackage{caption}
\usepackage{cancel}
\captionsetup{font=small} %
\usepackage{float}
\usepackage{soul}
\usepackage{thmtools, thm-restate}
\usepackage[capitalize,noabbrev]{cleveref}

\theoremstyle{plain}
\newtheorem{theorem}{Theorem}[section]

\theoremstyle{definition}

\theoremstyle{remark}

\usepackage[textsize=tiny]{todonotes}
\usepackage{mathrsfs}

\definecolor{amber}{RGB}{206,18,86}
\definecolor{blueish}{RGB}{43,140,190}
\definecolor{purpleish}{RGB}{140,81,10}
\definecolor{ye}{HTML}{ff7f00}
\definecolor{pu}{HTML}{984ea3}
\definecolor{gre}{HTML}{4daf4a}
\definecolor{re}{HTML}{e41a1c}

\newcommand{\xxcomment}[4]{\textcolor{#1}{[$^{\textsc{#2}}_{\textsc{#3}}$ #4]}}

\newif\ifcomments
\commentsfalse %

\ifcomments
\newcommand{\shikai}[1]{\xxcomment{blueish}{S}{Q}{#1}}

\newcommand{\aga}[1]{\xxcomment{amber}{A}{A}{#1}}
\newcommand{\xlc}[1]{\xxcomment{orange}{L}{C}{#1}}
\newcommand{\jp}[1]{\xxcomment{purpleish}{J}{P}{#1}}
\else
\newcommand{\shikai}[1]{}
\newcommand{\aga}[1]{}
\newcommand{\xlc}[1]{}
\newcommand{\jp}[1]{}
\fi

\newcommand{\mup}{$\mu$P}

\newcommand{\E}{\mathbb{E}}
\newcommand{\V}{\mathbb{V}}

\newcommand{\N}{\mathcal{N}}

\renewcommand{\L}{\mathcal{L}}

\newcommand{\F}{\mathscr{F}}

\newcommand{\noise}{\mathcal{E}}

\newcommand{\lr}{\eta}

\newcommand{\eps}{\epsilon}

\renewcommand{\paragraph}[1]{\textbf{#1}}
\renewcommand{\norm}[1]{\|#1\|}

\icmltitlerunning{Scaling Collapse Reveals Universal Dynamics in Compute-Optimally Trained Neural Networks}

\begin{document}

\twocolumn[
\icmltitle{Scaling Collapse Reveals Universal Dynamics in \\ Compute-Optimally Trained Neural Networks}

\icmlsetsymbol{equal}{*}
\icmlsetsymbol{intern}{$\dagger$}

\begin{icmlauthorlist}
\icmlauthor{Shikai Qiu}{nyu,intern}
\icmlauthor{Lechao Xiao}{gdm}
\icmlauthor{Andrew Gordon Wilson}{nyu}
\icmlauthor{Jeffrey Pennington}{gdm}
\icmlauthor{Atish Agarwala}{gdm}
\end{icmlauthorlist}

\icmlaffiliation{nyu}{New York University}
\icmlaffiliation{gdm}{Google DeepMind}

\icmlcorrespondingauthor{Shikai Qiu}{sq2129@nyu.edu}
\icmlcorrespondingauthor{Atish Agarwala}{thetish@google.com}

\icmlkeywords{Machine Learning, ICML}

\vskip 0.3in
]

\printAffiliationsAndNotice{\intern}  %

\begin{abstract}
What scaling limits govern neural network training dynamics when model size and training time grow in tandem? We show that despite the complex interactions between architecture, training algorithms, and data, compute-optimally trained models exhibit a remarkably precise universality. Specifically, loss curves from models of varying sizes collapse onto a single universal curve when training compute and loss are normalized to unity at the end of training. With learning rate decay, the collapse becomes so tight that differences in the normalized curves across models fall below the noise floor of individual loss curves across random seeds, a phenomenon we term \emph{supercollapse}. We observe supercollapse across learning rate schedules, datasets, and architectures, including transformers trained on next-token prediction, and find it breaks down when hyperparameters are scaled suboptimally, providing a precise and practical indicator of good scaling. We explain these phenomena by connecting collapse to the power-law structure in typical neural scaling laws, and analyzing a simple yet surprisingly effective model of SGD noise dynamics that accurately predicts loss curves across various learning rate schedules and quantitatively explains the origin of supercollapse.
\end{abstract}

\begin{figure*}[t!]
\centering
\begin{minipage}[b]{0.31\textwidth}
    \centering
    \includegraphics[width=\textwidth]{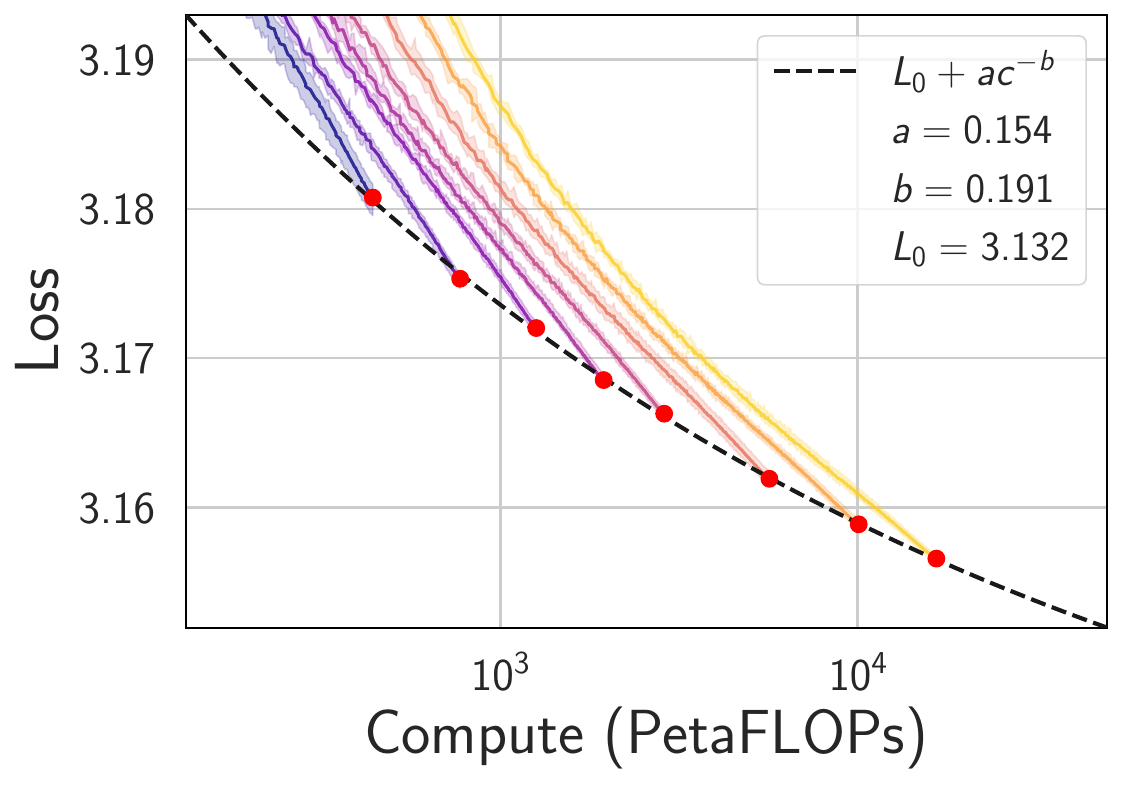}
    \subcaption{Compute-Optimal Scaling Law}
    \label{fig:c5m-scaling-law}
\end{minipage}%
\hfill
\begin{minipage}[b]{0.31\textwidth}
    \centering
    \includegraphics[width=\textwidth]{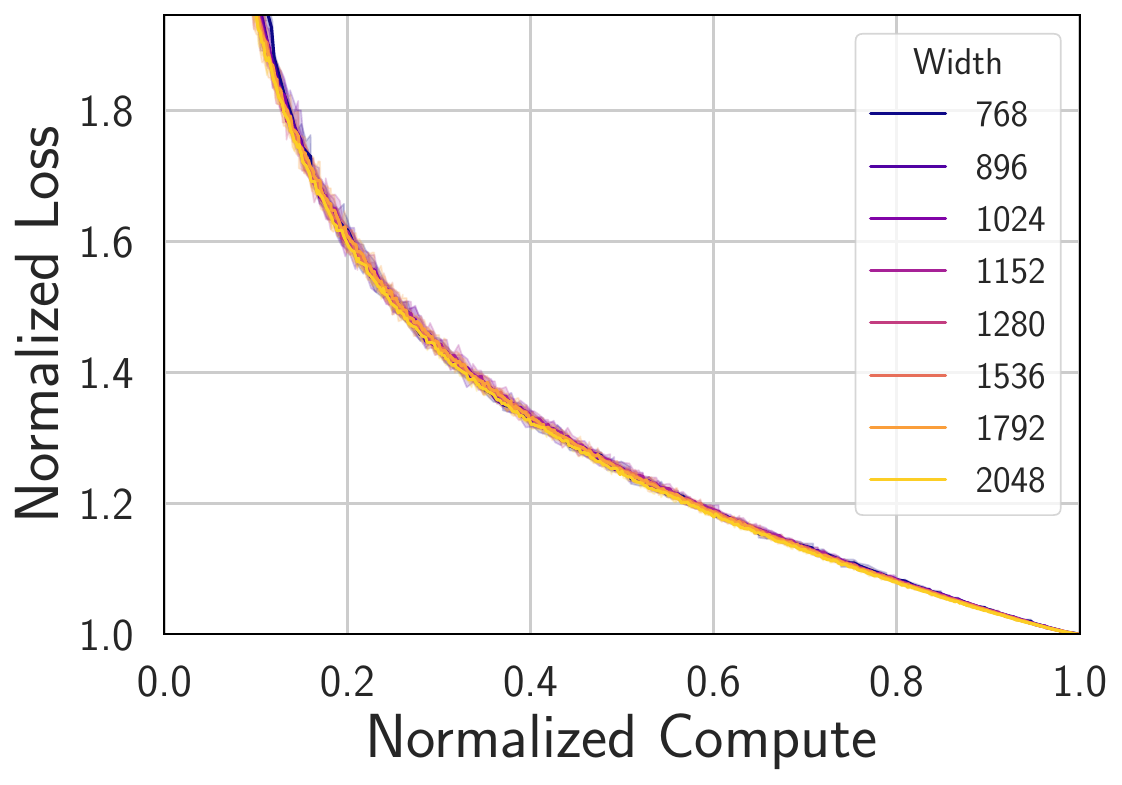}
    \subcaption{Supercollapse}
    \label{fig:c5m-supercolllapse}
\end{minipage}%
\hfill
\begin{minipage}[b]{0.31\textwidth}
    \centering
    \includegraphics[width=\textwidth]{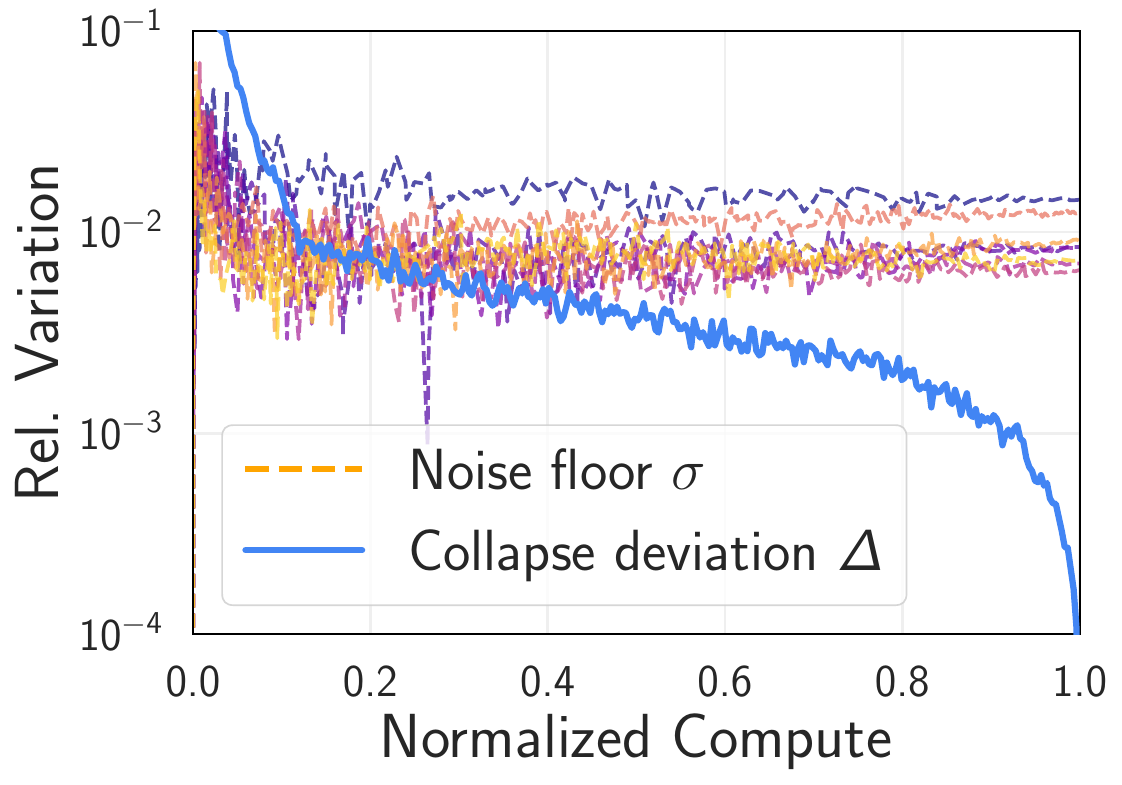}
    \subcaption{Collapse Below Noise Floor}
    \label{fig:c5m-collapse-tol}
\end{minipage}%
\\
\begin{minipage}[b]{0.31\textwidth}
    \centering
    \includegraphics[width=\textwidth]{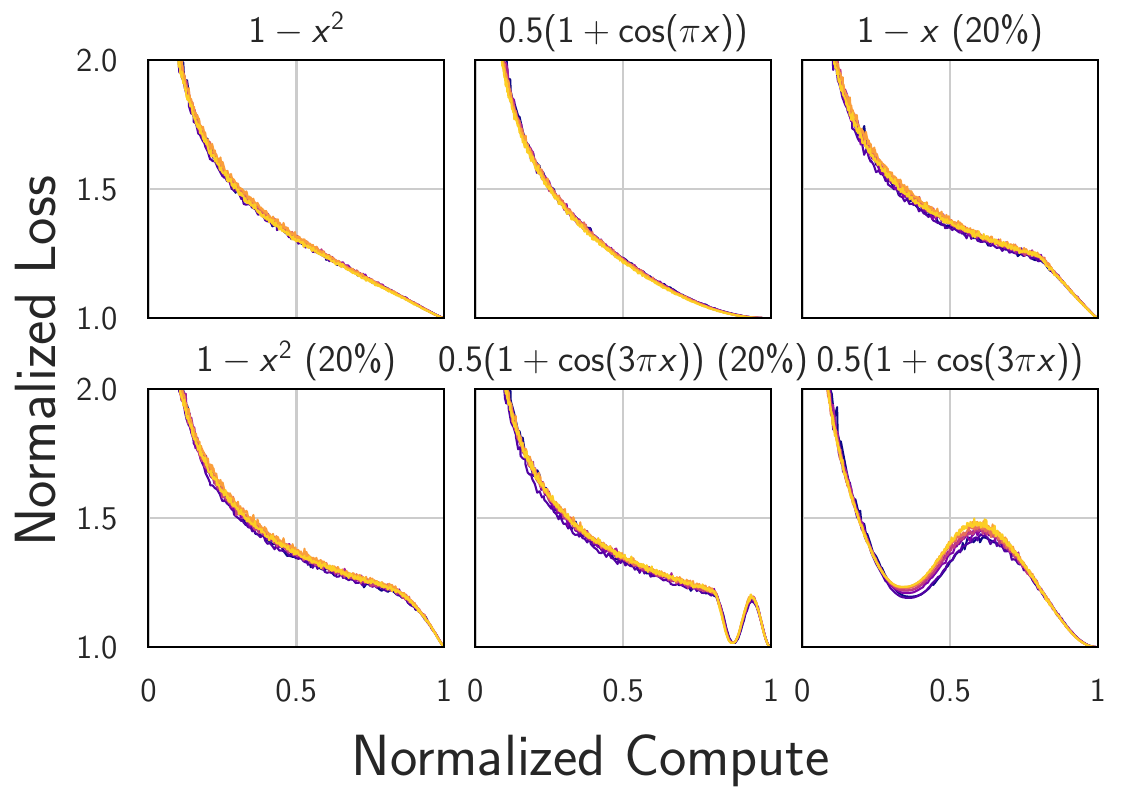}
    \subcaption{Collapse in Different Schedules}
    \label{fig:c5m-schedules}
\end{minipage}%
\hfill
\begin{minipage}[b]{0.31\textwidth}
    \centering
    \includegraphics[width=\textwidth]{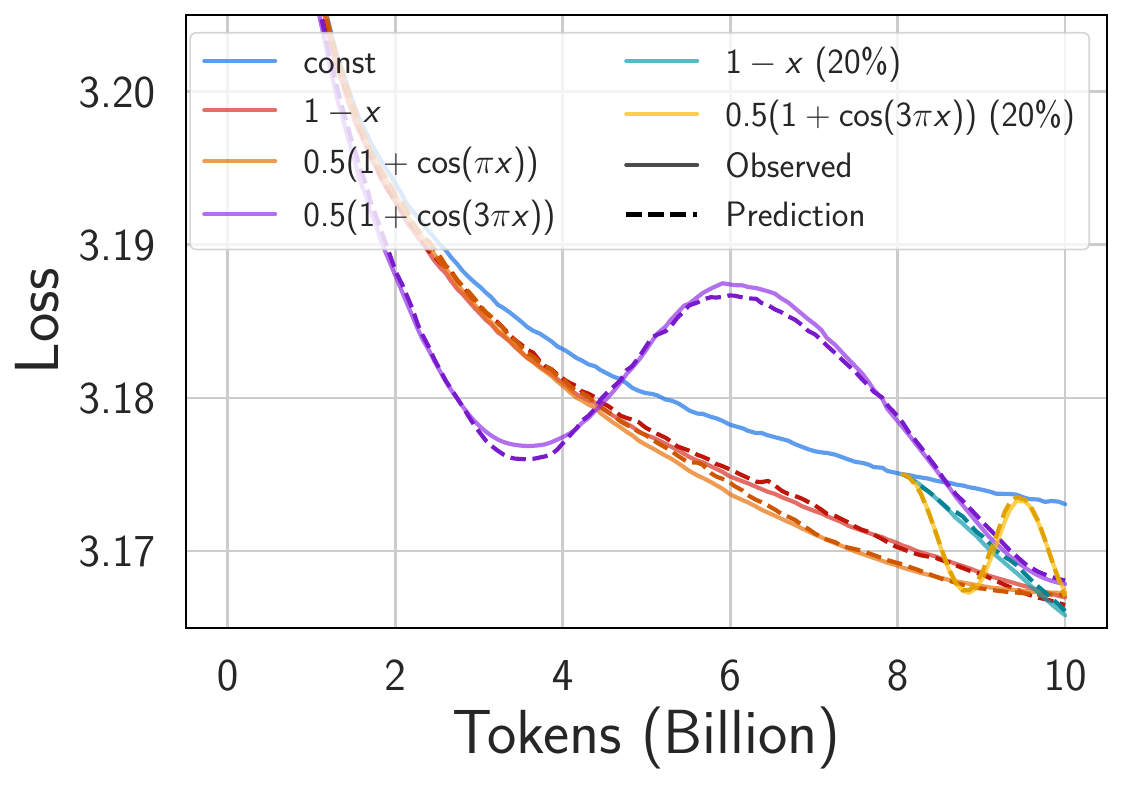}
    \subcaption{Predicting Loss Curve per Schedule}
    \label{fig:c5m-schedules-pred}
\end{minipage}%
\hfill
\begin{minipage}[b]{0.31\textwidth}
    \centering
    \includegraphics[width=\textwidth]{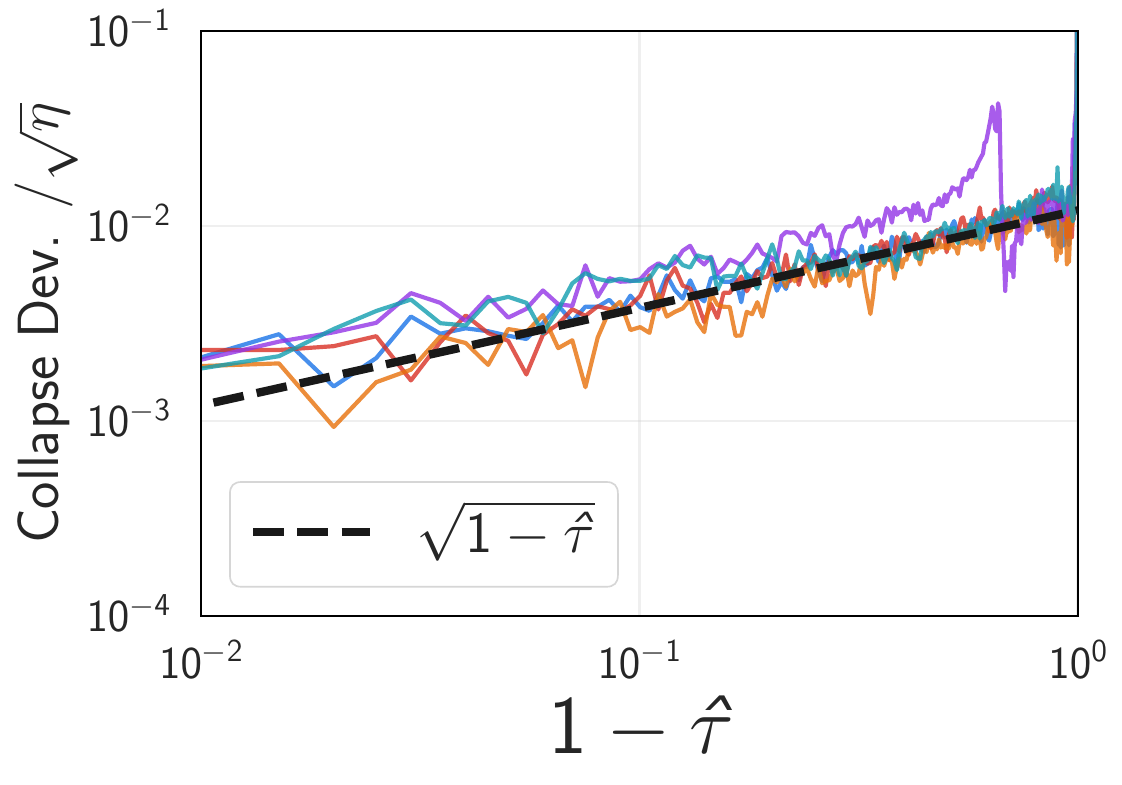}
    \subcaption{Predicting Collapse Quality}
    \label{fig:c5m-tol-pred}
\end{minipage}%
\vspace{-1mm}
\caption{\textbf{Scaling collapse of compute-optimal transformer loss curves and its explanation through a model of SGD noise dynamics.}
\textbf{(a)} Compute-optimal loss curves and fitted scaling law on CIFAR-5M, using a linear learning rate decay schedule. 
\textbf{(b)} Normalized reducible loss curves collapse onto a single universal curve independent of model size, with both final compute and reducible loss normalized to unity. 
\textbf{(c)} Collapse deviation $\Delta$ (cross-model variation of normalized loss) falls below per-model noise floor $\sigma$ (variation of reducible loss across random seeds) for much of training, a phenomenon we term \emph{supercollapse}. 
\textbf{(d)} Supercollapse occurs during the decay phase of various learning rate schedules, each producing its own universal curve. To explain these phenomena, we show that a simple model of SGD noise dynamics \textbf{(e)} accurately predicts loss curves for different schedules across model scales (\Cref{sec:lr}) and \textbf{(f)} quantifies how learning rate decay improves the collapse due to the predicted scaling $\Delta \propto \sqrt{\eta(1-\hat{\tau})},$ where $\eta$ is the instantaneous learning rate and $\hat{\tau}$ is normalized gradient flow time (\Cref{sec:super}).
We observe supercollapse in other arhitectures and datasets (\Cref{fig:breaks}).
}
\label{fig:supercollapse}
\vspace{-3mm}
\end{figure*}

\section{Introduction}
As machine learning systems grow in scale, accurate predictive models of their training dynamics become increasingly valuable, both for interpreting costly experiments and for designing robust, efficient training pipelines \citep{wortsman2023small,achiam2023gpt,xiao2024rethinking}. While the complexity of modern architectures, optimizers, and datasets often renders exact, first-principles analyses intractable for any individual model, recent work shows that some key aspects of training are predictable when we focus on their scaling behavior across a family of models. Examples include empirical power-law relations linking optimal final loss, model size, dataset size, and compute budget under compute-optimal training, known as neural scaling laws \citep{hestness2017deep,kaplan2020scaling,sharma2022scaling,hoffmann2022training}, as well as hyperparameter transfer from small to large models based on infinite-width or depth limits of training dynamics \citep{yang2021v,bordelon2023depthwise,everett2024scaling,bordelon2024infinite}.

In this work, we show the entire training process follows highly predictable scaling, beyond final losses and optimal hyperparameters.
We find that the entire loss curves of compute-optimally trained models exhibit a precise scaling symmetry, collapsing onto a single universal curve across models after a simple normalization.
Learning rate decay amplifies this effect dramatically, producing what we call \emph{supercollapse}: collapse so tight that cross-scale differences fall below the noise floor of individual loss curves due to random seeds. \Cref{fig:supercollapse} (a-d) summarizes these results. 

These findings advance our understanding in two key ways. First, while \citet[Figure 11]{kaplan2020scaling} found the loss curves roughly follow a sum of power laws, we identify that loss curves follow a universal shape with far greater precision. For typical learning rate schedules, this shape deviates from simple power laws and may not admit any obvious functional form. Second, our work provides compelling empirical evidence for a well-defined joint scaling limit where model size and training time grow together under compute-optimal allocation. This limit contrasts with traditional infinite-width or depth limits that fix training duration \citep{yang2021infty,bordelon2022self}. While these theories predict initial dynamical consistency, accumulating finite-size effects lead to gradual divergence as training progresses, as demonstrated by \citet{vyas2023feature}. In contrast, the collapse we observe reveals a joint scaling limit that preserves consistency throughout training, precisely the regime relevant for practical large-scale training.

We provide an elementary theoretical analysis that reveals the key mechanisms behind this precise collapse. We first show that for loss curves following typical neural scaling laws, collapse occurs precisely when models are trained for constant multiples of their compute-optimal horizons (\Cref{sec:collapse-from-power-law}). We then analyze a simple theoretical model of the SGD noise dynamics that predicts loss curves under a variety of learning rate schedules remarkably well (\Cref{sec:lr}), and explains two key observations: why normalized curves retain universal form despite losing their power-law structure, and how learning rate decay suppresses variance to produce supercollapse (\Cref{sec:super}). 

Beyond theoretical interest, supercollapse provides a practical scaling diagnostic, as we find that deviations from collapse can signal misconfigured scaling choices, such as suboptimal scaling of learning rate and data (\Cref{fig:breaks}). Overall, our results suggest supercollapse provides a novel, powerful tool to study scaling. Our code can be found \href{https://github.com/shikaiqiu/supercollapse.git}{\underline{here}}.

\section{Empirical Observations}

\label{sec:supercollapse_observation}

We demonstrate our main empirical findings in this section,
independently on multiple tasks and architectures which can be studied even in academic settings.

\subsection{Experiment Setup} \label{sec:setup}
In each task, we train a sequence of models with increasing compute, scaling hyperparameters such as data, initialization, and learning rate with the model. We refer to a sequence of training configurations as a scaling ladder. We provide further experimental details in \Cref{app:experiment}. We focus on width scaling, where hyperparameter transfer is most well-studied, but find scaling transformer depth leads to similar results in \Cref{app:depth}, suggesting our observations may generalize to more general scaling ladders where width, depth, batch size, weight decay, etc. can be co-scaled. 

\paragraph{Transformers Next-Token Prediction.}
We consider two next-token prediction tasks: 1) CIFAR-5M \citep{nakkiran2020deep}, a dataset of 6M generated CIFAR-like images, and 2) Lichess, a collection of chess games recorded in algebraic chess notation where the goal is to predict the next move in the game.
Our scaling ladder includes models with about 10M to 80M parameters, approximately log-uniformly spaced, by scaling the width (embedding dimensions) from 768 to 2048 and fixing the number of blocks to 3. All models use \mup\ \citep{yang2021infty,yang2021v} for initialization and learning rates, and are trained with Adam. 

\paragraph{MLPs on Power-Law Fourier Features.} To investigate other architectures and training objectives, we train 7-layer MLPs with varying widths from 384 to 2048 on a synthetic regression task. The target function has a power-law Fourier spectrum, designed to elicit the power-law scaling laws observed in natural data. We count each example as 1 token.

\begin{figure}[b!]
\centering
\begin{minipage}[b]{0.43\linewidth}
    \centering
    \includegraphics[width=\textwidth]{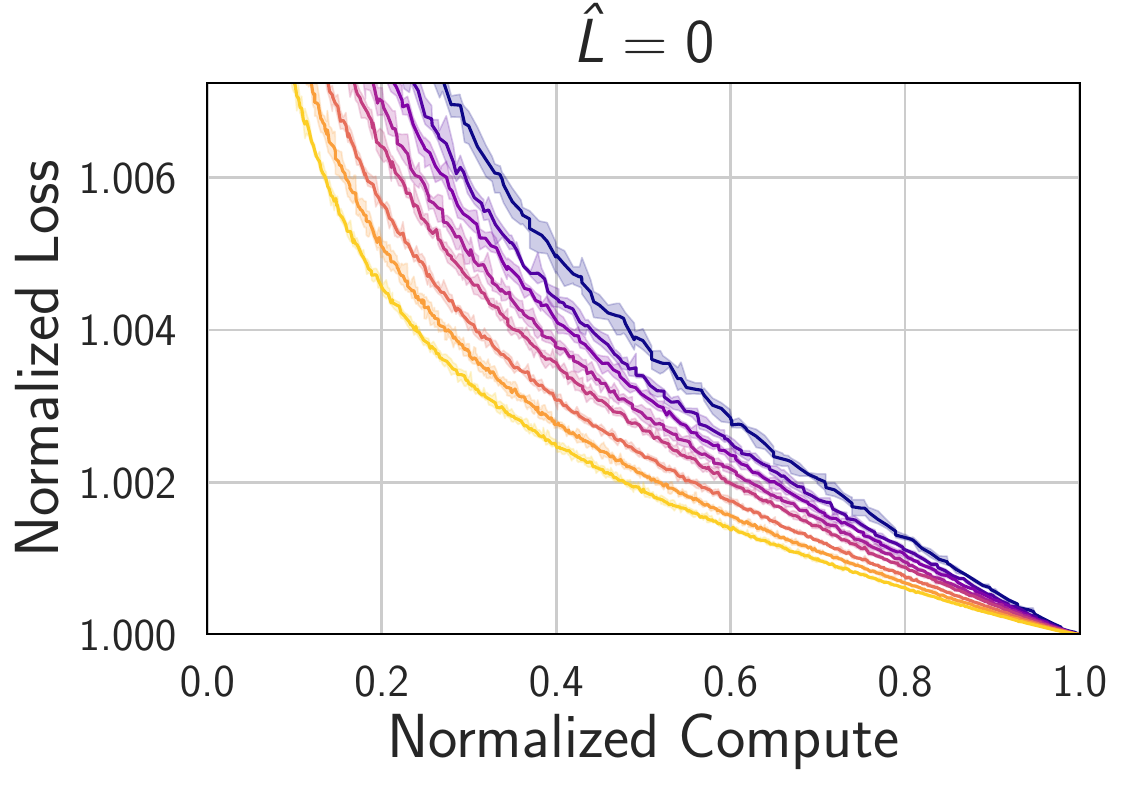}
    \label{fig:perturb-L0}
\end{minipage}%
\begin{minipage}[b]{0.43\linewidth}
    \centering
    \includegraphics[width=\textwidth]{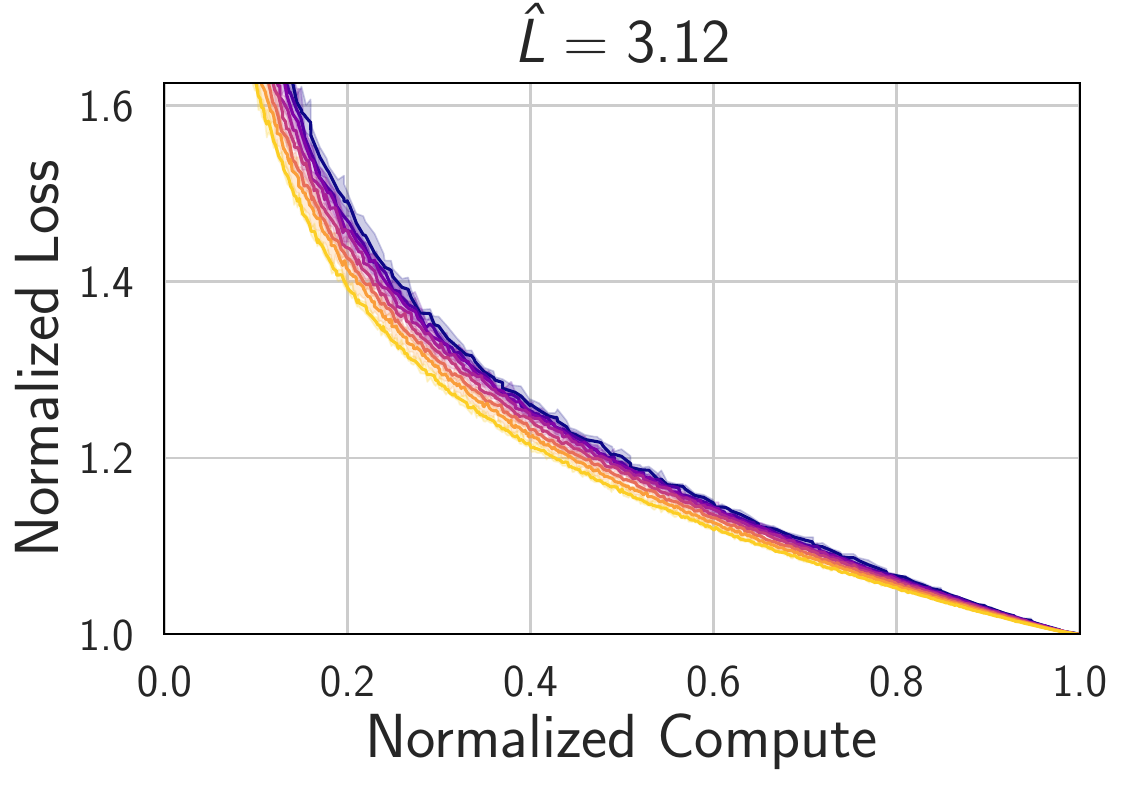}
    \label{fig:perturb-L0}
\end{minipage}%
\vspace{-5mm}
\\
\begin{minipage}[b]{0.43\linewidth}
    \centering
    \includegraphics[width=\textwidth]{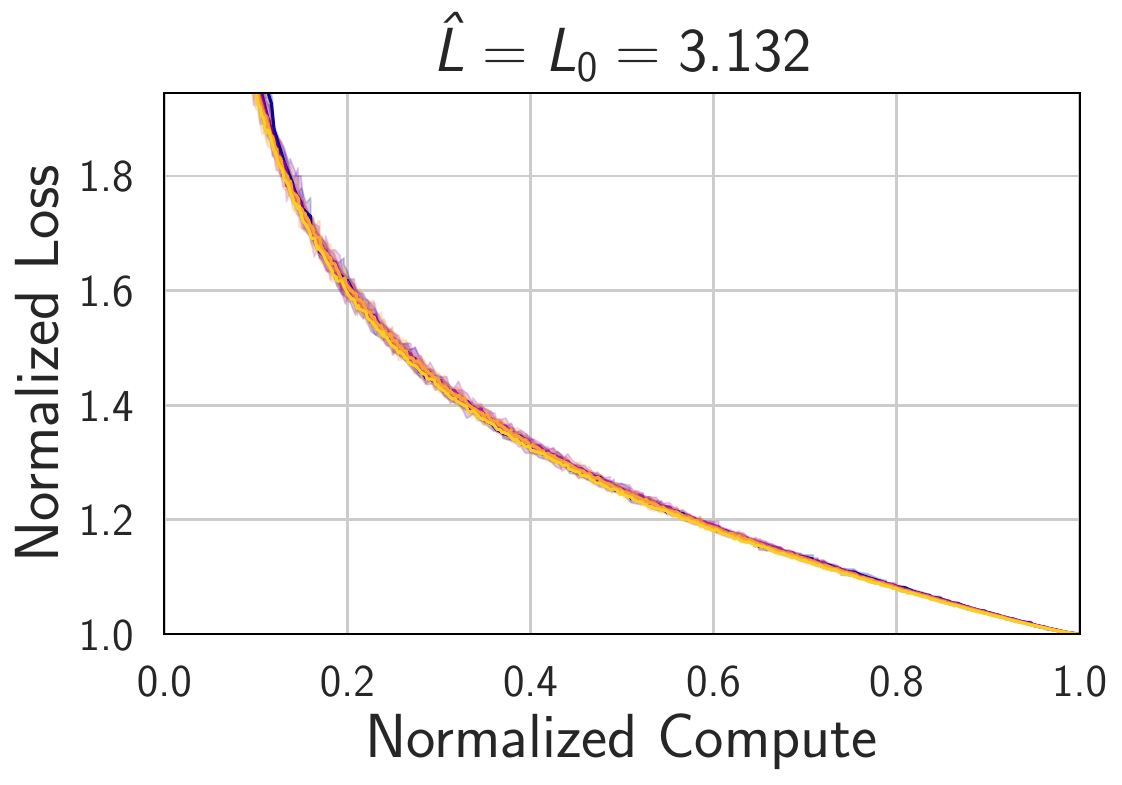}
    \label{fig:perturb-L0}
\end{minipage}%
\begin{minipage}[b]{0.43\linewidth}
    \centering
    \includegraphics[width=\textwidth]{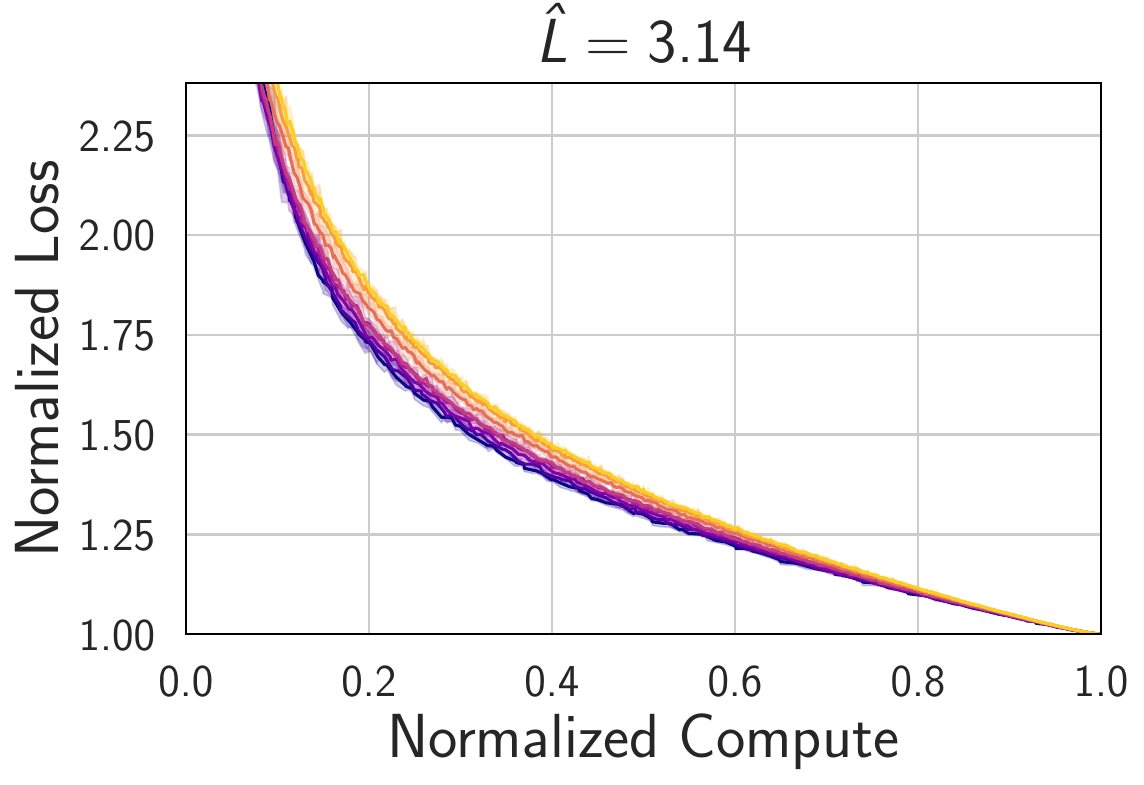}
    \label{fig:perturb-L0}
\end{minipage}%
\vspace{-5mm}
\caption{\textbf{Subtracting irreducible loss leads to the best collapse.} Setting $\hat{L}$ to values far from $L_0$ breaks the collapse on CIFAR-5M.
}
\label{fig:L0}
\end{figure}

\subsection{Estimating Compute-Optimal Scaling Laws}
Let $L(t,p,\omega)$ be the test loss after $t$ tokens (proportional to steps) for a model with $p$ parameters trained with random seed $\omega$. We estimate the compute-optimal training horizon in tokens for a $p$-parameter model as $t^\star(p) = (p/p_0)^\gamma$, where $\gamma$ is the data exponent, by extracting the Pareto frontier of expected loss (estimated using 5 seeds) vs. compute under a constant learning rate schedule, following a procedure similar to Approach 1 in \citet{hoffmann2022training}, with compute estimated as $c=6tp$ FLOPs \citep{kaplan2020scaling}. 
We reuse the same $t^\star(p)$ as the training horizon for other learning rate schedules, which prior work suggests is optimal up to a constant factor \citep{pearce2024reconciling}. For each task and schedule, we fit the resulting compute-optimal scaling law using the form $L_{0} +a c^{-b}$ (\Cref{fig:c5m-scaling-law}), for constants $L_0, a, b \geq 0$. Following \citet{sharma2022scaling} and \citet{hoffmann2022training}, we refer to $L_0$ as the estimated irreducible loss. Using the best-fit $L_0$, we define the reducible loss curve $\mathcal{L}(t,p,\omega) = L(t,p,\omega) - L_0$. We detail the procedure for fitting the compute-optimal training horizon in \Cref{app:power-law-fits}. ``Compute-optimal" here primarily refers to the choice of training horizon, not of all hyperparameters.

\begin{figure}[t!]
\centering
\begin{minipage}[b]{0.48\linewidth}
    \centering
    \includegraphics[width=\textwidth]{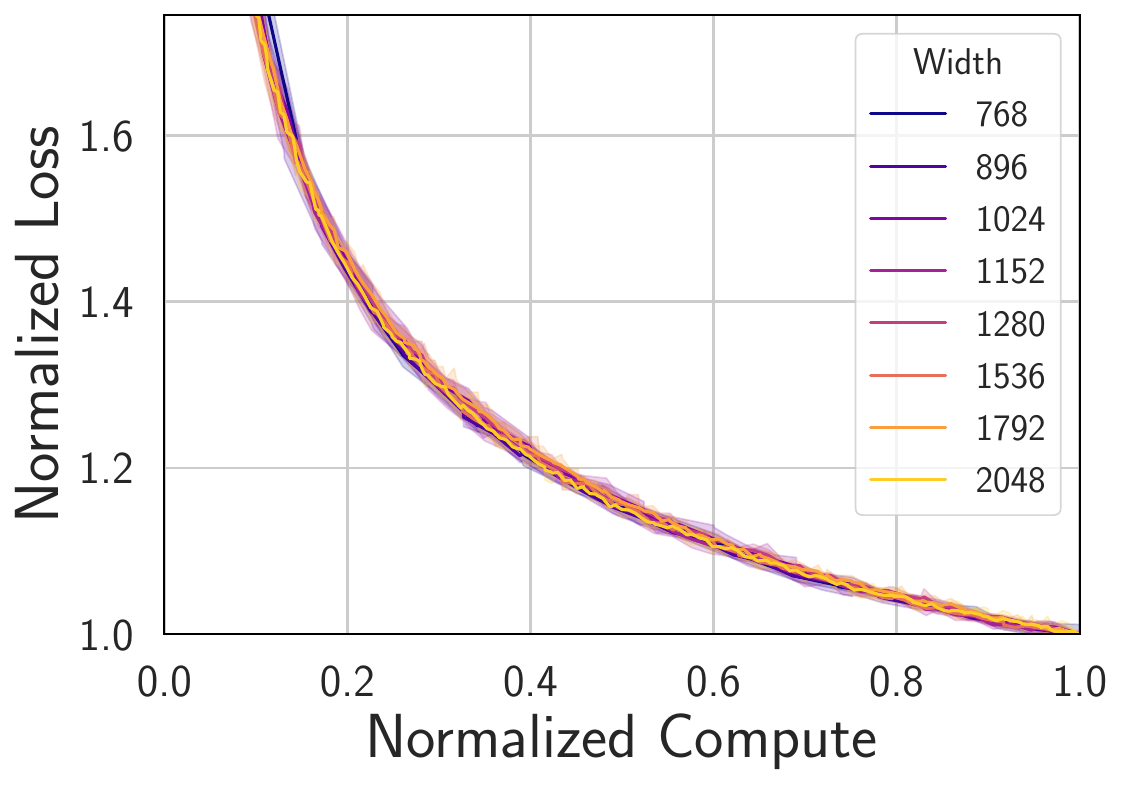}
\end{minipage}%
\begin{minipage}[b]{0.48\linewidth}
    \centering
    \includegraphics[width=\textwidth]{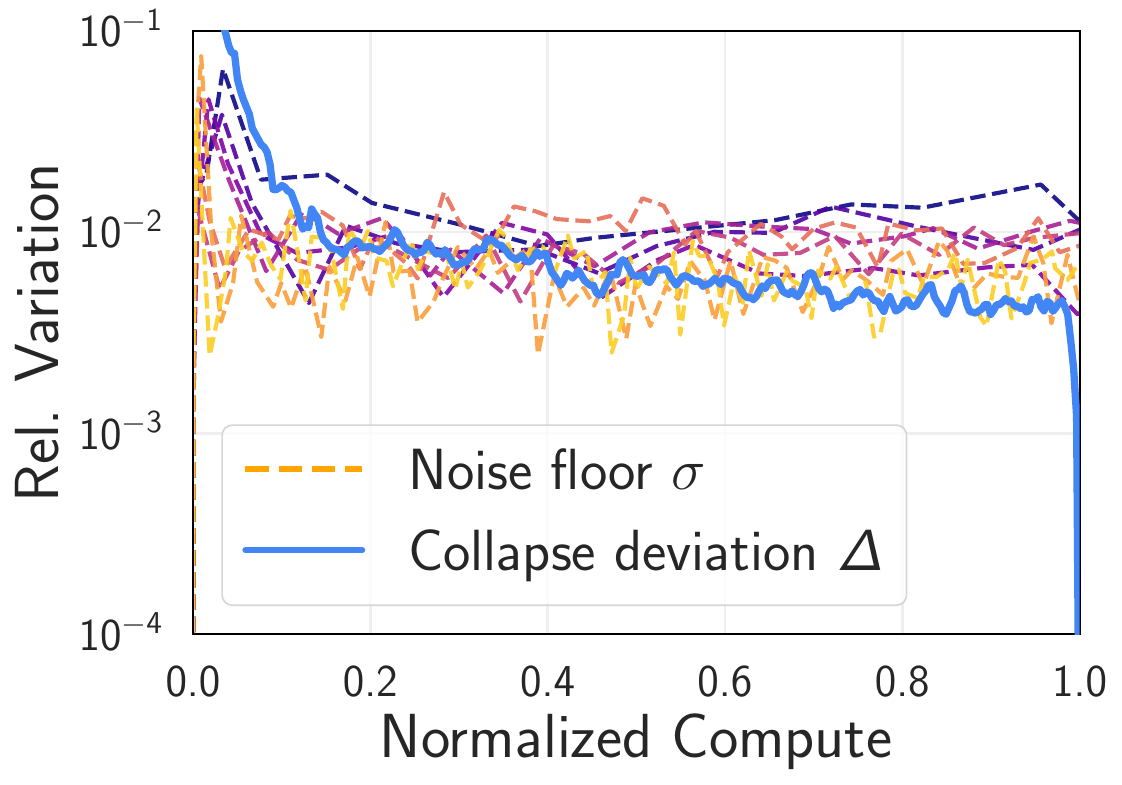}
\end{minipage}%
\caption{\textbf{Collapse with a constant LR schedule.} (Left) Estimated mean and 90\% confidence interval (shaded) of the normalized loss curves. (Right) $\Delta$ is comparable to $\sigma$ without LR decay.}
\vspace{-3mm}
\label{fig:const-lr}
\end{figure}

\subsection{Scaling Collapse of Compute-Optimal Loss Curves}\label{sec:scaling-collapse}
The loss curves for different model sizes cover varying ranges of compute and loss values, but appear to follow a consistent shape, which motivates us to affinely rescale them to the \emph{normalized loss curve} $\ell$ given by
\begin{equation}
\label{eq:normalized-curve-def}
 \ell(x,p,\omega) =  \frac{L(xt^{\star}(p),p,\omega) - \hat{L}}{L(t^{\star}(p),p,\omega) - \hat{L}}, \quad x \in [0, 1],
\end{equation}
for some offset $\hat{L}.$ We refer to $x$ as the normalized compute. Note the denominator uses the stochastic final loss value specific to the random seed. We set $\hat{L} = L_0$ to subtract the estimated irreducible loss that bottlenecks the asymptotic performance, leading to $\ell(x,p,\omega) =  \frac{\L(xt^{\star}(p),p,\omega)}{\L(t^{\star}(p),p,\omega)}.$

\begin{figure*}[t!]
\centering
\begin{minipage}[b]{0.32\linewidth}
    \centering
    \includegraphics[width=\linewidth]{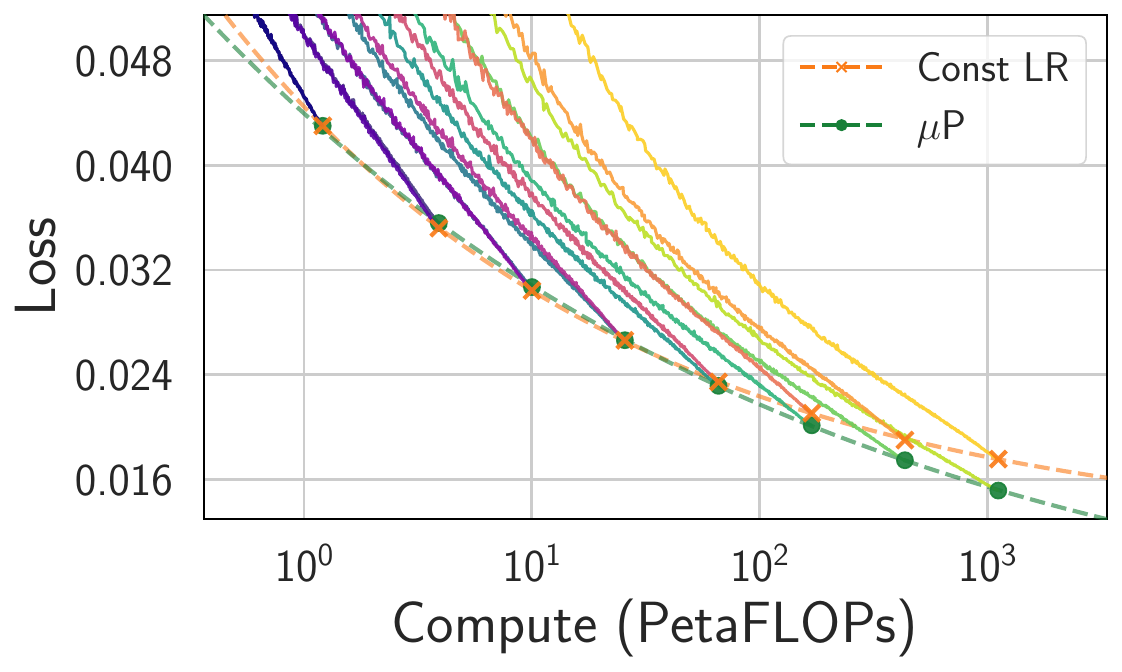}
    \subcaption{MLP \mup\ vs Constant LR}
    \label{fig:mlp_compare_scaling_law}
\end{minipage}%
\hfill
\begin{minipage}[b]{0.32\linewidth}
    \centering
    \includegraphics[width=\linewidth]{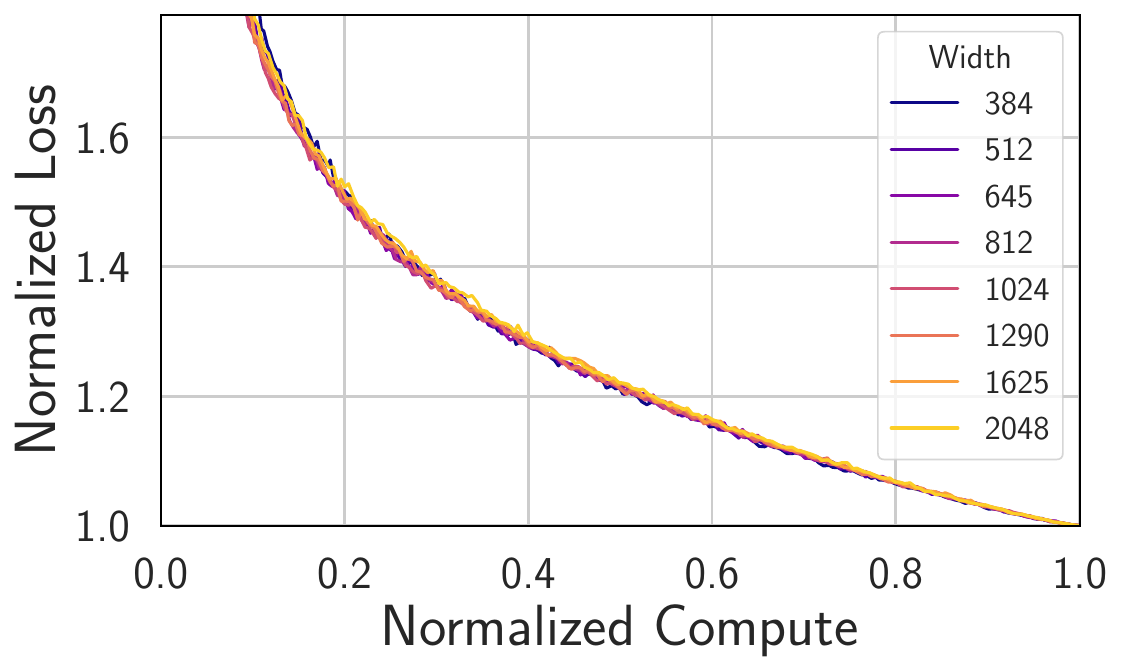}
    \subcaption{\mup\ (supercollapse)}
\end{minipage}%
\hfill
\begin{minipage}[b]{0.32\linewidth}
    \centering
    \includegraphics[width=\linewidth]{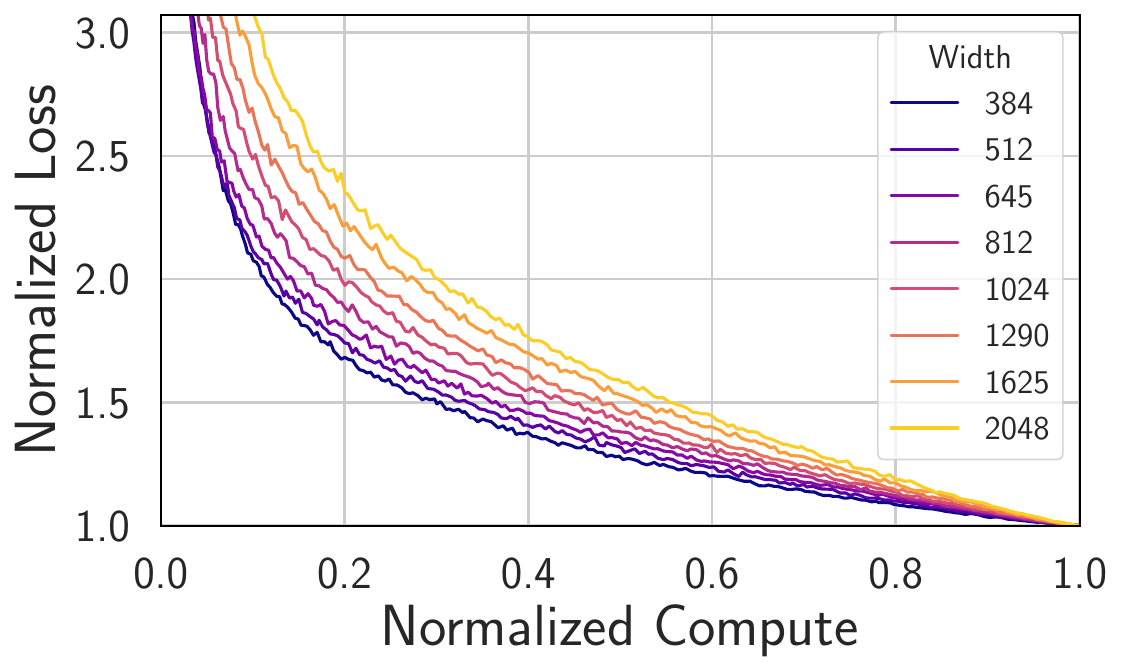}
    \subcaption{Constant LR (no collapse)}
\end{minipage}%
\\
\begin{minipage}[b]{0.32\linewidth}
    \centering
    \includegraphics[width=\linewidth]{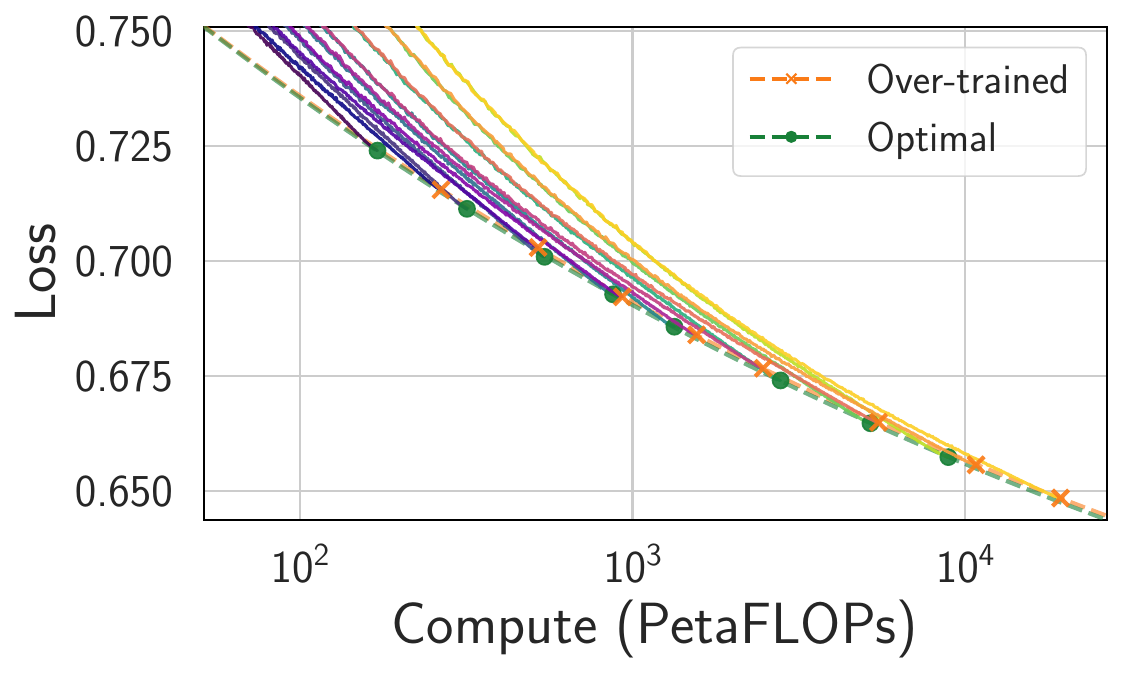}
    \subcaption{Transformer Optimal vs Over-trained}
\end{minipage}%
\hfill
\begin{minipage}[b]{0.32\linewidth}
    \centering
    \includegraphics[width=\linewidth]{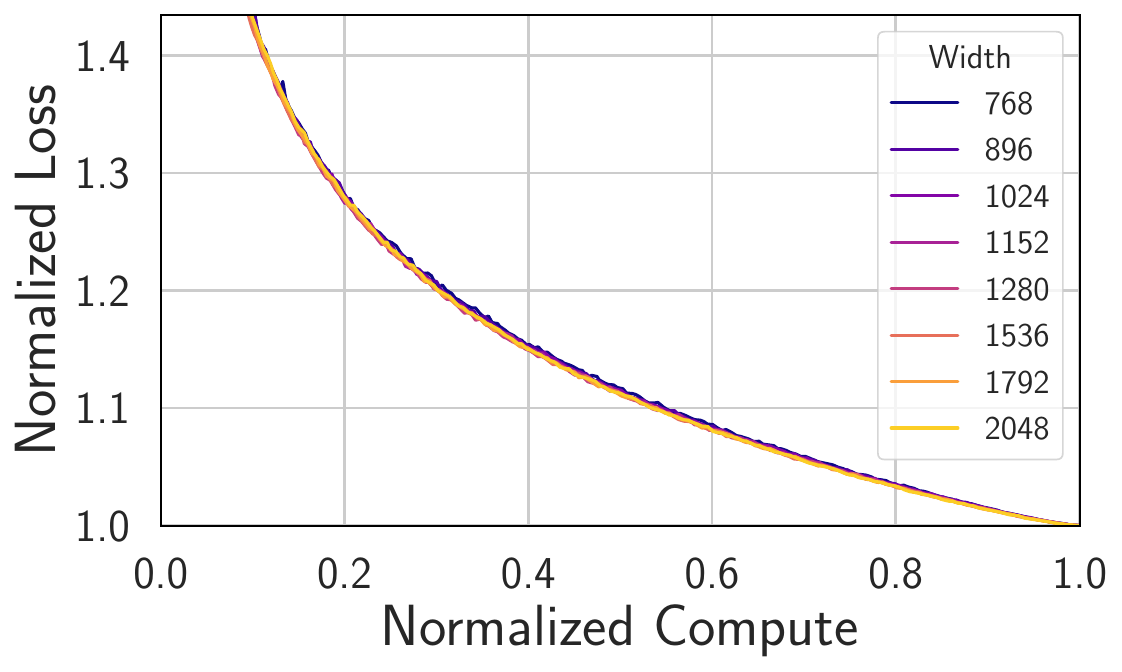}
    \subcaption{Optimal (supercollapse)}
\end{minipage}%
\hfill
\begin{minipage}[b]{0.32\linewidth}
    \centering
    \includegraphics[width=\linewidth]{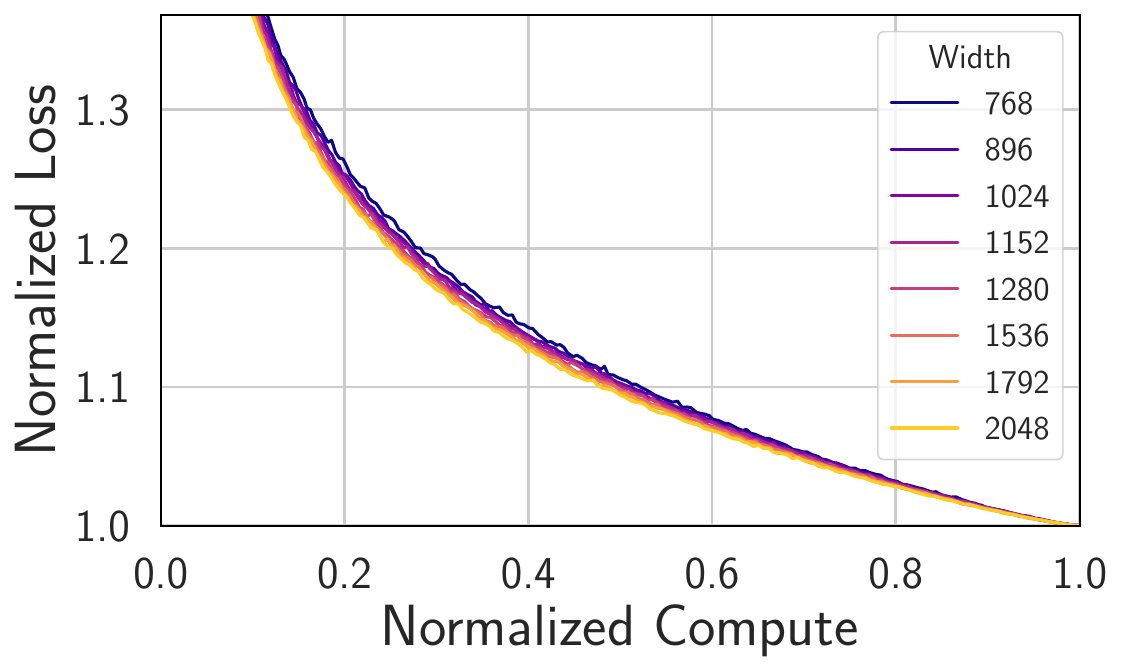}
    \subcaption{Over-trained (no collapse)}
\end{minipage}%
\caption{\textbf{Collapse provides a practical indicator of good scaling, as suboptimally scaling key hyperparameters breaks the collapse.} With the default setup, we observe supercollapse in MLP regression (\textbf{b}) and transformer trained on chess \textbf{(e)}, but even changes that only lead to minor worsening in the scaling law can manifest as significant disruption to the collapse. \textbf{(Top)} Replacing \mup\ with a constant learning rate cross models for MLPs.  \textbf{(Bottom)} Increasing the data exponent $\gamma$ from estimated compute-optimal value $1.02$ to $1.2$ for Transformers trained on chess. We perform a separate power-law fit to determine the value $L_0$ for each scaling ladder. 
}
\vspace{-3mm}
\label{fig:breaks}
\end{figure*}

Remarkably, we observe that the family of normalized loss curves is nearly identical across $p$, revealing equal rates of relative progress (\Cref{fig:c5m-supercolllapse}). We say these curves \emph{collapse}, as the phenomenon resembles the ubiquitous scaling collapse found in statistical physics, theoretical biology, and other sciences, where observables from systems of different sizes collapse onto a single curve after appropriate rescaling (see \Cref{app:collapse_science} for further discussion). We found setting $\hat{L} = L_0$ achieves the best collapse (\Cref{fig:L0}).

\subsection{Quantifying the Collapse Quality} \label{sec:quantify-collapse}
We quantify the quality of collapse using the \emph{collapse deviation} $\Delta$, defined as:
\begin{align}
\Delta(x) = \frac{\V_{p,\omega} [\ell(x,p,\omega)]^{1/2}}{\E_{p,\omega} [\ell(x,p,\omega)]},
\end{align}
where $\E_{p,\omega}$ and $\V_{p,\omega}$ denote the expectation and variance over the random seed and the empirical distribution of model size $p$ in the scaling ladder (approximately log-uniformly distributed).
The collapse deviation measures the relative variation of the normalized curves across $p$.
For perspective, we compare it to the per-model (relative) noise floor:
\begin{align}
\sigma(x,p) = \frac{\V_{\omega} [\L(xt^{\star}(p),p,\omega)]^{1/2}}{\E_{\omega} [\L(xt^{\star}(p),p,\omega)]}, \label{eq:noise-floor}
\end{align}
which measures the relative fluctuation in the reducible loss curve for each model size $p$ across random seeds.

By the definition of $\ell$, $\Delta(1) = 0$ always. As seen in 
\Cref{fig:const-lr},
for a constant learning rate, $\Delta(x)$ quickly rises to a level comparable to
$\sigma(x, p)$, and remains at that level for most $x < 1$.
This observation shows that variations in the normalized curves arise primarily from seed-to-seed fluctuations rather than model-to-model differences, quantitatively demonstrating that the observed collapse is non-trivial.

\subsection{Supercollapse: Consistency Below the Noise Floor}

Remarkably, with learning rate decay, we find that the collapse deviation is less than the noise floor for a significant fraction of training;
that is, $\Delta(x)<\sigma(x,p)$ for $x>1-\delta$ for some moderate $\delta$ as large as $0.5$ (\Cref{fig:c5m-collapse-tol}).
We refer to this stronger form of collapse as \emph{supercollapse} (in contrast to the collapse in \Cref{fig:const-lr}). Supercollapse appears in the decay phase of all tested learning rate schedules that decay to zero (\Cref{fig:c5m-schedules}). All schedules are defined in terms of relative training fraction, i.e., the learning rate is a fixed function of the normalized compute $x$ across model sizes.

Under supercollapse, self-normalized loss curves from different models 
collapse better than our ability to predict any individual model's loss. Normalizing by the final loss of the
particular realization of the stochastic loss curve is key to supercollapse, which reduces variance by exploiting correlations at different times along a single optimization trajectory.
We explain this mechanism in detail in \Cref{sec:super}.

\begin{figure*}[t!]
\begin{minipage}[b]{0.3\linewidth}
    \centering
    \includegraphics[width=\linewidth]{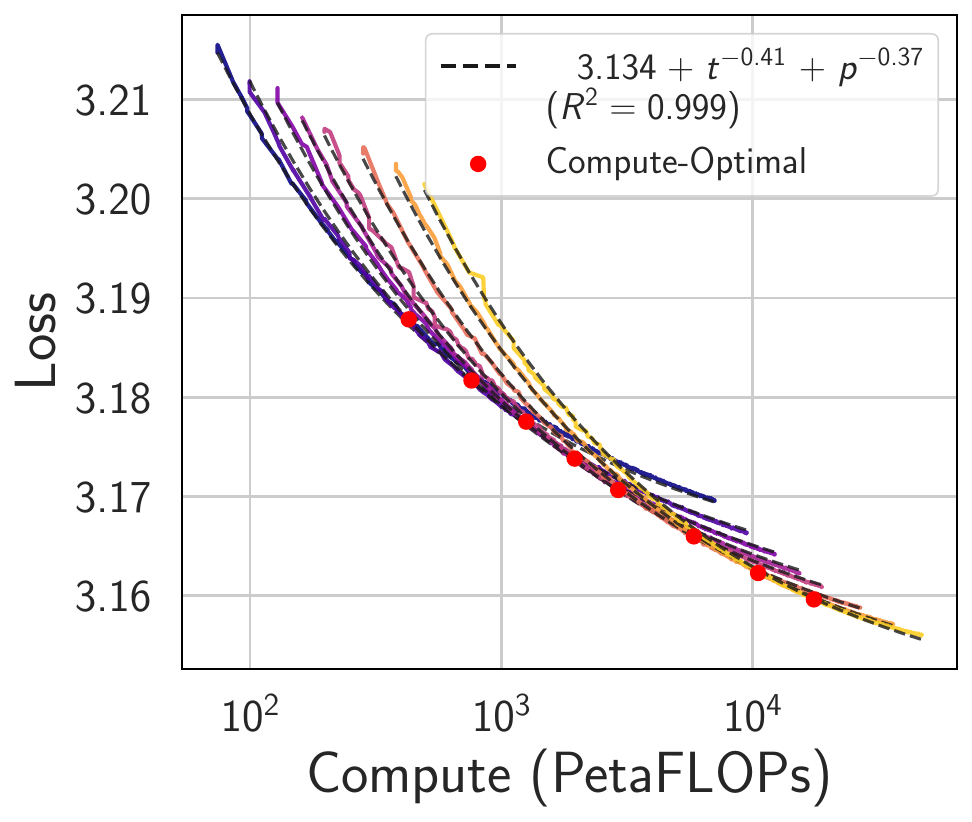}%
    \subcaption{CIFAR-5M Fit}
    \label{fig:power-law-fit}
\end{minipage}%
\hfill
\begin{minipage}[b]{0.7\linewidth}
    \centering
    \includegraphics[width=\linewidth]{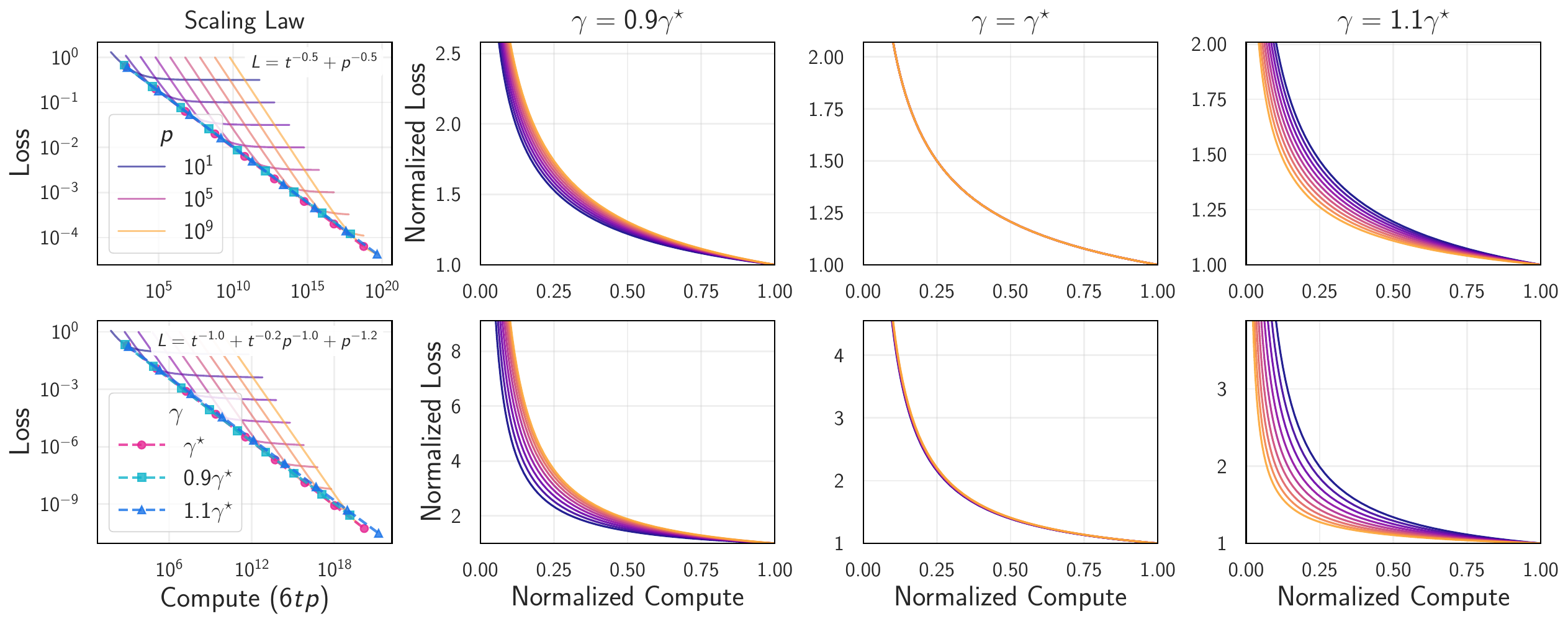}
    \subcaption{Exact Power Laws}
    \label{fig:exact-power-law}
\end{minipage}%
  \caption{\textbf{Scaling collapse from sum of power-law curves.} (\textbf{a}) CIFAR-5M expected loss curves (averaged over 5 seed) without learning rate decay agree well with the sum-of-power-laws fit $L(t,p) = L_0 + t^{-\mu} + p^{-\nu}$ (constant multipliers not shown), a form commonly observed in natural data. We omit the first 1B tokens to avoid fitting the early time transients. (\textbf{b}) Simulated exact sum-of-power-laws loss curves show scaling collapse precisely when the data exponent $\gamma$ is the theoretical compute-optimal value $\gamma^\star.$ Small variations of $\gamma$ around $\gamma^\star$ lead to nearly negligible worsening in the resulting scaling law but dramatically disrupt the collapse.}
  \vspace{-3mm}
\end{figure*}

\subsection{Suboptimal Scaling Breaks Supercollapse}

\label{sec:conditions}

Supercollapse provides a practical method for comparing inherently noisy training loss curves across model scales with precision that exceeds naive noise floor estimates, without the need for expensive multi-seed experiments typically required to obtain equally clean signals.
This comparison can provide valuable diagnostic information about scaling where the ability to distinguish small signal from noise is often crucial \citep{xiao2024rethinking}, which we now demonstrate.

\paragraph{Model Parameterization.} Carefully parameterizing the model, i.e., scaling the initialization, learning rate, and possibly other hyperparameters as model size increases, is crucial for achieving stable and efficient training at scale \citep{yang2021v, bordelon2023depthwise,bordelon2024infinite,everett2024scaling}. When models are trained in the wrong parameterization, we expect the loss curves not to collapse due to a lack of consistent training dynamics across scales. Using the MLP setup, we show that replacing \mup\ with a constant learning rate across widths breaks the collapse (\Cref{fig:breaks}, top row). Remarkably, the normalized loss curves expose inconsistent dynamics even at small scales where the final losses are virtually identical between constant and \mup\ scaling, demonstrating that the collapse is a more sensitive probe of scaling behavior than final performance alone.

\paragraph{Compute-Optimal Data Exponent.} For language models, \citet{kaplan2020scaling} showed that compute-optimal training corresponds to training each model to a fixed multiple of its converged loss. If this principle generalizes to our setting, the data exponent $\gamma$ should match the compute-optimal value for collapse to occur. For example, when $\gamma$ exceeds the optimal value, larger models will make more rapid relative initial progress but decelerate later as a function of normalized compute, causing their normalized curves to shift downward. We indeed find this shift in \Cref{fig:breaks} (bottom row). This sensitivity suggests a novel application: rather than fitting power laws to sparse points on the Pareto frontier, one could tune $\gamma$ to maximize collapse quality, leveraging the full statistical power of entire loss curves.

\section{Explaining Loss Curve Scaling Collapse}
\label{sec:understanding-supercollapse}
In this section, we investigate theoretical explanations for the scaling collapse of compute-optimal loss curves and supercollapse. Our analysis starts with
a simple observation: the numerator of the collapse deviation $\Delta(x)$ can be decomposed as:
\begin{equation}
\V_{p,\omega}[\ell(x,p, \omega)] = \V_p \E_\omega [\ell(x,p, \omega)]+\E_p \V_\omega [\ell(x,p, \omega)]. \\
\end{equation}
The first term corresponds to the variation between different scales $p$ after averaging over all sources of randomness. We will first show how this term can be small:
\begin{itemize}
    \item In \Cref{sec:collapse-from-power-law}, we prove that for a family of power-law neural scaling laws, compute-optimal loss curves indeed collapse after normalization. We show loss curves in our experiments fall into this family when using a constant learning rate schedule.
    \item In \Cref{sec:lr}, we develop a simple theoretical model that successfully predicts the empirical loss curves under various learning rate schedules and explains why they collapse despite deviating from power laws. Given its effectiveness, we believe this model has value for understanding learning rate schedules more broadly.
\end{itemize}
We then analyze the second term, which captures the loss variance due to random seeds, averaged across model sizes:
\begin{itemize}
    \item In \Cref{sec:super}, we show the same noise model enables us to reason about the noise in the loss curves, and quantitatively predict the variance reduction effect in supercollapse.
\end{itemize}
Together these findings provide an initial theoretical explanation for supercollapse, and uncover promising directions for future theoretical work.

\subsection{Scaling Collapse from Power-Law Scaling}\label{sec:collapse-from-power-law}
In this section, we consider deterministic models of the loss curves and assume all randomness has been averaged out.

\paragraph{Power-Law Pareto Frontier is Necessary.}\label{sec:power-law-pareto}
For a family of differentiable loss curves $L(t, p)$, the compute-optimal loss frontier after subtracting $\hat{L}$ must follow a power law for our affine transformation to induce scaling collapse (proof in Appendix \ref{app:proof_super_pareto}). The key insight is that collapse requires the transformed loss curves to be related by multiplicative scaling, equivalently translation in log-log space, where the frontier must have constant log-log slope since it remains tangent to shifted versions of the same curve. This motivates choosing $\hat{L} = L_0$, which by definition yields the best power-law Pareto frontier. However, a sufficient condition for scaling collapse requires an explicit form of $L(t,p)$.

\paragraph{Neural Scaling Laws.} Motivated by empirical neural scaling laws in natural data \citep{hestness2017deep,kaplan2020scaling,hoffmann2022training}, we consider expected loss curves following a sum-of-power-laws scaling of the form
\begin{align}
    L(t,p) = L_0 + t^{-\mu} + p^{-\nu} \label{eq:simple-power-law}
\end{align}
for constants $L_0 \geq 0, \mu, \nu > 0,$ with potential constant multipliers absorbed via an appropriate choice of units. In \Cref{fig:power-law-fit}, we show the CIFAR-5M loss curves are well-fit by \Cref{eq:simple-power-law} if trained under a constant learning rate schedule (averaged across 5 seeds). We also find decent fits in other datasets in \Cref{fig:more-fits}. 

\paragraph{Equivalence by Balance of Power Laws.} As before, let $t^\star(p)$ denote the training horizon. We will examine conditions under which $t^\star(p)$ (a) is compute-optimal, and (b) results in scaling collapse. We assume deterministic loss curves for now and omit the argument $\omega$. To find compute-optimal $t^\star(p)$, we fix $c$ so that $t(p) = c / (6p)$ and minimize the loss $\L(t(p),p) = t(p)^{-\mu} + p^{-\nu}$ with respect to $p$
by setting $0 = \dv{L}{p} = \pdv{\L}{t}\dv{t}{p} + \pdv{\L}{p} = -\mu t^{-\mu-1}(-t/p) - \nu p^{-\nu-1}$
\begin{align}
    \Longrightarrow  \mu t^{-\mu} = \nu p^{-\nu}
\end{align}
which yields $t^\star(p) = r^{-1/\mu} p^{\nu/\mu},$ with $r=\nu/\mu.$ Under this scaling, the normalized loss curves are:
\begin{align}
\ell(x,p) &= \frac{(xt^\star)^{-\mu} + p^{-\nu}}{(t^\star)^{-\mu} + p^{-\nu}} = \frac{rx^{-\mu}\cancel{p^{-\nu}} + \cancel{p^{-\nu}}}{r\cancel{p^{-\nu}} + \cancel{p^{-\nu}}}
= \frac{rx^{-\mu} + 1}{r+1}.
\end{align}
All $p$ dependencies cancel, leaving the final expression independent of $p$ and giving us an exact collapse. Moreover, it is clear that this is the unique choice for $t^\star(p)$ up to a constant multiplier that leads to such cancellation. This agreement is not an accident: compute-optimal scaling requires balancing the derivatives of two power laws, while collapse requires balancing the power laws themselves. For power laws, these two conditions coincide, up to a multiplicative constant.

In \Cref{fig:exact-power-law}, we numerically verify the agreement between collapse and compute-optimal scaling. When the data exponent $\gamma$ deviates from the optimal value $\nu/\mu$, we observe a suboptimal scaling law and no collapse.
Note that the absence of an irreducible term in $\ell$ is also necessary. Had we set $\hat{L} = L_0 + E$ for some $E \neq 0$ in \Cref{eq:normalized-curve-def}, we would instead have $\ell(x,p) = \frac{(xt^\star)^{-\mu} + p^{-\nu} + E}{(t^\star)^{-\mu} + p^{-\nu} + E},$ where no $t^\star(p)$ can leave the numerator and denominator homogeneous in $p$.

In \Cref{app:plrf-collapse}, we study the more general form
\begin{align}
    L(t,p) = L_0 + \sum_{i=1}^{m} a_i t^{-\mu_i}p^{-\nu_i},
\end{align}
which naturally arises in theoretical models of neural scaling laws \citep{paquette20244+,bordelon2024dynamical,bordelon2024feature}, and show that compute-optimality implies scaling collapse by balancing the two dominant terms, though with $m > 2$ the collapse is only exact asymptotically. 

Together with the close empirical fit in \Cref{fig:power-law-fit}, this analysis explains scaling collapse in the constant learning rate setting; however \Cref{eq:simple-power-law} fails to fit the empirical loss curves with most learning rate schedules, as varying the learning rate can modulate the loss curve in quite arbitrary ways, clearly shown in \Cref{fig:c5m-schedules}. Why, then, does the collapse transfer to other schedules?

\subsection{Universality of Learning Rate Schedules}
\label{sec:lr}
\begin{figure*}[t!]
\centering
\begin{minipage}[b]{0.32\linewidth}
    \centering
    \includegraphics[width=\linewidth]{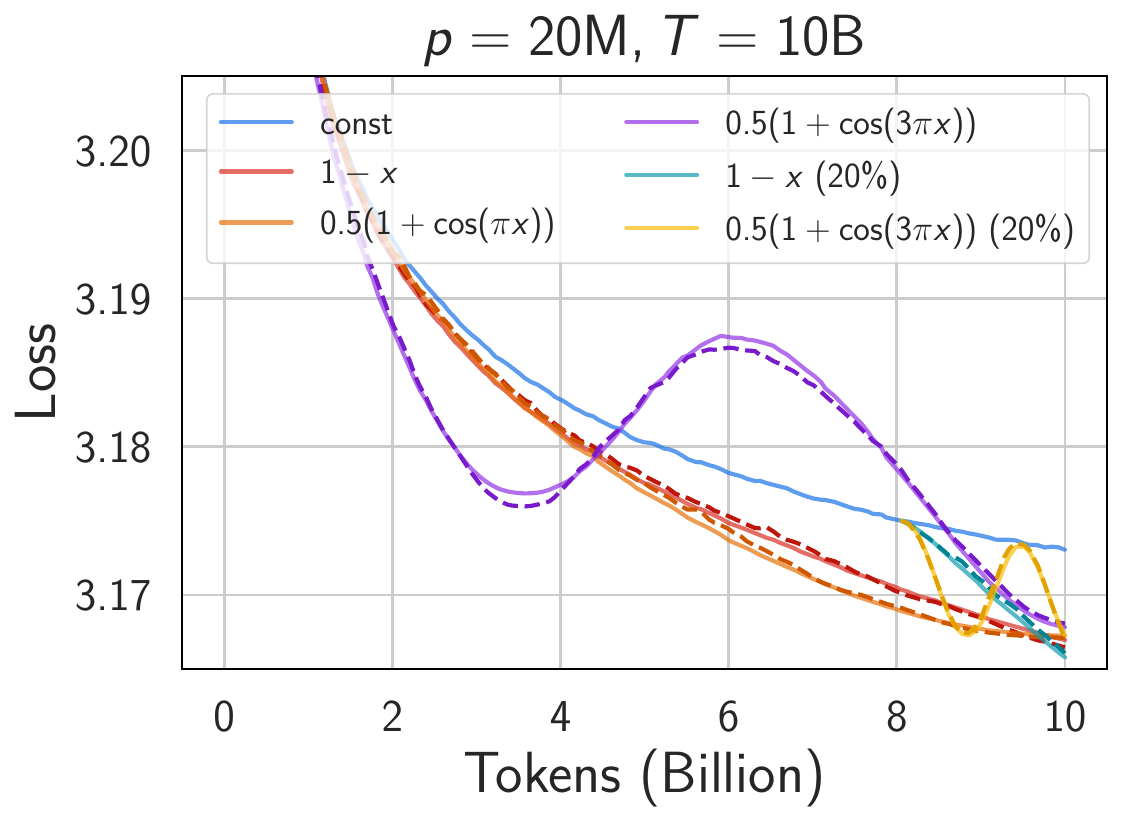}
    \subcaption{Vary Schedules}
    \label{fig:mlp-schedule-naive-fit}
\end{minipage} 
\hfill
\begin{minipage}[b]{0.32\linewidth}
    \centering
    \includegraphics[width=\linewidth]{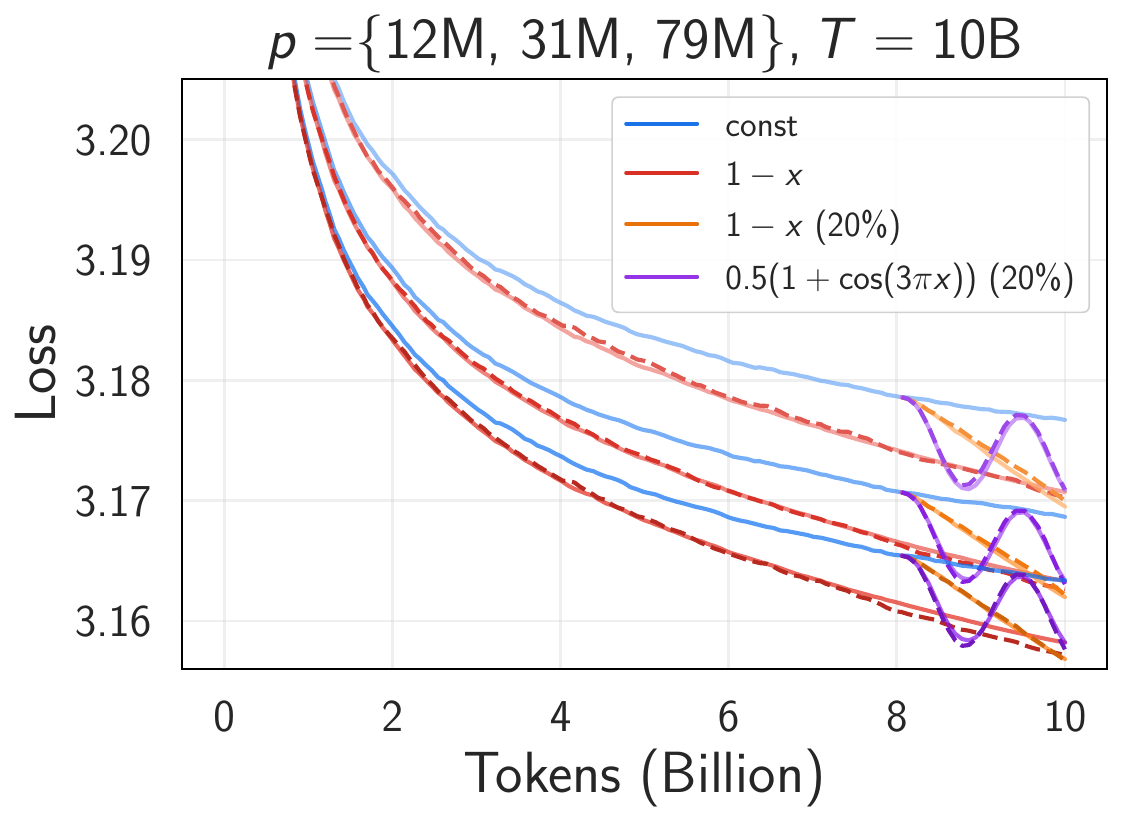}
    \subcaption{Scale Model Size}
    \label{fig:mlp-schedule-naive-fit}
\end{minipage} 
\hfill
\begin{minipage}[b]{0.32\linewidth}
    \centering
    \includegraphics[width=\linewidth]{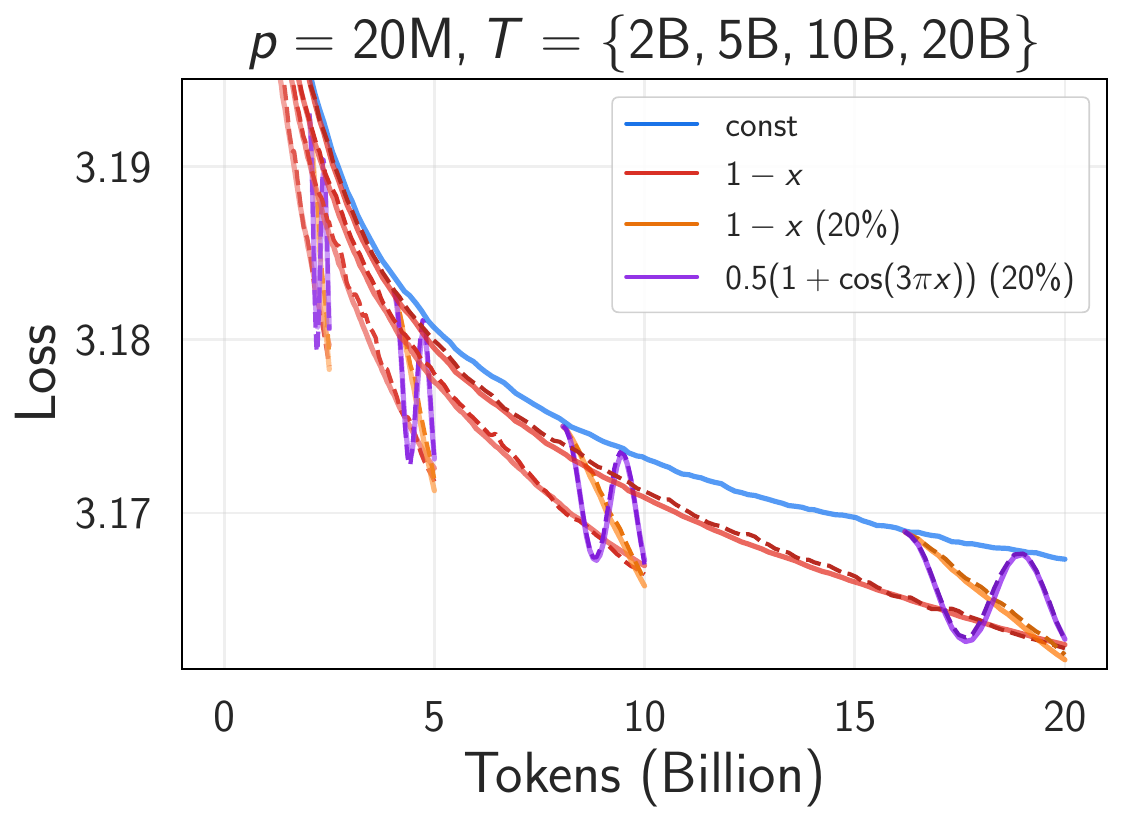}
    \subcaption{Scale Training Horizon}
    \label{fig:mlp-schedule-naive-fit}
\end{minipage}
\caption{\textbf{A simple model predicts Transformer loss curves trained across learning rate schedules, model sizes $p$, and training horizons $T$ on CIFAR-5M.} Dashed curves show the predicted loss according to \Cref{eq:schedule-final}, with $\alpha=0.21$, which closely match with the true curves in solid. Each curve is smoothed with an exponential moving average with half-life equal to $1\%$ of total steps. 
}
\vspace{-3mm}
\label{fig:predict_schedules}
\end{figure*}

To understand why scaling collapse is robust across learning rate schedules, we develop a quantitative model for how learning rate schedules affect the loss curves. While an exact theoretical model seems out of reach for the realistic training setup, we show that a simple model based on quadratic loss analysis proves surprisingly effective. Under this model, we demonstrate that although learning rate schedules deform the loss curves in a schedule-dependent way, the deformation is approximately independent of $p$. We consider stochastic effects that depend on the random seed $\omega$, but omit $\omega$ as an explicit argument for brevity and use bar to denote expectation over $\omega$.

\subsubsection{A Simple Model for LR Schedules} \label{sec:simple-model}
Let $w(t)$ and $L(w(t))$ denote the parameters and loss at step $t$, we can model the dynamics of full-batch gradient descent under a small learning rate $\eta(t)$ with a gradient flow $\frac{dw}{dt} = - \eta(t) \nabla L(w(t)).$ To model stochastic effects, a noise term is added to the gradient, leading to the SDE $\frac{dw}{dt} = - \eta(t) \qty(\nabla L(w) + \Sigma^{1/2}(w)\xi(t))$  \citep{li2017stochastic, malladi2022sdes}, where the \emph{mini-batch} gradient noise $\Sigma^{1/2}(w)\xi(t)$ satisfies $\E[\xi(t)\xi(t')] = \delta(t - t') I,$ and we allow its covariance (which depends on batch size) $\Sigma(w)$ to be a function of the parameters. Prior works have used the SDE model or discrete variants to study learning rate schedules in analytically tractable problems \citep{zhang2019algorithmic,d2022optimal,wen2024understanding}, but we will show it can make surprisingly accurate predictions in real models. We work in \emph{gradient flow time} $\tau(t) = \int_0^t \eta(s) ds,$ where
\begin{align}
    \frac{dw}{d\tau} = - \qty(\nabla L(w) +  \Sigma^{1/2}(w)\xi(\tau)),
\end{align}
and $\E[\xi(\tau)\xi(\tau')] = \delta(t - t') I = \eta(\tau) \delta(\tau - \tau') I.$ We overload the notation and use $\eta(\tau)$, $w(\tau)$, and $L(\tau)$ to denote the evolution of these quantities in gradient flow time.

\paragraph{Quadratic Loss.} For the moment, let us suppose the loss function is quadratic $L(w) = \frac{1}{2}w^\top H w,$ where we assume the minimum is at the origin without loss of generality. Then $\nabla L(w) = H w$ and standard calculation shows
\begin{align}
    \label{eq:quadratic-loss-terms}
    w(\tau) = e^{-H\tau} w(0) - \int_0^\tau ds\, e^{-H(\tau -s)}  \Sigma^{1/2}(w(s)) \xi(s).
\end{align}
Letting $\bar{\Sigma}(s) = \E[\Sigma(w(s))],$ the expected loss is then
\begin{align}
    \bar{L}(\tau) &= \underbrace{\frac{1}{2} \E\qty[\norm{e^{-H\tau}w(0)}^2_{H}]}_{\F(\tau)} \nonumber \\ 
    &\quad + \underbrace{\frac{1}{2}\int_0^\tau ds\, \eta(s) \Tr(H e^{-2H(\tau-s)} \bar{\Sigma}(s))}_{\noise(\tau)}. \label{eq:original_noise}
\end{align}
The first term $\F(\tau)$ is the forcing function, equal to the expected loss curve in the noiseless limit $\eta\Sigma \to 0$ and is independent of the learning rate schedule. The second term $\noise(\tau)$ is the excess loss due to SGD noise, which is a sum of exponential moving averages (up to normalization) of the gradient variance scaled by the learning rate over each eigenmode. Substituting in the specific forms for $\Sigma$ recovers the convolutional Volterra equation for linear regression analyzed in \citet{paquette2021sgd,paquette2024homogenization}, or the noisy quadratic model in \citet{zhang2019algorithmic} for small learning rates. 

If $\eta\bar{\Sigma}$ varies slowly compared to the timescale of the exponential moving average, we can make the approximation $\eta(s)\bar{\Sigma}(s) \approx \eta(\tau)\bar{\Sigma}(\tau)$ inside the integrand, giving us:
\begin{align}
\noise(\tau) &\approx \frac{1}{2}\eta(\tau)\Tr(\bar{\Sigma}(\tau) H \int_0^\tau ds\, e^{-2H(\tau-s)}) \\
&=\frac{1}{4}\eta(\tau) \Tr(\bar{\Sigma}(\tau) \qty(1 - e^{-2H\tau})).
\end{align}
For large $\tau$ the expected loss is then approximately
\begin{align}
    \bar{L}(\tau) \approx \F(\tau) + \frac{1}{4}\eta(\tau)\Tr(\bar{\Sigma}(\tau)). \label{eq:loss_decoupled}
\end{align}
Given access to $\Tr(\bar{\Sigma}(\tau)),$ we can derive a prediction for how the loss changes as we change the learning rate schedule without knowing $\F.$

\paragraph{General Case.}
In \Cref{app:perturb}, we discuss how this analysis can be generalized to more realistic setups. For general loss functions, we show via perturbation theory that, to first order in $\eta\bar{\Sigma},$ one can make similar approximations to derive \Cref{eq:loss_decoupled} given an additional assumption that the Hessian is slowly varying, and with the forcing function $\F(\tau)$ no longer admitting a quadratic form. We also show in \Cref{app:perturb} that $\Sigma$ should be the \emph{preconditioned} gradient covariance when using adaptive optimizers. We absorb the layerwise, width-dependent learning rates in \mup\ into the preconditioner, similar to \citet{noci2024super}, so $\eta(t) \in [0,1]$ reflects only the schedule.

\subsubsection{Predicting Loss Curves Across Schedules} \label{sec:predict-loss-curves}
We apply this simple model to predict empirical loss curves in the CIFAR-5M experiments. We measure the trace of the preconditioned gradient covariance on a fixed set of 2M tokens (see \Cref{app:experiment} for experiment details).

Let $\bar{L}, \eta, \bar{\Sigma}$ be a given reference trajectory and $\bar{L}' = \bar{L} + \delta\bar{L}, \eta' = \eta + \delta\eta, \bar{\Sigma}' = \bar{\Sigma} + \delta \bar{\Sigma}$ be the target trajectory, \Cref{eq:loss_decoupled} allows us to predict the target loss via
\begin{align}
    \delta\bar{L}(\tau) \approx \frac{1}{4}\Tr[\delta\qty(\eta(\tau) \bar{\Sigma}(\tau))], \label{eq:loss_diff}
\end{align}
where $\delta\qty(\eta(\tau) \bar{\Sigma}(\tau)) \coloneqq \eta'(\tau)\bar{\Sigma}'(\tau) - \eta(\tau)\bar{\Sigma}(\tau).$ We use a constant learning rate for the reference trajectories and various schedules sharing the same peak learning rate for the target. Decomposing $\delta(\eta\Sigma) = \delta\eta\Sigma' + \eta\delta\Sigma$, we find the first term is typically 3 to 10 times larger than the second as the learning rate decays, which can be attributed to how learning rate interacts with curvature (\Cref{fig:compare_terms}). In \Cref{fig:predict_schedules}, we only keep the first term, and predict the target loss as
\begin{align}
    L'(\tau) \approx L(\tau) + \alpha\, \delta\eta(\tau)\Tr(\Sigma'(\tau)), \label{eq:schedule-prediction}
\end{align}
where $\alpha$ is a shared hyperparameter. We find a single $\alpha=0.21$ fits the target loss curves surprisingly well across schedules, model sizes, and training horizons. In \Cref{app:schedules}, we show even better fits for MLPs, though puzzlingly, including the second term can lead to worse fits.

Recent works proposed more complex functional forms for how learning rate schedules affect loss curves, derived primarily from empirical observations \citep{tissue2024scaling,luo2025multi}. The accuracy of our simple model suggests it captures the essential dynamics, and crucially, the correct scaling of the excess loss through $\Tr(\Sigma')$ so that a single $\alpha$ is predictive across model sizes, schedules, and training horizons. Notably, \citet{luo2025multi} experimented with a similar form to \Cref{eq:schedule-prediction} but with a constant $\Sigma'$, which likely explains the reduced effectiveness they observed.

\begin{figure}[t!]
\centering
\begin{minipage}[b]{0.95\linewidth}
    \centering
    \includegraphics[width=\linewidth]{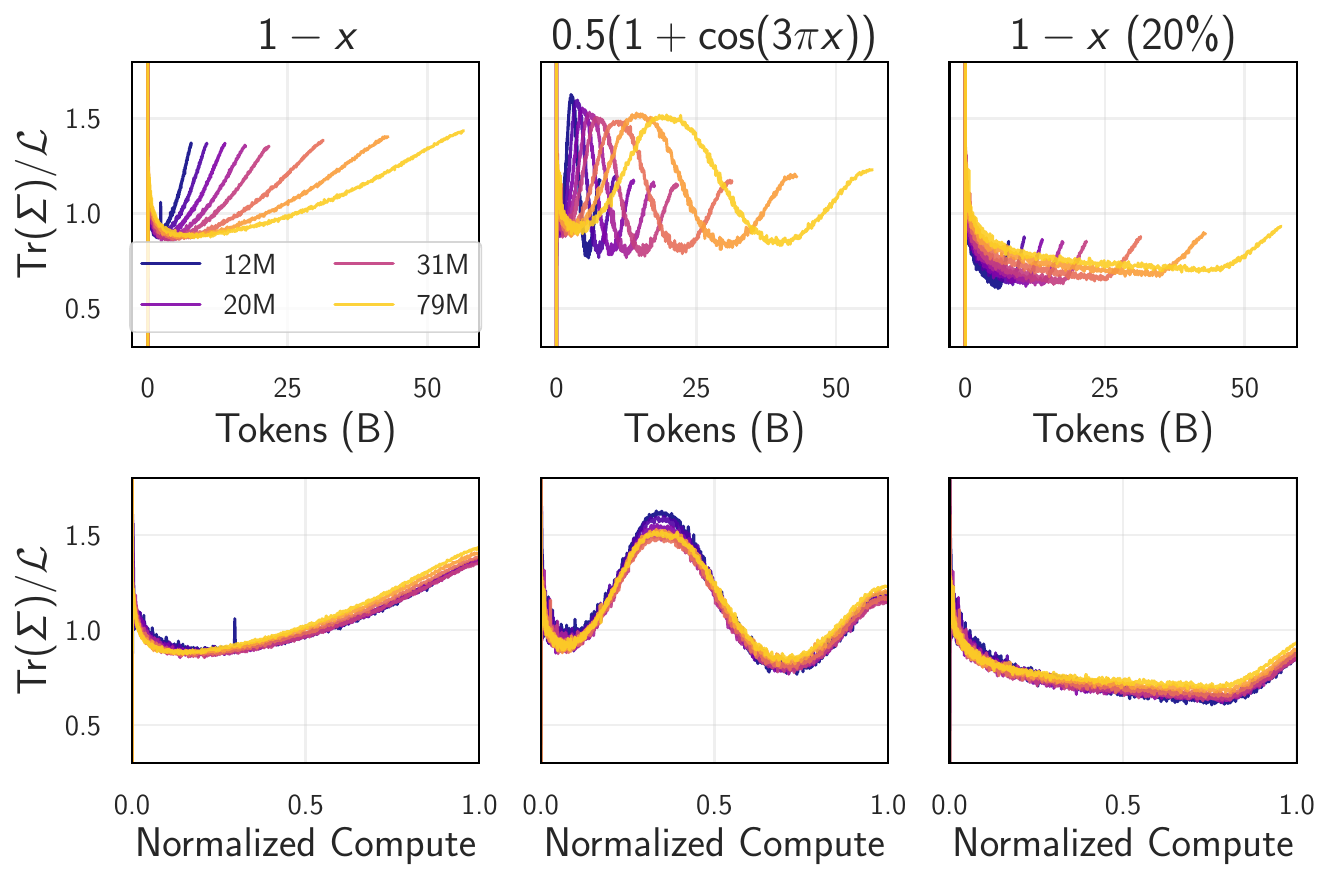}
\end{minipage}
\caption{\textbf{Universality of gradient noise on CIFAR-5M.} Fixing a learning rate schedule, the ratio $\Tr(\Sigma) / \L$ is approximately a function of normalized compute alone, independent of model size. We show similar results with MLP regression in \Cref{fig:universality-of-noise-mlp}.
}
\label{fig:universality-of-noise}
\end{figure}

\subsubsection{Universal Scaling of Gradient Noise}
For typical loss functions, the gradient covariance can be related to the loss itself. For example, in noiseless high-dimensional linear regression with Gaussian features drawn from $\N(0, K)$, we have $\Tr(\Sigma) \approx 2\L \Tr(K)$ \citep{paquette2021sgd}, an intuitive result since the gradient scales with both the prediction error and the input. For non-linear regression, $K$ should be taken to be the time-varying Gauss-Newton matrix for a first approximation. In this case, $\Tr(K)$ is known to depend strongly with the learning rate \citep{agarwala2024seos}, but we expect weak dependence on model size given our models are trained with \mup\ (see \citet{noci2024super} for evidence that curvature statistics depend weakly on model size in \mup). Since the schedule is a function of the normalized compute $x=t/t^\star$ alone, we hypothesize there exists a schedule-dependent function $h(x)$ such that 
\begin{align}
\Tr(\Sigma(xt^\star(p))) / \L(xt^\star(p)) \approx h(x),    \label{eq:noise-scaling}
\end{align}
which we verify in the regression (\Cref{fig:universality-of-noise-mlp}) and next-token prediction experiments (\Cref{fig:universality-of-noise}). 

Combining \Cref{eq:schedule-prediction} and \Cref{eq:noise-scaling} and making $p$-dependence explicit:
\begin{align}
    \bar{\L}'(\tau,p) \approx \bar{\L}(\tau,p) (1 - \alpha h(x) \delta\eta(\tau,p))^{-1}, \label{eq:schedule-final}
\end{align}
where $x$ is the normalized compute at gradient flow time $\tau.$  We leave to future work an explanation of why this relation appears to hold for cross-entropy loss despite the presence of non-negligible irreducible loss, as this setting is analogous to regression with label noise, where the gradient covariance should scale with the total loss rather than just the reducible component, i.e., $\Tr(\Sigma) \approx 2L \Tr(\Theta)$.

\paragraph{Scaling Collapse Across Schedules.}
Combining our insights so far, we can now understand why collapse happens across schedules. Let $\bar\ell(x,p)$ and $\bar\ell'(x,p)$ be the expected normalized loss curves under two schedules $S$ and $S'.$ Let $y(x)$ map the normalized compute under $S'$ to the normalized compute under $S$ at matching gradient flow time, where $y$ is independent of $p$ for schedules defined in terms of the normalized compute. Let $\delta\hat{\eta}(x) = \delta\eta(x t^\star(p),p)$ be the difference between the two schedules.
Assuming small relative fluctuations ($\E[\L(x)/\L(y)] \approx \E[\L(x)]/\E[\L(y)]$), we have:
\begin{align}
    \bar\ell'(x,p) &\approx \frac{\bar{\L}'(xt^\star(p), p)}{\bar{\L}'(t^\star(p), p)} \\
    &= \frac{\bar{\L}(y(x)t^\star(p), p)\qty(1 - \alpha h(x)\delta\hat{\eta}(x))^{-1}}{\bar{\L}(y(1)t^\star(p), p)\qty(1 - \alpha h(1)\delta\hat{\eta}(1))^{-1}} \\
    &= \bar\ell(y(x),p) \underbrace{\frac{1 - \alpha h(1) \delta\hat{\eta}(1)}{1 - \alpha h(x) \delta\hat{\eta}(x)}}_{\text{independent of } p}, \label{eq:collapse-transfer}
\end{align}
which shows that, in expectation, collapse under one schedule (e.g. constant) implies collapse under any other schedule, provided we take \Cref{eq:schedule-final} to be exact. Since collapse under a constant learning rate can be attributed to the sum-of-power-laws scaling law, this result helps explain why we also observe collapse in other schedules.

This analysis also suggests that collapse can serve as a filter for identifying interventions that yield scalable improvements: those that multiplicatively shift the reducible loss curve by the same factor across all model sizes.

\begin{figure*}[t!]
\centering
\begin{minipage}[b]{0.31\linewidth}
    \centering
    \includegraphics[width=\textwidth]{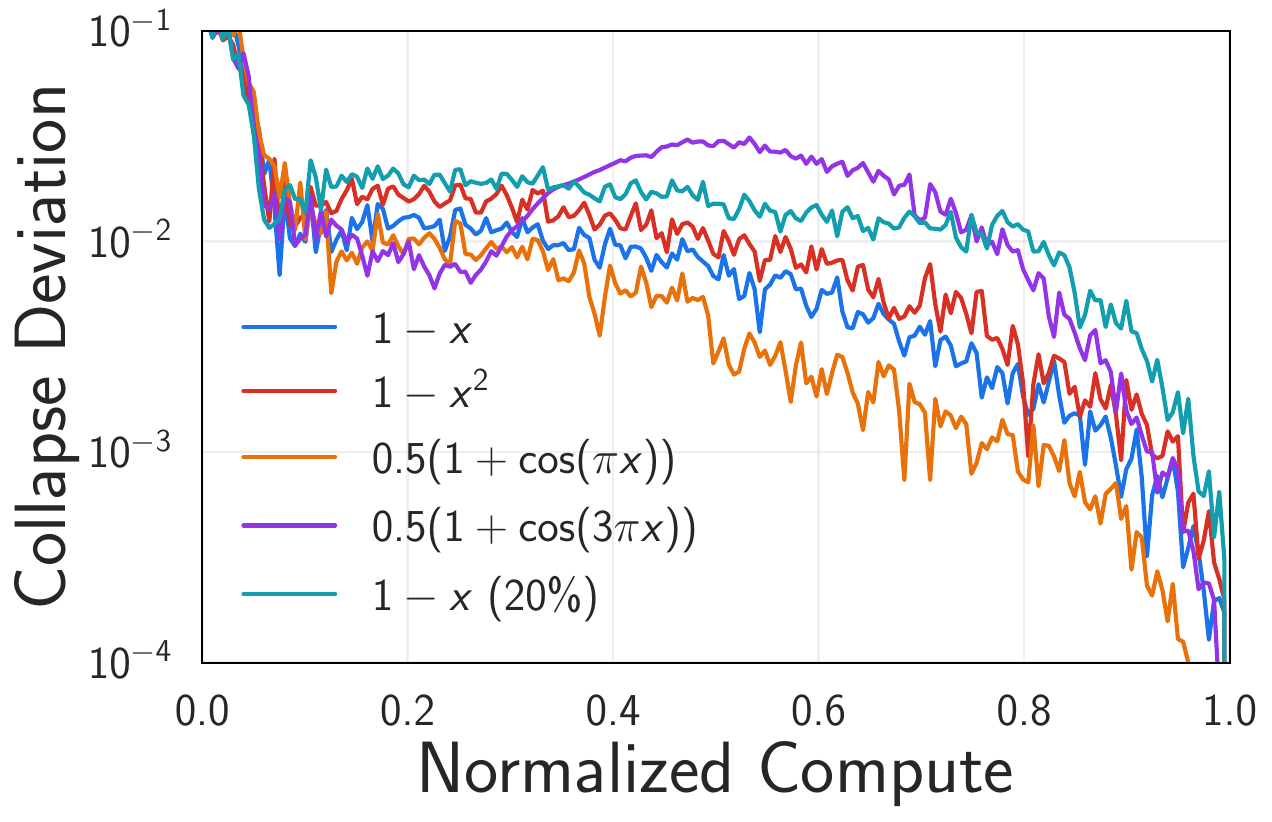}
\end{minipage}%
\hfill
\begin{minipage}[b]{0.31\linewidth}
    \centering
    \includegraphics[width=\textwidth]{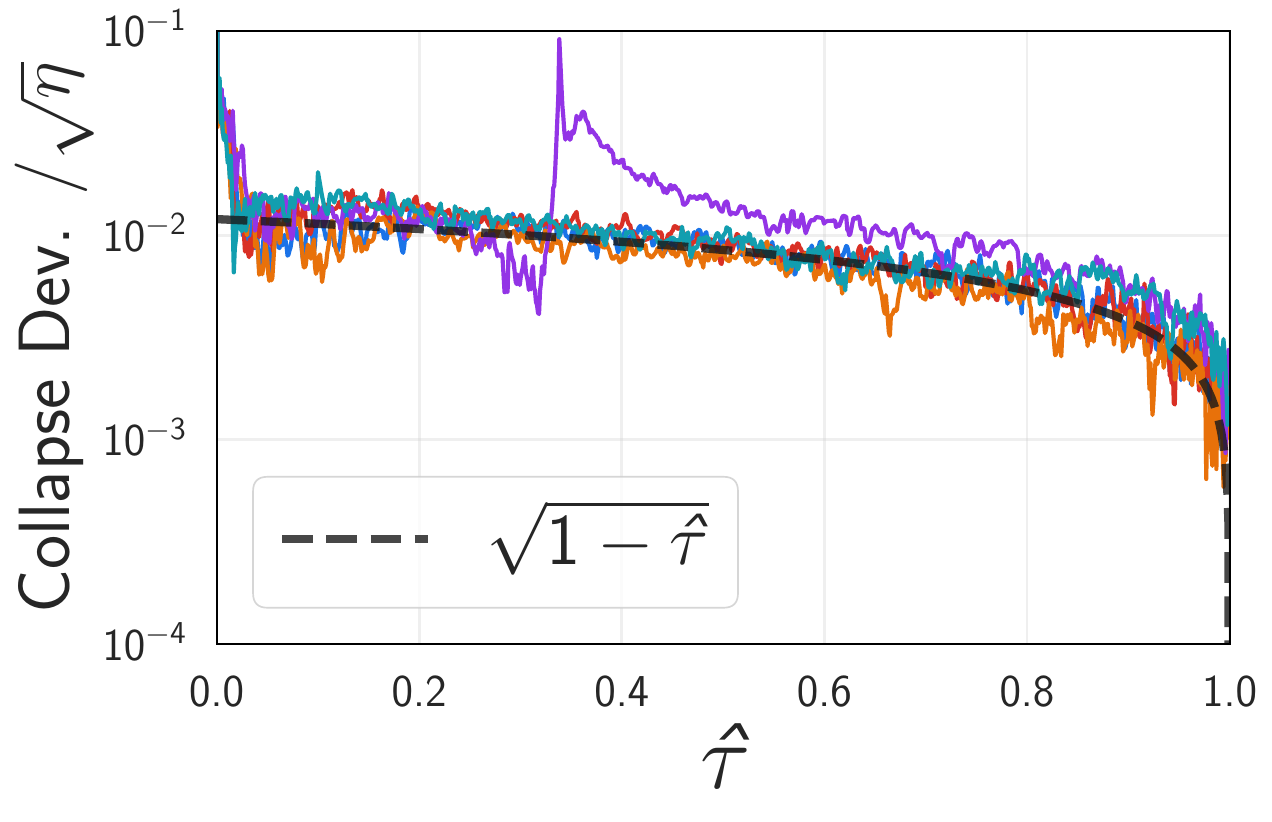}
\end{minipage}%
\hfill
\begin{minipage}[b]{0.31\linewidth}
    \centering
    \includegraphics[width=\textwidth]{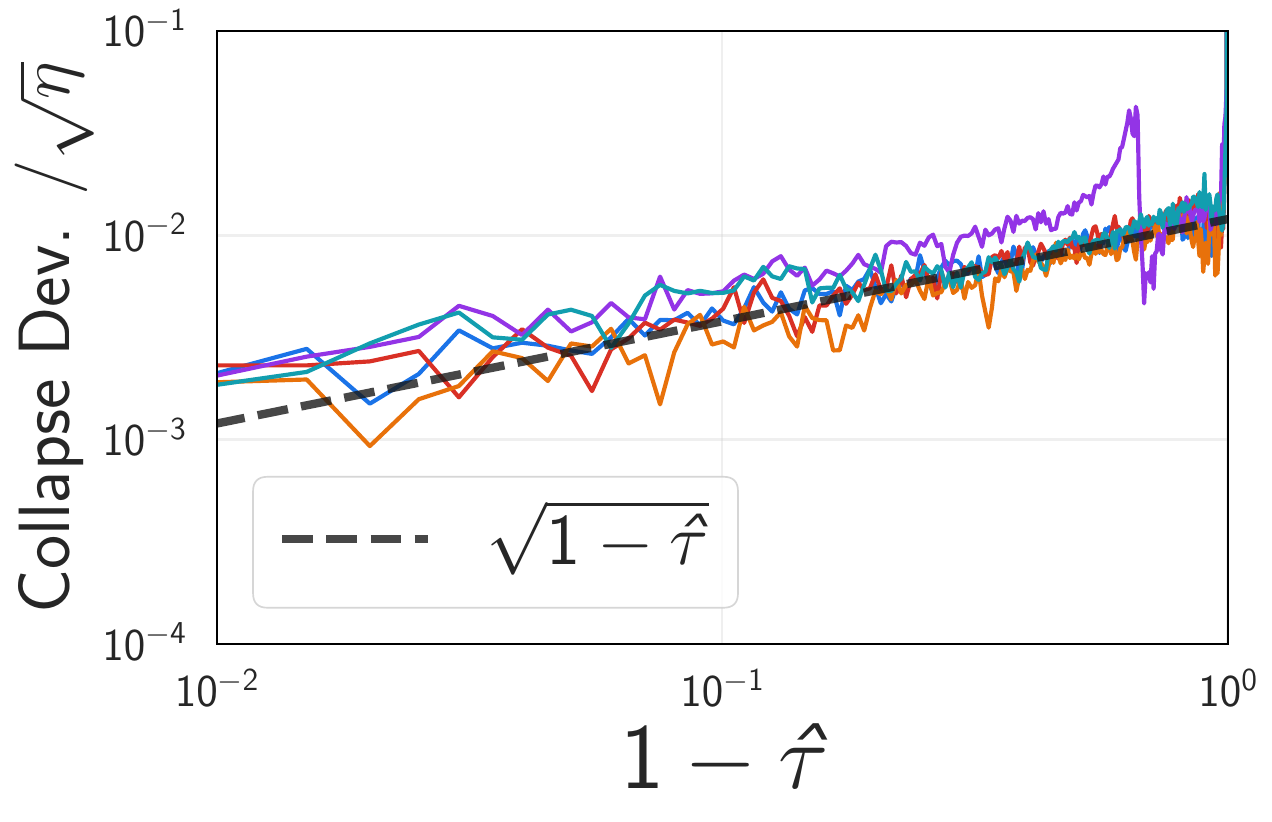}
\end{minipage}
\caption{\textbf{A quantitative explanation of how learning rate decay leads to supercollapse.} Across schedules on CIFAR-5M, collapse deviation at normalized gradient flow time $\hat{\tau}$ follows the predicted $\sqrt{\eta\,(1-\hat{\tau})}$ scaling, capturing the noise accumulated between that point and end of training. A schedule that decays faster has a smaller $\eta$ and $(1-\hat{\tau})$ at a fixed normalized compute or training step.
}
\label{fig:collapse_tol}
\end{figure*}

\subsection{Supercollapse as Variance Reduction}
\label{sec:super}
Lastly, we turn to understanding the ``super'' in supercollapse: why does learning rate decay significantly improve the collapse, to the extent that the collapse deviation $\Delta(x)$ drops below the per-model noise floor $\sigma(x,p)$ for a substantial fraction of training? Again, the simple quadratic model provides quantitative insights into this phenomenon.

Recall $\Delta(x) = \frac{\V_{p,\omega} [\ell(x,p,\omega)]^{1/2}}{\E_{p,\omega} [\ell(x,p,\omega)]},$ and the decomposition:
\begin{equation}
\V_{p,\omega}[\ell(x,p)] = \E_p \V_\omega [\ell(x,p)] + \V_p \E_\omega [\ell(x,p)]. \\
\end{equation}
The first term measures variance due to the seed alone, averaged over model sizes, while the second term measures variance due to varying the model size, having averaged over the seeds first. Since we observed that variations in the normalized curves primarily arise from seed-to-seed fluctuations rather than model-to-model differences (\Cref{sec:quantify-collapse}) under a constant schedule, and switching to other schedules does not significantly increase the model-to-model differences (\Cref{sec:lr}), we will assume the first term $\E_p \V_\omega [\ell(x,p)]$ dominates, which implies $\Delta^2(x) \approx \E_p \tilde{\Delta}^2(x,p),$ where $\tilde{\Delta}^2(x,p) \coloneqq \V_\omega [\ell(x,p)] / \bar\ell^2(x,p)$ is the squared \emph{per-model} collapse deviation.

To simplify notation, we temporarily omit $p$-dependence and write $\ell$ in terms of $t$ instead of $x$. Letting $\L(t) = \bar{\L}(t) (1 + \psi(t))$, where $\psi$ is the relative fluctuation, we have $\ell(t) = \frac{\bar{\L}(t)(1+\psi(t))}{\bar{\L}(t^\star)(1 + \psi(t^\star))} \approx \bar{\ell}(t) \qty(1 + \psi(t) - \psi(t^\star)),$ assuming $\psi \ll 1.$ Therefore,
\begin{align}
\tilde{\Delta}^2(t) &\approx \E\qty[(\psi(t) - \psi(t^\star))^2]
\end{align}
We see that what controls the relative variance in $\ell(t)$ is not $\psi(t)$ but the difference $\psi(t) - \psi(t^\star),$ which roughly captures the amount of optimization noise accumulated \emph{between} time $t$ and time $t^\star.$ Since the optimization noise per step scales with the instantaneous learning rate, decaying the learning rate over time will precisely serve to decrease the variance in $\ell.$ By contrast, the squared per-model noise floor $\sigma^2(t)$ is simply $\E[\psi^2(t)],$ which captures the total cumulative optimization noise. Importantly, had we normalized by the expected rather than the empirical final loss in $\ell,$ $\tilde{\Delta}(t)$ would reduce to $\sigma(t)$. Normalizing by the stochastic final loss is essential for supercollapse, where it acts as a control-variate \citep{glasserman2004monte}, leveraging the strong time-correlation of stochastic fluctuations along the optimization trajectory to cancel much of the shared noise and thereby sharply reduce the variance of the collapsed curve.

Quantitatively, we can estimate $\tilde{\Delta}$ under the quadratic model in \Cref{sec:lr}. Let $\Delta w(\tau)$ and $\Delta \L(\tau)$ be the fluctuations of the parameters and loss from their means. We have $\Delta w(\tau) = \int_0^\tau ds\, e^{-H(\tau-s)} \Sigma^{1/2}(s) \xi(s),$ and $\Delta\L(\tau) = g(\tau)^\top\Delta w(\tau)$ to first order in $\Delta w(\tau),$ where $g(\tau)$ is the expected gradient. Close to the end of training, for $\tau = \tau^\star - \delta\tau$ where $\tau^\star$ is the final gradient flow time and $\delta\tau > 0$ is small, direct calculation shows (\Cref{app:psi-diff})
\begin{align}
    \tilde{\Delta}^2(\tau) = \bar{\L}^{-2}(\tau) g(\tau)^\top \eta(\tau)\bar{\Sigma}(\tau) g(\tau)\delta\tau + O(\delta\tau^2),
\end{align}
In linear regression, $\Sigma \propto \L$ and $g^\top g \propto \L$ , so we estimate $\tilde{\Delta}^2(\tau) \propto \eta(\tau) \delta\tau$ to leading order. Since this relation holds for each model size $p$, we predict $\Delta^2(\hat{\tau}) \approx \E_p \tilde{\Delta}^2(\hat{\tau}) \propto \eta(\hat{\tau})(1 - \hat{\tau}),$ where $\hat{\tau} = \tau / \tau^\star$ denotes the normalized gradient flow time. Figure \ref{fig:collapse_tol} shows this form fits the observations well, with $\Delta(\hat{\tau})/\sqrt{\eta(\hat{\tau})}$ approximately following the same $\sqrt{1-\hat{\tau}}$ scaling across many schedules, quantitatively explaining how learning rate decay leads to supercollapse.

\section{Discussion}

Scale has enabled remarkable progress in machine learning, but a thorough scientific understanding of scaling remains elusive.
Key open questions include identifying robust principles that guide general hyperparameter transfer and characterizing scaling limits under realistic scaling ladders. Our discovery of supercollapse provides empirical evidence that a model-size and data joint scaling limit \emph{generically} exists in the compute-optimal regime, and that the scale-invariance of the training dynamics revealed by the collapse can diagnose proper hyperparameter configuration.  We believe further investigation of these phenomena holds great potential for advancing the science of scaling.

We see many exciting extensions to this work. Empirically, our small-scale experiments provide a proof-of-concept. While small-scale proxies capture certain behaviors in larger systems \citep{wortsman2023small}, validating at larger scales and with practical scaling ladders, where width, depth, batch size, and weight decay are co-scaled \citep{mccandlish2018empirical,wang2024set,dey2025don,bergsma2025power}, is important and may yield new insights into optimal scaling and hyperparameter transfer. Scaling collapse beyond the form we studied here can be a general tool to study other scaling relations \citep{tamai2023universal}.

While we have identified the key ingredients underlying supercollapse—power-law scaling and learning rate-dependent noise dynamics—our analysis relies on multiple approximations and takes power-law scaling as given, suggesting deeper theoretical principles may be at work. Taking collapse 
as a starting point instead may provide an alternative route to understanding scaling laws,
analogous to how in physics the renormalization group was developed to provide a unified set of principles explaining both
universality and its associated power laws
\citep{wilson1971renormalization}. Finally, it would be interesting to understand why our simple noise model predicts 
the impact of learning rate schedules on real models so effectively, compare it with alternative models such as \citet{schaipp2025surprising}, and to test its predictive power for optimizing schedules, learning rates, and training horizons.

\clearpage
\section*{Acknowledgements}

We thank Courtney Paquette and Zixi Chen for helpful comments on an earlier version of this paper. SQ was supported by Google's TPU Research Cloud (TRC) program: \url{https://sites.research.google/trc/}.

\section*{Contribution Statement}

SQ designed and conducted the majority of experiments, led the theory development, and wrote the paper. LX initially observed supercollapse, contributed to theory, experimental design, and writing the paper. AGW advised SQ and
edited the paper. JP contributed to theory, experimental design, and writing the paper. AA managed the research project, proved some theorems, guided the theory development, contributed to experimental design, and helped write the paper.

\section*{Impact Statement}

This paper presents work whose goal is to advance the field of 
Machine Learning. There are many potential societal consequences 
of our work, none which we feel must be specifically highlighted here.

\nocite{langley00}

\bibliography{ref.bib}

\begin{thebibliography}{56}
\providecommand{\natexlab}[1]{#1}
\providecommand{\url}[1]{\texttt{#1}}
\expandafter\ifx\csname urlstyle\endcsname\relax
  \providecommand{\doi}[1]{doi: #1}\else
  \providecommand{\doi}{doi: \begingroup \urlstyle{rm}\Url}\fi

\bibitem[Achiam et~al.(2023)Achiam, Adler, Agarwal, Ahmad, Akkaya, Aleman, Almeida, Altenschmidt, Altman, Anadkat, et~al.]{achiam2023gpt}
Achiam, J., Adler, S., Agarwal, S., Ahmad, L., Akkaya, I., Aleman, F.~L., Almeida, D., Altenschmidt, J., Altman, S., Anadkat, S., et~al.
\newblock Gpt-4 technical report.
\newblock \emph{arXiv preprint arXiv:2303.08774}, 2023.

\bibitem[Agarwala \& Fisher(2019)Agarwala and Fisher]{agarwala2019adaptive}
Agarwala, A. and Fisher, D.~S.
\newblock Adaptive walks on high-dimensional fitness landscapes and seascapes with distance-dependent statistics.
\newblock \emph{Theoretical population biology}, 130:\penalty0 13--49, 2019.

\bibitem[Agarwala \& Pennington(2024)Agarwala and Pennington]{agarwala2024seos}
Agarwala, A. and Pennington, J.
\newblock High dimensional analysis reveals conservative sharpening and a stochastic edge of stability, 2024.
\newblock URL \url{https://arxiv.org/abs/2404.19261}.

\bibitem[Bergsma et~al.(2025)Bergsma, Dey, Gosal, Gray, Soboleva, and Hestness]{bergsma2025power}
Bergsma, S., Dey, N., Gosal, G., Gray, G., Soboleva, D., and Hestness, J.
\newblock Power lines: Scaling laws for weight decay and batch size in llm pre-training.
\newblock \emph{arXiv preprint arXiv:2505.13738}, 2025.

\bibitem[Binder(1981)]{binder1981finite}
Binder, K.
\newblock Finite size scaling analysis of ising model block distribution functions.
\newblock \emph{Zeitschrift f{\"u}r Physik B Condensed Matter}, 43:\penalty0 119--140, 1981.

\bibitem[Bordelon \& Pehlevan(2022)Bordelon and Pehlevan]{bordelon2022self}
Bordelon, B. and Pehlevan, C.
\newblock Self-consistent dynamical field theory of kernel evolution in wide neural networks.
\newblock \emph{Advances in Neural Information Processing Systems}, 35:\penalty0 32240--32256, 2022.

\bibitem[Bordelon et~al.(2023)Bordelon, Noci, Li, Hanin, and Pehlevan]{bordelon2023depthwise}
Bordelon, B., Noci, L., Li, M.~B., Hanin, B., and Pehlevan, C.
\newblock Depthwise hyperparameter transfer in residual networks: Dynamics and scaling limit.
\newblock \emph{arXiv preprint arXiv:2309.16620}, 2023.

\bibitem[Bordelon et~al.(2024{\natexlab{a}})Bordelon, Atanasov, and Pehlevan]{bordelon2024dynamical}
Bordelon, B., Atanasov, A., and Pehlevan, C.
\newblock A dynamical model of neural scaling laws.
\newblock \emph{arXiv preprint arXiv:2402.01092}, 2024{\natexlab{a}}.

\bibitem[Bordelon et~al.(2024{\natexlab{b}})Bordelon, Atanasov, and Pehlevan]{bordelon2024feature}
Bordelon, B., Atanasov, A., and Pehlevan, C.
\newblock How feature learning can improve neural scaling laws.
\newblock \emph{arXiv preprint arXiv:2409.17858}, 2024{\natexlab{b}}.

\bibitem[Bordelon et~al.(2024{\natexlab{c}})Bordelon, Chaudhry, and Pehlevan]{bordelon2024infinite}
Bordelon, B., Chaudhry, H., and Pehlevan, C.
\newblock Infinite limits of multi-head transformer dynamics.
\newblock \emph{Advances in Neural Information Processing Systems}, 37:\penalty0 35824--35878, 2024{\natexlab{c}}.

\bibitem[Cohen et~al.(2021)Cohen, Kaur, Li, Kolter, and Talwalkar]{cohen2021gradient}
Cohen, J.~M., Kaur, S., Li, Y., Kolter, J.~Z., and Talwalkar, A.
\newblock Gradient descent on neural networks typically occurs at the edge of stability.
\newblock \emph{arXiv preprint arXiv:2103.00065}, 2021.

\bibitem[Cohen et~al.(2022)Cohen, Ghorbani, Krishnan, Agarwal, Medapati, Badura, Suo, Cardoze, Nado, Dahl, et~al.]{cohen2022adaptive}
Cohen, J.~M., Ghorbani, B., Krishnan, S., Agarwal, N., Medapati, S., Badura, M., Suo, D., Cardoze, D., Nado, Z., Dahl, G.~E., et~al.
\newblock Adaptive gradient methods at the edge of stability.
\newblock \emph{arXiv preprint arXiv:2207.14484}, 2022.

\bibitem[Cohen et~al.(2024)Cohen, Damian, Talwalkar, Kolter, and Lee]{cohen2024understanding}
Cohen, J.~M., Damian, A., Talwalkar, A., Kolter, Z., and Lee, J.~D.
\newblock Understanding optimization in deep learning with central flows.
\newblock \emph{arXiv preprint arXiv:2410.24206}, 2024.

\bibitem[d'Ascoli et~al.(2022)d'Ascoli, Refinetti, and Biroli]{d2022optimal}
d'Ascoli, S., Refinetti, M., and Biroli, G.
\newblock Optimal learning rate schedules in high-dimensional non-convex optimization problems.
\newblock \emph{arXiv preprint arXiv:2202.04509}, 2022.

\bibitem[Dey et~al.(2025)Dey, Zhang, Noci, Li, Bordelon, Bergsma, Pehlevan, Hanin, and Hestness]{dey2025don}
Dey, N., Zhang, B.~C., Noci, L., Li, M., Bordelon, B., Bergsma, S., Pehlevan, C., Hanin, B., and Hestness, J.
\newblock Don't be lazy: Completep enables compute-efficient deep transformers.
\newblock \emph{arXiv preprint arXiv:2505.01618}, 2025.

\bibitem[Diederik P.~Kingma(2015)]{kingma2015adam}
Diederik P.~Kingma, J.~B.
\newblock {Adam: A Method for Stochastic Optimization}.
\newblock \emph{International Conference on Learning Representations (ICLR)}, 2015.

\bibitem[Everett et~al.(2024)Everett, Xiao, Wortsman, Alemi, Novak, Liu, Gur, Sohl-Dickstein, Kaelbling, Lee, et~al.]{everett2024scaling}
Everett, K., Xiao, L., Wortsman, M., Alemi, A.~A., Novak, R., Liu, P.~J., Gur, I., Sohl-Dickstein, J., Kaelbling, L.~P., Lee, J., et~al.
\newblock Scaling exponents across parameterizations and optimizers.
\newblock \emph{arXiv preprint arXiv:2407.05872}, 2024.

\bibitem[Fisher(2013)]{fisher2013asexual}
Fisher, D.~S.
\newblock Asexual evolution waves: fluctuations and universality.
\newblock \emph{Journal of Statistical Mechanics: Theory and Experiment}, 2013\penalty0 (01):\penalty0 P01011, 2013.

\bibitem[Glasserman(2004)]{glasserman2004monte}
Glasserman, P.
\newblock \emph{Monte Carlo methods in financial engineering}, volume~53.
\newblock Springer, 2004.

\bibitem[Hallatschek \& Fisher(2014)Hallatschek and Fisher]{hallatschek2014acceleration}
Hallatschek, O. and Fisher, D.~S.
\newblock Acceleration of evolutionary spread by long-range dispersal.
\newblock \emph{Proceedings of the National Academy of Sciences}, 111\penalty0 (46):\penalty0 E4911--E4919, 2014.

\bibitem[Hendrycks \& Gimpel(2016)Hendrycks and Gimpel]{hendrycks2016gelu}
Hendrycks, D. and Gimpel, K.
\newblock {Gaussian Error Linear Units (GELUs)}.
\newblock \emph{Preprint arXiv 1606.08415}, 2016.

\bibitem[Hestness et~al.(2017)Hestness, Narang, Ardalani, Diamos, Jun, Kianinejad, Patwary, Yang, and Zhou]{hestness2017deep}
Hestness, J., Narang, S., Ardalani, N., Diamos, G., Jun, H., Kianinejad, H., Patwary, M. M.~A., Yang, Y., and Zhou, Y.
\newblock Deep learning scaling is predictable, empirically.
\newblock \emph{arXiv preprint arXiv:1712.00409}, 2017.

\bibitem[Hoffmann et~al.(2022)Hoffmann, Borgeaud, Mensch, Buchatskaya, Cai, Rutherford, Casas, Hendricks, Welbl, Clark, et~al.]{hoffmann2022training}
Hoffmann, J., Borgeaud, S., Mensch, A., Buchatskaya, E., Cai, T., Rutherford, E., Casas, D. d.~L., Hendricks, L.~A., Welbl, J., Clark, A., et~al.
\newblock Training compute-optimal large language models.
\newblock \emph{arXiv preprint arXiv:2203.15556}, 2022.

\bibitem[Kaplan et~al.(2020)Kaplan, McCandlish, Henighan, Brown, Chess, Child, Gray, Radford, Wu, and Amodei]{kaplan2020scaling}
Kaplan, J., McCandlish, S., Henighan, T., Brown, T.~B., Chess, B., Child, R., Gray, S., Radford, A., Wu, J., and Amodei, D.
\newblock Scaling laws for neural language models.
\newblock \emph{arXiv preprint arXiv:2001.08361}, 2020.

\bibitem[Levy et~al.(2015)Levy, Blundell, Venkataram, Petrov, Fisher, and Sherlock]{levy2015quantitative}
Levy, S.~F., Blundell, J.~R., Venkataram, S., Petrov, D.~A., Fisher, D.~S., and Sherlock, G.
\newblock Quantitative evolutionary dynamics using high-resolution lineage tracking.
\newblock \emph{Nature}, 519\penalty0 (7542):\penalty0 181--186, 2015.

\bibitem[Li et~al.(2017)Li, Tai, et~al.]{li2017stochastic}
Li, Q., Tai, C., et~al.
\newblock Stochastic modified equations and adaptive stochastic gradient algorithms.
\newblock In \emph{International Conference on Machine Learning}, pp.\  2101--2110. PMLR, 2017.

\bibitem[Luo et~al.(2025)Luo, Wen, Hu, Sun, Liu, Sun, Lyu, and Chen]{luo2025multi}
Luo, K., Wen, H., Hu, S., Sun, Z., Liu, Z., Sun, M., Lyu, K., and Chen, W.
\newblock A multi-power law for loss curve prediction across learning rate schedules.
\newblock \emph{arXiv preprint arXiv:2503.12811}, 2025.

\bibitem[Malladi et~al.(2022)Malladi, Lyu, Panigrahi, and Arora]{malladi2022sdes}
Malladi, S., Lyu, K., Panigrahi, A., and Arora, S.
\newblock On the sdes and scaling rules for adaptive gradient algorithms.
\newblock \emph{Advances in Neural Information Processing Systems}, 35:\penalty0 7697--7711, 2022.

\bibitem[Marchenko \& Pastur(1967)Marchenko and Pastur]{marchenko1967distribution}
Marchenko, V. and Pastur, L.~A.
\newblock Distribution of eigenvalues for some sets of random matrices.
\newblock \emph{Mat. Sb.(NS)}, 72\penalty0 (114):\penalty0 4, 1967.

\bibitem[McCandlish et~al.(2018)McCandlish, Kaplan, Amodei, and Team]{mccandlish2018empirical}
McCandlish, S., Kaplan, J., Amodei, D., and Team, O.~D.
\newblock An empirical model of large-batch training.
\newblock \emph{arXiv preprint arXiv:1812.06162}, 2018.

\bibitem[McLeish et~al.(2025)McLeish, Kirchenbauer, Miller, Singh, Bhatele, Goldblum, Panda, and Goldstein]{mcleish2025gemstones}
McLeish, S., Kirchenbauer, J., Miller, D.~Y., Singh, S., Bhatele, A., Goldblum, M., Panda, A., and Goldstein, T.
\newblock Gemstones: A model suite for multi-faceted scaling laws.
\newblock \emph{arXiv preprint arXiv:2502.06857}, 2025.

\bibitem[Nakkiran et~al.(2020)Nakkiran, Neyshabur, and Sedghi]{nakkiran2020deep}
Nakkiran, P., Neyshabur, B., and Sedghi, H.
\newblock The deep bootstrap framework: Good online learners are good offline generalizers.
\newblock \emph{arXiv preprint arXiv:2010.08127}, 2020.

\bibitem[Noci et~al.(2024)Noci, Meterez, Hofmann, and Orvieto]{noci2024super}
Noci, L., Meterez, A., Hofmann, T., and Orvieto, A.
\newblock Super consistency of neural network landscapes and learning rate transfer.
\newblock \emph{Advances in Neural Information Processing Systems}, 37:\penalty0 102696--102743, 2024.

\bibitem[Paquette et~al.(2021)Paquette, Lee, Pedregosa, and Paquette]{paquette2021sgd}
Paquette, C., Lee, K., Pedregosa, F., and Paquette, E.
\newblock Sgd in the large: Average-case analysis, asymptotics, and stepsize criticality.
\newblock In \emph{Conference on Learning Theory}, pp.\  3548--3626. PMLR, 2021.

\bibitem[Paquette et~al.(2024{\natexlab{a}})Paquette, Paquette, Adlam, and Pennington]{paquette2024homogenization}
Paquette, C., Paquette, E., Adlam, B., and Pennington, J.
\newblock Homogenization of sgd in high-dimensions: Exact dynamics and generalization properties.
\newblock \emph{Mathematical Programming}, pp.\  1--90, 2024{\natexlab{a}}.

\bibitem[Paquette et~al.(2024{\natexlab{b}})Paquette, Paquette, Xiao, and Pennington]{paquette20244+}
Paquette, E., Paquette, C., Xiao, L., and Pennington, J.
\newblock 4+ 3 phases of compute-optimal neural scaling laws.
\newblock \emph{arXiv preprint arXiv:2405.15074}, 2024{\natexlab{b}}.

\bibitem[Pearce et~al.(2020)Pearce, Agarwala, and Fisher]{pearce2020stabilization}
Pearce, M.~T., Agarwala, A., and Fisher, D.~S.
\newblock Stabilization of extensive fine-scale diversity by ecologically driven spatiotemporal chaos.
\newblock \emph{Proceedings of the National Academy of Sciences}, 117\penalty0 (25):\penalty0 14572--14583, 2020.

\bibitem[Pearce \& Song(2024)Pearce and Song]{pearce2024reconciling}
Pearce, T. and Song, J.
\newblock Reconciling kaplan and chinchilla scaling laws.
\newblock \emph{arXiv preprint arXiv:2406.12907}, 2024.

\bibitem[Rosen et~al.(2015)Rosen, Davison, Bhaya, and Fisher]{rosen2015fine}
Rosen, M.~J., Davison, M., Bhaya, D., and Fisher, D.~S.
\newblock Fine-scale diversity and extensive recombination in a quasisexual bacterial population occupying a broad niche.
\newblock \emph{Science}, 348\penalty0 (6238):\penalty0 1019--1023, 2015.

\bibitem[Schaipp et~al.(2025)Schaipp, H{\"a}gele, Taylor, Simsekli, and Bach]{schaipp2025surprising}
Schaipp, F., H{\"a}gele, A., Taylor, A., Simsekli, U., and Bach, F.
\newblock The surprising agreement between convex optimization theory and learning-rate scheduling for large model training.
\newblock \emph{arXiv preprint arXiv:2501.18965}, 2025.

\bibitem[Ser-Giacomi et~al.(2018)Ser-Giacomi, Zinger, Malviya, De~Vargas, Karsenti, Bowler, and De~Monte]{ser2018ubiquitous}
Ser-Giacomi, E., Zinger, L., Malviya, S., De~Vargas, C., Karsenti, E., Bowler, C., and De~Monte, S.
\newblock Ubiquitous abundance distribution of non-dominant plankton across the global ocean.
\newblock \emph{Nature ecology \& evolution}, 2\penalty0 (8):\penalty0 1243--1249, 2018.

\bibitem[Sharma \& Kaplan(2022)Sharma and Kaplan]{sharma2022scaling}
Sharma, U. and Kaplan, J.
\newblock Scaling laws from the data manifold dimension.
\newblock \emph{Journal of Machine Learning Research}, 23\penalty0 (9):\penalty0 1--34, 2022.

\bibitem[Tamai et~al.(2023)Tamai, Okubo, Duy, Natori, and Todo]{tamai2023universal}
Tamai, K., Okubo, T., Duy, T. V.~T., Natori, N., and Todo, S.
\newblock Universal scaling laws of absorbing phase transitions in artificial deep neural networks.
\newblock \emph{arXiv preprint arXiv:2307.02284}, 2023.

\bibitem[Tissue et~al.(2024)Tissue, Wang, and Wang]{tissue2024scaling}
Tissue, H., Wang, V., and Wang, L.
\newblock Scaling law with learning rate annealing.
\newblock \emph{arXiv preprint arXiv:2408.11029}, 2024.

\bibitem[Venkataram et~al.(2016)Venkataram, Dunn, Li, Agarwala, Chang, Ebel, Geiler-Samerotte, H{\'e}rissant, Blundell, Levy, et~al.]{venkataram2016development}
Venkataram, S., Dunn, B., Li, Y., Agarwala, A., Chang, J., Ebel, E.~R., Geiler-Samerotte, K., H{\'e}rissant, L., Blundell, J.~R., Levy, S.~F., et~al.
\newblock Development of a comprehensive genotype-to-fitness map of adaptation-driving mutations in yeast.
\newblock \emph{Cell}, 166\penalty0 (6):\penalty0 1585--1596, 2016.

\bibitem[Vyas et~al.(2023)Vyas, Atanasov, Bordelon, Morwani, Sainathan, and Pehlevan]{vyas2023feature}
Vyas, N., Atanasov, A., Bordelon, B., Morwani, D., Sainathan, S., and Pehlevan, C.
\newblock Feature-learning networks are consistent across widths at realistic scales.
\newblock \emph{Advances in Neural Information Processing Systems}, 36:\penalty0 1036--1060, 2023.

\bibitem[Wang \& Aitchison(2024)Wang and Aitchison]{wang2024set}
Wang, X. and Aitchison, L.
\newblock How to set adamw's weight decay as you scale model and dataset size.
\newblock \emph{arXiv preprint arXiv:2405.13698}, 2024.

\bibitem[Wen et~al.(2024)Wen, Li, Wang, Hall, Liang, and Ma]{wen2024understanding}
Wen, K., Li, Z., Wang, J., Hall, D., Liang, P., and Ma, T.
\newblock Understanding warmup-stable-decay learning rates: A river valley loss landscape perspective.
\newblock \emph{arXiv preprint arXiv:2410.05192}, 2024.

\bibitem[Wilson(1971)]{wilson1971renormalization}
Wilson, K.~G.
\newblock Renormalization group and critical phenomena. i. renormalization group and the kadanoff scaling picture.
\newblock \emph{Physical review B}, 4\penalty0 (9):\penalty0 3174, 1971.

\bibitem[Wortsman et~al.(2023)Wortsman, Liu, Xiao, Everett, Alemi, Adlam, Co-Reyes, Gur, Kumar, Novak, et~al.]{wortsman2023small}
Wortsman, M., Liu, P.~J., Xiao, L., Everett, K., Alemi, A., Adlam, B., Co-Reyes, J.~D., Gur, I., Kumar, A., Novak, R., et~al.
\newblock Small-scale proxies for large-scale transformer training instabilities.
\newblock \emph{arXiv preprint arXiv:2309.14322}, 2023.

\bibitem[Xiao(2024)]{xiao2024rethinking}
Xiao, L.
\newblock Rethinking conventional wisdom in machine learning: From generalization to scaling.
\newblock \emph{arXiv preprint arXiv:2409.15156}, 2024.

\bibitem[Yang \& Hu(2021)Yang and Hu]{yang2021infty}
Yang, G. and Hu, E.~J.
\newblock {Feature Learning in Infinite-Width Neural Networks}.
\newblock \emph{International Conference on Machine Learning (ICML)}, 2021.

\bibitem[Yang \& Littwin(2023)Yang and Littwin]{yang2023iv}
Yang, G. and Littwin, E.
\newblock {Tensor Programs IVb: Adaptive Optimization in the Infinite-Width Limit}.
\newblock \emph{International Conference on Learning Representations (ICLR)}, 2023.

\bibitem[Yang et~al.(2021)Yang, Hu, Babuschkin, Sidor, Liu, Farhi, Ryder, Pachocki, Chen, and Gao]{yang2021v}
Yang, G., Hu, E.~J., Babuschkin, I., Sidor, S., Liu, X., Farhi, D., Ryder, N., Pachocki, J., Chen, W., and Gao, J.
\newblock {Tensor Programs V: Tuning Large Neural Networks via Zero-Shot Hyperparameter Transfer}.
\newblock \emph{Advances in Neural Information Processing Systems (NeurIPS)}, 2021.

\bibitem[Zhang \& Sennrich(2019)Zhang and Sennrich]{zhang2019root}
Zhang, B. and Sennrich, R.
\newblock Root mean square layer normalization.
\newblock \emph{Advances in Neural Information Processing Systems}, 32, 2019.

\bibitem[Zhang et~al.(2019)Zhang, Li, Nado, Martens, Sachdeva, Dahl, Shallue, and Grosse]{zhang2019algorithmic}
Zhang, G., Li, L., Nado, Z., Martens, J., Sachdeva, S., Dahl, G., Shallue, C., and Grosse, R.~B.
\newblock Which algorithmic choices matter at which batch sizes? insights from a noisy quadratic model.
\newblock \emph{Advances in neural information processing systems}, 32, 2019.

\end{thebibliography}
\bibliographystyle{icml2025}

\newpage
\appendix
\onecolumn

\section{Experiment Details} \label{app:experiment}
\paragraph{Transformer Architecture.} We use GeLU activations \citep{hendrycks2016gelu}, RMSNorm \citep{zhang2019root}, and learned positional embeddings. We untie the embedding matrix from the output head and do not use bias anywhere. The readout layer is always zero-initialized, as suggested by \citet{yang2021v} . We denote the embedding dimension with $D.$ We set the intermediate dimension in the feedforward layers to $D$ instead of the usual $4D$, which enables us to explore larger widths more efficiently. The head dimension is set to 64.

\paragraph{CIFAR-5M.}
We use the CIFAR-5M dataset \citep{nakkiran2020deep} of 6 million CIFAR-like images. We convert the $32\times32\times 3$ images to greyscale and flatten them into sequences of length $1024$. The model autoregressively predicts the pixel intensities in raster-scan order. The vocabulary is the set of pixel intensities $\{0,\ldots, 255\}.$ Following \mup\, we parameterize the learning rate for each weight matrix as $\lr = \lr_\mathrm{base} / D$ where $d$ is the model dimension, except for the embedding matrix which has $\lr = \lr_\mathrm{base}.$ We use a parameter multiplier $a$ on the embedding matrix. We use $\lr_\mathrm{base} = 4$ and $a = 0.1$ as they led to good performance in our early experiments. We initialize the embedding matrix as $W^{\mathrm{emb}}_{ij} \sim \N(0, 1),$ the output head as $W^{\mathrm{head}} = 0,$  all other non-readout matrices $W$ as $W_{ij} \sim \N(0, 1/D).$ These hyperparameters were determined with a small amount of tuning in early experiments. We use a batch size of $256$ images. We use a linear warmup for $1000$ steps.

For the experiments in \Cref{sec:predict-loss-curves}, we used a slightly different setup due to switching to a new codebase in the middle of the research project. We use \mup\ where the base embedding dimension is $128$ and base learning rate is $0.01.$ We initialize the embedding matrix with a standard deviation (std) of 0.1 and multiply its learning rate by 10 relative to the base learning rate. The output projection of the feedforward and attention layers are zero-initialized. All other non-readout matrices are initialized with std $1/\sqrt{D}$. We use a batch size of $65536$ tokens or $64$ images. We use a linear warmup for $10$M tokens. 

\paragraph{Chess.}
We run our experiments on the Lichess dataset available on Hugging Face at \url{https://huggingface.co/datasets/Lichess/standard-chess-games}. We used character-level tokenization and a context length of 128. We use \mup\ where the base embedding dimension is $128$ and base learning rate is $0.01.$ We initialize the embedding matrix with std 0.1 and multiply its learning rate by 10 relative to the base learning rate. The output projection of the feedforward and attention layers are zero-initialized. All other matrices are initialized with std $1/\sqrt{D}$. We use a batch size of $65536$ tokens. We use a linear warmup for $10$M tokens.

\paragraph{MLP Experiments.}
Our MLP architecture is identical to the transformer with attention layers removed and the token and position embedding layers replaced by a linear layer. We use \mup\ where the base width is $128$ and base learning rate is $0.001.$ The output projection of the feedforward layers are zero-initialized. All non-readout matrices are initialized with std $1/\sqrt{D}$. We use a batch size of 4096 examples. We do not use warmup.

The target function is defiend as $\phi(x) = \sum_{i=1}^{M} w_i \sqrt{2} \cos(2\pi k_i^\top x + b_i),$ with $x \in \mathbb{R}^8, w_i \sim \N(0, 1), b_i \sim \frac{\pi}{2}\mathrm{Bernoulli}(0.5),$  $k_i=\mathrm{round}(s_iv_i)$ where $s_i$ is a scalar sampled from a power law with support $[1, \infty)$ and exponent $-2,$ $v_i$ is a random unit vector, and $\mathrm{round}$ rounds to the nearest point in $\mathbb{Z}^8.$ During training, $x$ is sampled uniformly from $[-0.5,0.5]^8$, making the Fourier features orthonormal over the data distribution. We suspect the details here are not necessary for generating power-law scaling laws beyond the power-law spectrum.

For \Cref{fig:mlp_compare_scaling_law}, the constant learning rate scaling ladder matches the \mup\ learning rate at $D=384$ (smallest model).

\paragraph{Measuring Gradient Covariance Trace.}
As mentioned in \Cref{sec:simple-model}, we use the preconditioned gradient covariance $\tilde{\Sigma}$ instead of the raw gradient covariance due to the use of Adam (and \mup). $\tilde{\Sigma}$ is defined as $P^{-1/2}\Sigma P^{-1/2}$ where $\Sigma$ is the raw (mini-batch) gradient covariance, and $P$ is the preconditioner. See an explanation for this definition in \Cref{app:perturb}. Since \mup\ uses layerwise learning rate, the preconditioner is defined as $P^{-1}=\mathrm{diag}\qty(\frac{\eta_0}{\sqrt{v^2}+\eps}),$ where $\eta_0$ is the vector of per-parameter learning rate (peak learning rate, before applying the schedule), $v^2$ is the Adam second-moment, and $\eps$ is the Adam $\epsilon.$ As the peak learning rate is absorbed into the preconditioner, any occurrence of the instantaneous learning rate $\eta(t)$ or $\eta(\tau)$ in \Cref{sec:predict-loss-curves} reflects only the schedule and takes on values in $[0,1].$ This definition mirrors what is done in \citet{noci2024super,cohen2024understanding}.

\paragraph{Training for More than One Epoch.}
The CIFAR-5M dataset, tokenized as $32\times32$ greyscale images, has about 5B tokens. Therefore, most models needed to be trained for more than 1 epoch to reach the compute-optimal training horizon. Up to the scales we tested, we did not observe a significant difference between the train and test loss. The chess dataset has about 20B tokens, which also led to data reuse for some models, but did not lead to significant overfitting. As we only processed a subset of the full Lichess dataset, this can be avoided by processing a larger subset if desired.

\paragraph{On Random Seeds.}
In the CIFAR-5M experiments in \Cref{fig:supercollapse}, the random seed controls both the initialization and data ordering. In the other experiments (Transformer on chess and MLP regression), the random seed only controls the initialization while data ordering is held fixed, as is often done in practice. Fixing the data ordering (no shuffling) had the advantage of speeding up data loading. We found that supercollapse occurs regardless of whether seeds affect data ordering. This makes sense: even with fixed ordering, different model sizes process different data due to varying training horizons. More fundamentally, supercollapse should be robust to which training components are randomized, as the variance reduction arises from strong noise correlations along individual trajectories rather than specific noise sources (\Cref{sec:super}).

\section{Scaling Collapse Across Transformer Depths} \label{app:depth}
For scaling depth, we additionally apply a branch multiplier of $3/\mathrm{depth}$ on the output of every feedforward and attention layer, as suggested by \citet{bordelon2024infinite}. We find a decent degree of collapse in \Cref{fig:chess-depth} when training on chess data. There is a small shift in the normalized curves, though we are unsure if it is simply a finite-size effect.

\begin{figure}[H]
\centering
\begin{minipage}[b]{0.35\textwidth}
    \centering
    \includegraphics[width=\textwidth]{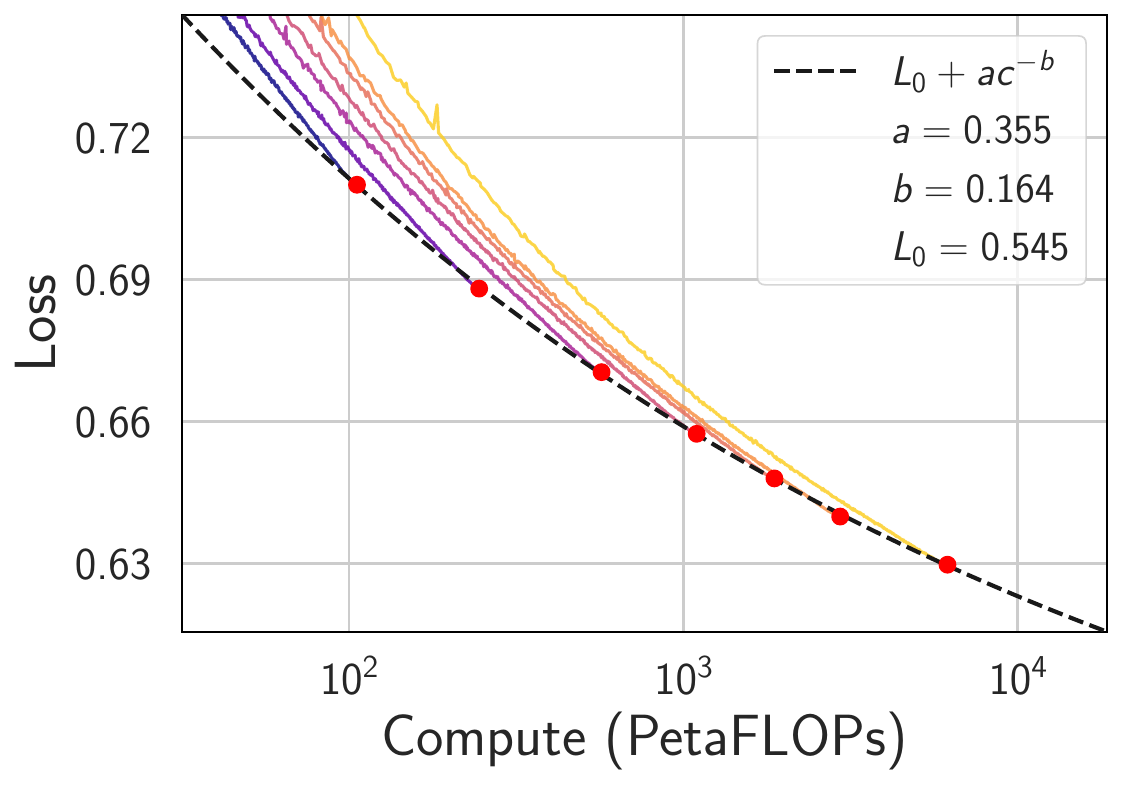}
    \subcaption{Scaling Law}
\end{minipage}%
\begin{minipage}[b]{0.35\textwidth}
    \centering
    \includegraphics[width=\textwidth]{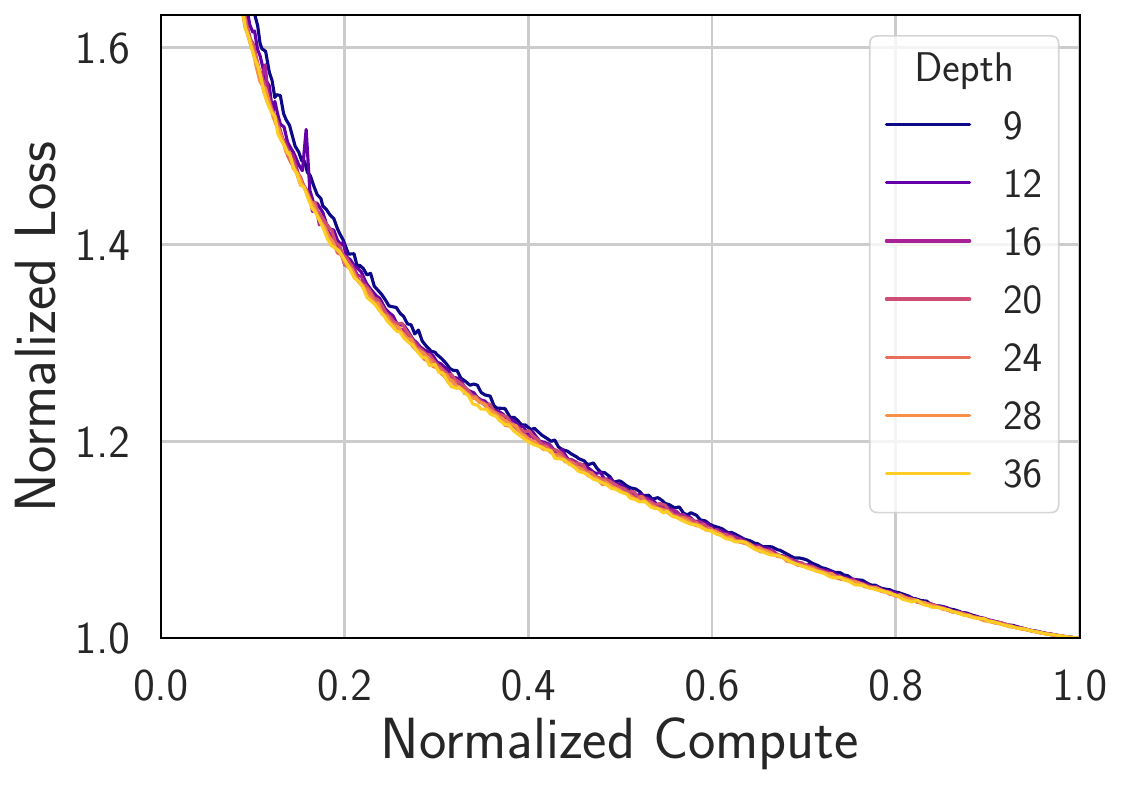}
    \subcaption{Collapse}
\end{minipage}
\caption{\textbf{Depthwise scaling collapse for transformers trained on chess.}  
}
\label{fig:chess-depth}
\end{figure}

\section{Estimating Compute-Optimal Training Horizon} \label{app:power-law-fits}
To estimate the optimal compute for training each model, we perform the following steps in each experiment:
\begin{itemize}
    \item We trained each model \emph{without} learning rate decay but keeping the initial warmup. We chose a large enough number of steps so that the largest model could reach the compute-optimal loss frontier. We average the loss curves from $5$ seeds.
    \item We numerically computed the compute-loss Pareto frontier to obtain an estimate of $c^\star(p)$ - the optimal compute for each model size $p$. We use logarithmically spaced points for $c$ and find the $p$ that achieves the best loss given $c$ training FLOPs.
    \item We fit a power law $c^\star(p) = \kappa p^{1+\gamma}$ where $\kappa$ and $\gamma$ are fit parameters. The optimal number of training tokens is then $t^\star(p) = c^\star(p) / (6p),$ which scales as $p^\gamma.$ We remove outliers in this fit by dropping points from the smallest and largest model.
\end{itemize}
Using a constant learning rate schedule allows us to measure the loss of one model at different token budgets with a single training run, an approach also used in \citet{mcleish2025gemstones}, rather than one training run per token budget as origianlly done in \citet{hoffmann2022training}. \Cref{fig:fit-data-exponent} illustrates this procedure. 

\begin{figure}[H]
\centering
\begin{minipage}[b]{0.24\textwidth}
    \centering
    \includegraphics[width=\textwidth]{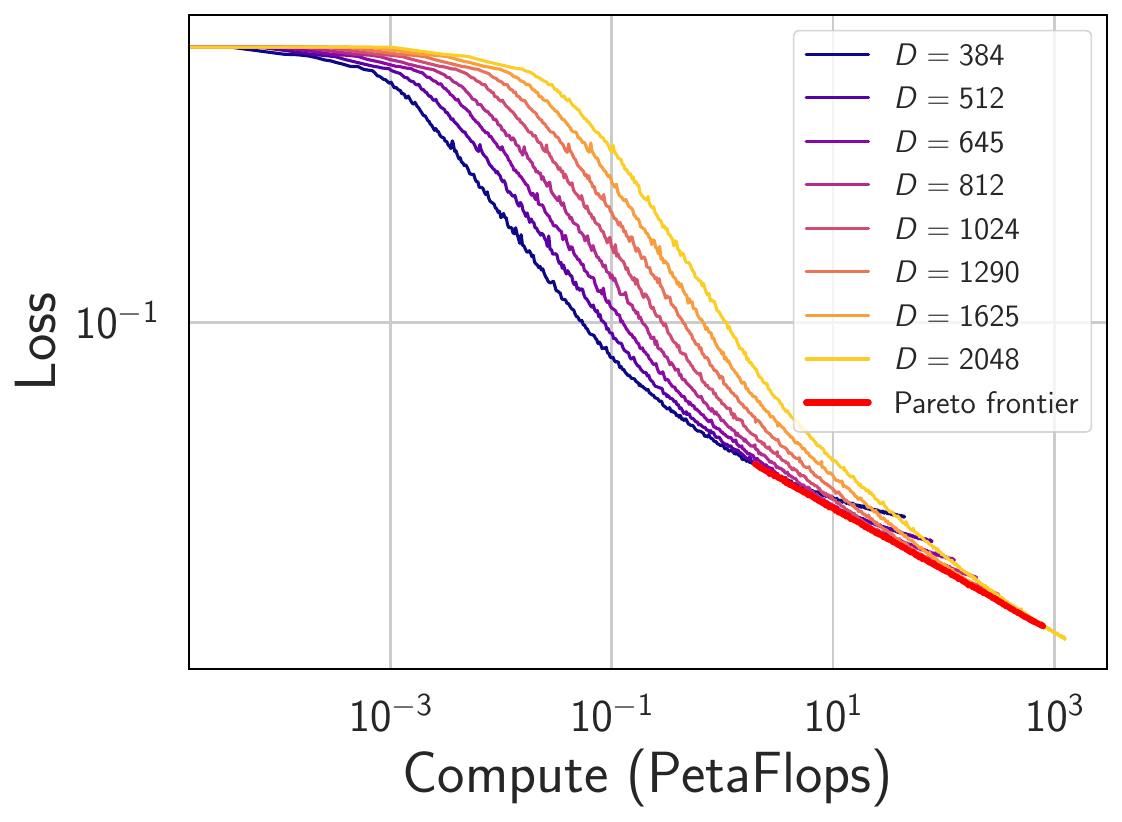}
    \subcaption{MLP Pareto Frontier}
\end{minipage}%
\begin{minipage}[b]{0.24\textwidth}
    \centering
    \includegraphics[width=\textwidth]{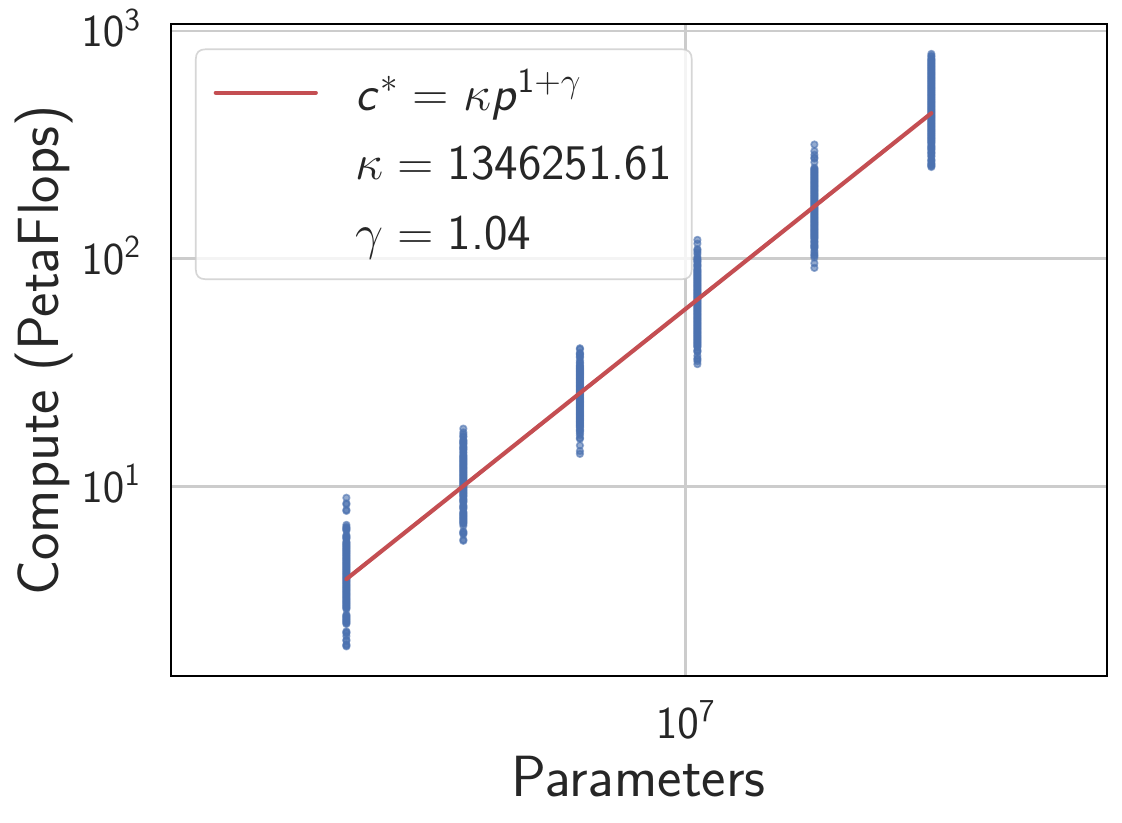}
    \subcaption{MLP Data Exponent}
\end{minipage}
\begin{minipage}[b]{0.24\textwidth}
    \centering
    \includegraphics[width=\textwidth]{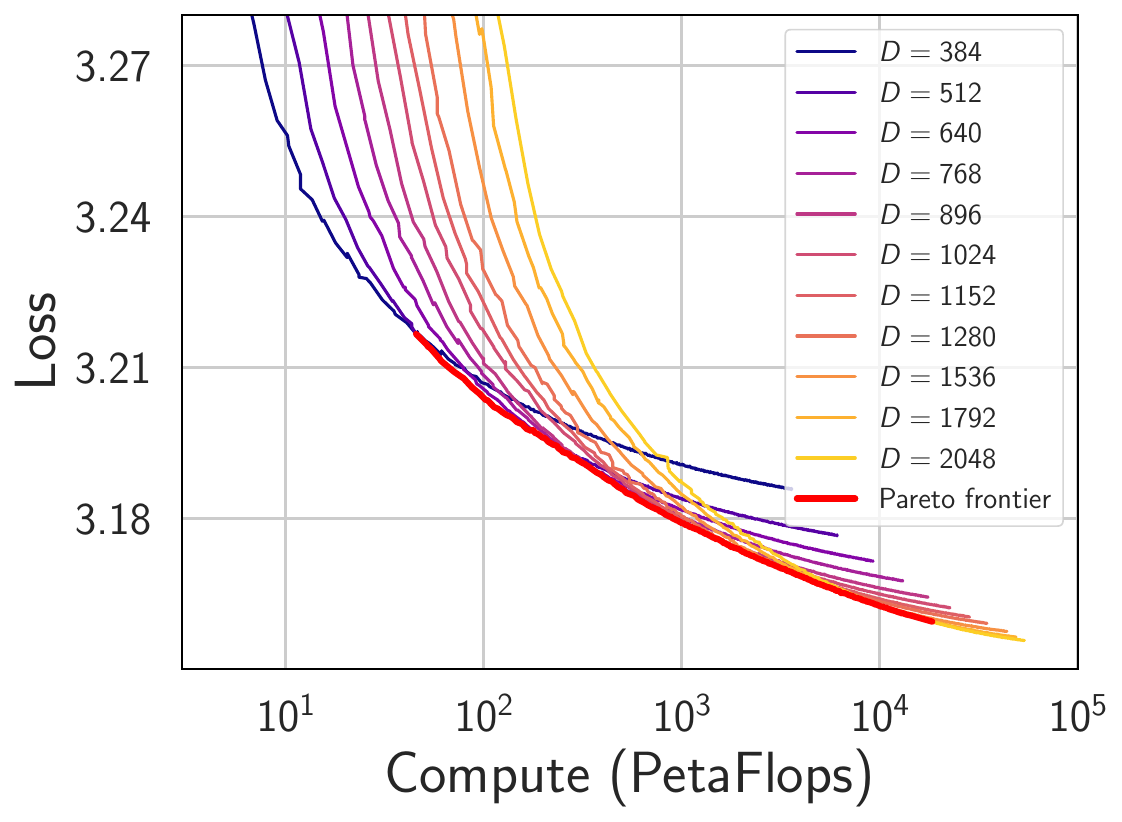}
    \subcaption{CIFAR-5M Pareto Frontier}
\end{minipage}%
\begin{minipage}[b]{0.24\textwidth}
    \centering
    \includegraphics[width=\textwidth]{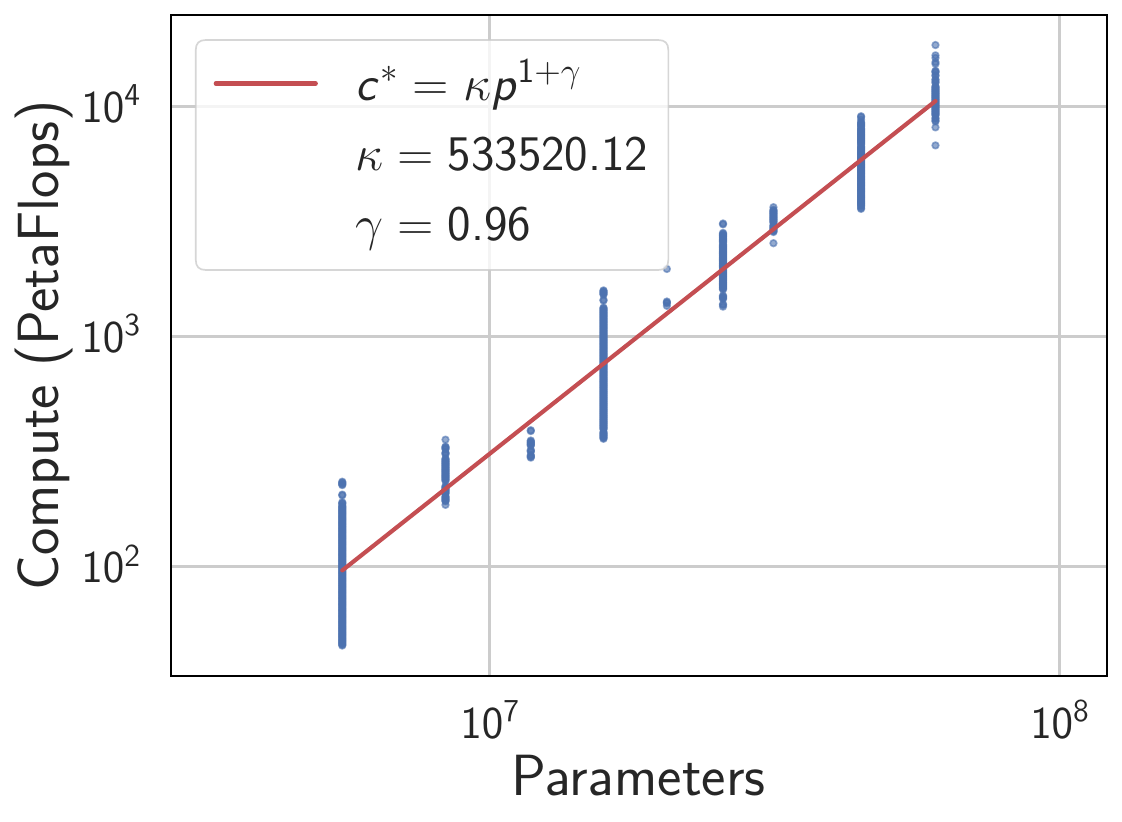}
    \subcaption{CIFAR-5M Data Exponent}
\end{minipage}
\caption{\textbf{Estimating compute-optimal data exponent in the MLP regression and Transformer CIFAR-5M experiments.}  
}
\label{fig:fit-data-exponent}
\end{figure}

\section{Universality and Scaling Collapse in Other Sciences}
\label{app:collapse_science}
The simplest versions of collapse come from statistics and probability, where entire \emph{distributions} of random variables show universal
behavior between systems of different types and scales. The most well known is the central limit theorem which predicts
a universal Gaussian form for the sums of random variables with appropriately bounded moments (and Levy distributions for heavy
tailed distributions). In random matrix theory, there are similar effects when studying the limiting empirical distribution of spectra, most famously the Marcenko-Pastur distribution of Wishart matrices \citep{marchenko1967distribution}.
In all of these examples, showing universal behavior of a single moment is analogous to showing predictability of the Pareto frontier
in our work, while showing the universality of the whole distribution is analogous to our statements about the entire loss
curves (where e.g. the CDFs of different problems are converging).

Scaling collapse is ubiquitous in physics as well. There are again distributional collapses like the famed Maxwell
Boltzmann distribution first used to describe idealized gasses, but also functional relationships like the universal
magnetization-temperature curves of near-critical Ising lattices of different sizes \citep{binder1981finite}. In the Ising
example, changes to the lattice topology can change the magnetization-temperature curves, similar to how different
datasets, architecture, and training algorithms lead to different universal curves in our study. A more general theory
unifying and explaining the existence of universality in physics arises from the renormalization group \cite{wilson1971renormalization}.

More recently, scaling collapse has been used to describe dynamical systems in biological contexts. Advances in genomics have led to the rise of experimental microbial evolution
with rapid timecourse data \citep{levy2015quantitative, venkataram2016development}. Analysis of this data relies on quantitative modeling of evolutionary dynamics.
Most of these models show universal scaling dynamics, in situations from rapid evolutions of diverse populations \citep{fisher2013asexual}, populations evolving under changing fitness conditions \citep{agarwala2019adaptive}, and populations expanding in space
\citep{hallatschek2014acceleration}. In these settings the timecourse of key observables can be described with dynamical curves which can be rescaled to universal forms with transformations depending on population size, mutation frequency, and statistics of the
fitness landscape.
In ecology, the ubiquity of fine-scale diversity and the seemingly universal, power-law nature of species rank-abundance curves \citep{rosen2015fine, ser2018ubiquitous} can be explained using dynamical models which themselves show
universal scaling behavior over ecosystems of different sizes \citep{pearce2020stabilization}.

Scaling collapse has been used to study certain scaling relations in machine learning. \citet{kaplan2020scaling} identified universal scaling of overfitting, showing a collapse of the rescaled excess loss vs rescaled parameter count across dataset sizes. \citet{tamai2023universal} used scaling collapse to establish universal scaling laws in the forward signal
propagation dynamics of MLPs near the order-to-chaos transition.

\section{Power-Law Pareto Frontier is Necessary for Collapse}
\label{app:proof_super_pareto}

Recall $t^\star(p)$ is the optimal training horizon for model size $p$, i.e. $L(t^\star(p),p) = \min_{t',p':t'p'=t^\star(p)p} L(t',p')$. Let $c^\star(p) = 6t^\star(p) p$ be the optimal compute for $p$. In what follows, rather than writing $L(t,p)$, we will find it convenient to express the loss curves in terms of compute and model size. Letting $L(c,p)$ be the loss curves expressed this way, we have the following theorem:

\begin{restatable}{theorem}{paretopowerlaw}
\label{thm:pareto_powerlaw}
Let $\mathcal{L}(c,p)$ be $C^1$ for all positive $c, p$ and let $c^\star(p)$ denote the compute-optimal budget for model size $p$.
Write $\mathcal{L}(c,p) = L(c,p) - \hat{L}$ for some offset $\hat{L}$ (e.g., $\hat{L} = L_0$ the irreducible loss).
Define the normalized loss curve
\begin{align}
\ell(x,p) = \frac{\mathcal{L}(x c^\star(p),p)}{\mathcal{L}(c^\star(p),p)}, \quad x \in [0,1]. \label{eq:normalized_curve}
\end{align}
Then,

\paragraph{1. Necessity.} If $\ell$ is independent of $p$ (collapse), then the Pareto frontier of $\qty{\mathcal{L}(c,p)}_{c,p}$
\begin{align}
\mathcal{L}^\star(c) := \min_{p} \mathcal{L}(c,p) = \mathcal{L}(c^\star(p), p) \label{eq:pareto_frontier}
\end{align}
is a power law $\mathcal{L}^\star(c) = a c^{-\delta}$ for some constants $a, \delta$.

\paragraph{2. Sufficiency at first order.} Conversely, suppose $\mathcal{L}^\star(c) = a c^{-\delta}$, then
\begin{align}
\dv{\ell(x,p)}{x}\bigg|_{x=1} = -\delta, \label{eq:collapse_condition}
\end{align}
independent of $p$. Hence all curves share the same first-order behavior around $x=1$, i.e., they collapse to first order around $x=1$.
\end{restatable}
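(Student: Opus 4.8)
The plan is to exploit the geometric characterization of the Pareto frontier: at the optimal budget $c^\star(p)$, the curve $c \mapsto \mathcal{L}(c,p)$ is tangent to the frontier $\mathcal{L}^\star$, so the logarithmic slope of $\mathcal{L}(\cdot,p)$ at $c = c^\star(p)$ equals the logarithmic slope of $\mathcal{L}^\star$ at that point. Concretely, since $c^\star(p)$ minimizes $\mathcal{L}(c,p)$ over $p$ at fixed compute — equivalently, the frontier curve is obtained as a lower envelope — the first-order optimality condition gives that $\partial_c \mathcal{L}(c,p)\big|_{c=c^\star(p)}$ matches $\tfrac{d}{dc}\mathcal{L}^\star(c)\big|_{c=c^\star(p)}$ together with the value-matching $\mathcal{L}(c^\star(p),p) = \mathcal{L}^\star(c^\star(p))$. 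I will phrase everything in terms of $x = c / c^\star(p)$ so that $\ell(x,p) = \mathcal{L}(xc^\star(p),p)/\mathcal{L}(c^\star(p),p)$, and note that by the chain rule
\begin{align}
\dv{\ell(x,p)}{x}\bigg|_{x=1} = \frac{c^\star(p)\, \partial_c \mathcal{L}(c,p)\big|_{c = c^\star(p)}}{\mathcal{L}(c^\star(p),p)} = \frac{d \log \mathcal{L}^\star}{d \log c}\bigg|_{c = c^\star(p)},
\end{align}
where the last equality uses the tangency (value- and slope-matching of $\mathcal{L}(\cdot,p)$ with $\mathcal{L}^\star$ at $c^\star(p)$).

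For \textbf{Necessity}, assume $\ell(x,p)$ is independent of $p$; call it $\ell(x)$. Then the left-hand side above, $\ell'(1)$, is a constant $-\delta$ independent of $p$. Hence the log-log slope of $\mathcal{L}^\star$ equals the constant $-\delta$ at every point of the form $c = c^\star(p)$. As $p$ ranges over all positive values, $c^\star(p)$ sweeps out an interval of compute values (assuming $c^\star$ is continuous and non-constant, which holds generically — I would state this as a mild regularity hypothesis), so $\tfrac{d\log \mathcal{L}^\star}{d\log c} \equiv -\delta$ on that interval. Integrating this ODE in $\log c$ gives $\log \mathcal{L}^\star(c) = -\delta \log c + \text{const}$, i.e.\ $\mathcal{L}^\star(c) = a c^{-\delta}$. (To be fully rigorous one should either invoke that collapse across all $p$ actually forces the stronger statement that the whole curves are log-log translates of one another — which pins down the frontier's slope everywhere it is tangent — or simply restrict the conclusion to the range of compute swept by $c^\star(p)$; I would take the latter, lighter route, matching the informal argument already given in \Cref{sec:power-law-pareto}.)

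For \textbf{Sufficiency at first order}, assume $\mathcal{L}^\star(c) = a c^{-\delta}$. Then $\tfrac{d\log\mathcal{L}^\star}{d\log c} = -\delta$ identically, so by the displayed chain-rule identity $\ell'(1,p) = -\delta$ for every $p$, which is exactly \Cref{eq:collapse_condition}. This direction is essentially immediate once the tangency identity is in hand.

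The main obstacle is making the tangency/envelope step precise: one must justify that $c^\star(p)$, defined via $\mathcal{L}(c^\star(p),p) = \min_{p'} \mathcal{L}(cp/p',\ldots)$ — i.e.\ the frontier is the minimum over model size at fixed compute — yields both value-matching and slope-matching between $\mathcal{L}(\cdot,p)$ and $\mathcal{L}^\star(\cdot)$ at $c^\star(p)$. Value-matching is definitional; slope-matching is the first-order condition of the minimization and requires that the optimum is interior and that $\mathcal{L}$ is $C^1$ (given). I would handle this by differentiating the identity $\mathcal{L}^\star(c) = \mathcal{L}(c, p^\star(c))$ with respect to $c$, where $p^\star(c)$ is the optimal model size, and using $\partial_p \mathcal{L}(c,p)\big|_{p = p^\star(c)} = 0$ (interior stationarity) to kill the term involving $dp^\star/dc$; what remains is precisely $\tfrac{d}{dc}\mathcal{L}^\star(c) = \partial_c \mathcal{L}(c, p^\star(c))$, i.e.\ slope-matching. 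This is the Danskin/envelope-theorem argument and is the one place where smoothness and interiority are genuinely used.
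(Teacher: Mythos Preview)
Your proposal is correct and follows essentially the same route as the paper: both hinge on the identity $\ell'(1,p) = \partial_{\log c}\log\mathcal{L}(c,p)\big|_{c=c^\star(p)}$, then use the envelope theorem (value- and slope-matching at the optimum) to equate this with the log--log slope of the frontier $\mathcal{L}^\star$, from which Necessity and Sufficiency follow by integrating (resp.\ reading off) the constant slope. Your added remark that $c^\star(p)$ must sweep out an interval for the Necessity conclusion to extend to the whole frontier is a point the paper leaves implicit, and your explicit Danskin-style justification of the tangency step is slightly more detailed than the paper's invocation of the envelope theorem, but the argument is otherwise the same.
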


\begin{proof}
First, we have the following identity for log-derivatives for a general differentiable function $u(v)$
\begin{align}
\dv{\log u}{\log v} = \frac{v}{u} \dv{u}{v}. \label{eq:log_derivative_identity}
\end{align}

Applying this to our normalized curve from \Cref{eq:normalized_curve} and using the chain rule:
\begin{align}
\dv{\ell(x,p)}{x}\bigg|_{x=1} &= \frac{c^\star}{\mathcal{L}(c^\star,p)} \, \pdv{\mathcal{L}(c,p)}{c}\bigg|_{c=c^\star} \label{eq:ell_derivative} \\
&= \pdv{\log \mathcal{L}(c,p)}{\log c}\bigg|_{c=c^\star}, \label{eq:ell_log_derivative}
\end{align}
where the second equality follows from \Cref{eq:log_derivative_identity}.

\paragraph{Necessity.}
If $\ell$ is independent of $p$ (collapse), then $\dv{\ell(x,p)}{x}\big|_{x=1}$ is the same for all $p$. Set this common value to $-\delta$. By \Cref{eq:ell_log_derivative},
\begin{align}
\pdv{\log \mathcal{L}(c,p)}{\log c}\bigg|_{c=c^\star(p)} = -\delta \quad \text{for every } p. \label{eq:constant_log_slope}
\end{align}

Since $(c^\star(p),p)$ lies on the Pareto frontier and $\mathcal{L}$ is $C^1$, the envelope theorem states
\begin{align}
\dv{\mathcal{L}^\star(c)}{c}\bigg|_{c=c^\star(p)} = \pdv{\mathcal{L}(c,p)}{c}\bigg|_{c=c^\star(p)}, \label{eq:envelope_tangency}
\end{align}
i.e. the loss curve is tangent to the Pareto frontier at the compute-optimal point.
Applying \Cref{eq:log_derivative_identity} to the frontier $\mathcal{L}^\star(c)$:
\begin{align}
\dv{\log \mathcal{L}^\star(c)}{\log c}\bigg|_{c=c^\star(p)} &= \frac{c^\star(p)}{\mathcal{L}^\star(c^\star(p))} \, \dv{\mathcal{L}^\star(c)}{c}\bigg|_{c=c^\star(p)} \label{eq:frontier_log_derivative} \\
&= \frac{c^\star(p)}{\mathcal{L}(c^\star(p),p)} \, \pdv{\mathcal{L}(c,p)}{c}\bigg|_{c=c^\star(p)} \label{eq:frontier_substitution} \\
&= \pdv{\log \mathcal{L}(c,p)}{\log c}\bigg|_{c=c^\star(p)} \\
&= -\delta, \label{eq:frontier_constant_slope}
\end{align}
where we used \Cref{eq:envelope_tangency} and \Cref{eq:constant_log_slope}. This means the log-log slope of the frontier is constant, which means it is a power law $\mathcal{L}^\star(c) = a c^{-\delta}$.

\paragraph{Sufficiency at first order.}
Assume $\mathcal{L}^\star(c) = a c^{-\delta}$. For any model size $p$, the envelope theorem gives the tangency condition at $c = c^\star(p)$:
\begin{align}
\pdv{\mathcal{L}(c,p)}{c}\bigg|_{c=c^\star} = \dv{\mathcal{L}^\star(c)}{c}\bigg|_{c=c^\star} = -\delta a (c^\star)^{-\delta-1}. \label{eq:power_law_tangency}
\end{align}
Applying \Cref{eq:ell_log_derivative}:
\begin{align}
\dv{\ell(x,p)}{x}\bigg|_{x=1} &= \pdv{\log \mathcal{L}(c,p)}{\log c}\bigg|_{c=c^\star(p)} \\
&= -\delta. \label{eq:sufficiency_result}
\end{align}
Therefore, all curves collapse to first order around $x=1$ as in \Cref{eq:collapse_condition}.
\end{proof}

\paragraph{Remark.}
A power-law Pareto frontier is not only necessary for full collapse but also already enforces a weaker, first–order form of collapse.
Theorem~\ref{thm:pareto_powerlaw} assumes the compute–optimal point lies in the \emph{interior} of each loss curve.
This condition can fail for learning rate schedules that reach $\eta=0$ after finitely many steps, because the optimum may then coincide with the boundary of the curve, where the envelope theorem tangency no longer applies. Such schedules are used throughout our experiments and are common in practice. Extension of the proof to handle these boundary-optimal schedules would be interesting.

\section{Collapse for General Sum-of-Power-Laws Loss Curves}
\label{app:plrf-collapse}

\begin{theorem}\label{thm:compute_optimal_collapse}
Suppose the loss curve is given by
\begin{align}
   L(t,p) = L_0 + \sum_{i=1}^{m} a_i t^{-\mu_i} p^{-\nu_i}, \quad a_i > 0, \, \mu_i, \nu_i \geq 0,
\end{align}
with at least one of $\mu_i, \nu_i$ positive for every $i$ (else absorb the term into $L_0$). Let $t^\star(p) = \kappa p^{\gamma}$ with $\kappa > 0, \gamma > 0$ be the asymptotic compute-optimal training horizon, and define the total exponent $\beta_i := \mu_i \gamma + \nu_i$ and $b_i := a_i \kappa^{-\mu_i}$. Without loss of generality, assume $\beta_i$'s are sorted in non-decreasing order. Then, 

\paragraph{1. Compute-optimality forces a tie.} At least two $\beta_i$'s share the minimum:
\begin{align}
   \label{eq:equal_beta}
   \beta_1 = \beta_2 = \dots = \beta_k < \beta_{k+1} \leq \dots \leq \beta_m, \quad k \geq 2.
\end{align}

\paragraph{2. Asymptotic collapse.} The normalized loss curve
\begin{align}
   \ell(x,p) := \frac{L(x t^\star(p),p) - L_0}{L(t^\star(p),p) - L_0}.
\end{align}
is given by
\begin{align}
   \label{eq:ell_asymp}
   \ell(x,p) = \frac{\sum_{i=1}^{k} b_i x^{-\mu_i}}{\sum_{i=1}^{k} b_i} + O\qty(p^{-\epsilon}), \quad \epsilon := \beta_{k+1} - \beta_1 > 0,
\end{align}
independent of $p$ up to finite-size error that decays as $O\qty(p^{-\epsilon}).$ If $k=m,$ $\epsilon$ is taken to be $\infty$ (perfect finite-size collapse).

\paragraph{3. Locally fastest decay of finite-size error.} Locally, $\gamma$ is the data exponent that achieves the fastest decay of the finite-size error as measured by $\epsilon$. In particular, $\epsilon = O(|\delta|)$ for any other data exponent $\gamma' = \gamma + \delta$ with $\delta \neq 0$, leading to more slowly decaying finite-size error and therefore a worse collapse.

\paragraph{4. Compute-optimality up to a constant suffices.} Any training horizon that is a constant multiple of $t^\star(p)$ preserves the collapse, only changing the constants $b_i$ in \Cref{eq:ell_asymp}.
\end{theorem}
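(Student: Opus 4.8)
The plan is to reduce everything to the behavior of the one-variable function obtained by restricting the loss to the power-law horizon. Substituting $t^\star(p)=\kappa p^{\gamma}$ into $\mathcal{L}(t,p)=\sum_i a_i t^{-\mu_i}p^{-\nu_i}$ turns both the numerator and the denominator of $\ell(x,p)$ into finite Dirichlet-type sums in $p$: with $b_i=a_i\kappa^{-\mu_i}$ and $\beta_i=\mu_i\gamma+\nu_i$ one gets $\mathcal{L}(x t^\star(p),p)=\sum_{i=1}^m b_i x^{-\mu_i}p^{-\beta_i}$, so $\ell(x,p)=\big(\sum_i b_i x^{-\mu_i}p^{-\beta_i}\big)\big/\big(\sum_i b_i p^{-\beta_i}\big)$. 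Factoring the dominant power $p^{-\beta_1}$ out of top and bottom, the $p$-dependence survives only through the terms $p^{-(\beta_i-\beta_1)}$ with $\beta_i>\beta_1$; everything else is a function of $x$ alone. Parts 2--4 are then bookkeeping with these sums, so the real work is part 1.

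For part 1, compute-optimality means the horizon sits on the loss--compute Pareto frontier, so differentiating $\mathcal{L}(t,p)$ along the constraint $6tp=c$ (i.e. $dt/dp=-t/p$, exactly as in the $m=2$ computation in the main text) gives the balancing condition $\sum_i(\mu_i-\nu_i)\,a_i t^{-\mu_i}p^{-\nu_i}=0$ at the optimum. Since $\kappa p^{\gamma}$ is the leading asymptotics of the exact optimizer, substituting it in forces the leading-order part of $\sum_i(\mu_i-\nu_i)b_i p^{-\beta_i}$ to vanish as $p\to\infty$. If the minimal exponent $\beta_1$ were attained by a single index, its coefficient $(\mu_1-\nu_1)b_1$ would have to be zero, which (as $b_1>0$) forces $\mu_1=\nu_1$ and leaves $\gamma$ unconstrained as a power-law exponent; hence a genuine power-law compute-optimal $\gamma$ requires at least two indices tied at the minimum, with the signs of the tied $\mu_i-\nu_i$ arranged so that $\sum_{i\le k}(\mu_i-\nu_i)b_i=0$ is possible. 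Equivalently, $\gamma$ maximizes the frontier decay rate $f(\gamma)=\min_i(\mu_i\gamma+\nu_i)/(1+\gamma)$, and since $f'=(\mu_j-\nu_j)/(1+\gamma)^2$ on each linear piece of the concave min, an interior maximizer must be a kink of $\min_i(\mu_i\gamma+\nu_i)$, i.e. a tie $\beta_j=\beta_k$.

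With part 1 established, the rest follows. For part 2, split the sums into the tied block $i\le k$ (all with exponent $\beta_1$) and the rest ($i>k$, exponents $\ge\beta_{k+1}=\beta_1+\epsilon$); cancelling $p^{-\beta_1}$, the tied block gives the $p$-free ratio $\sum_{i\le k}b_i x^{-\mu_i}/\sum_{i\le k}b_i$ and the remaining terms are $O(p^{-\epsilon})$ in both numerator and denominator, so a first-order expansion of $A/B$ yields $\ell(x,p)=\sum_{i\le k}b_i x^{-\mu_i}/\sum_{i\le k}b_i+O(p^{-\epsilon})$, uniformly for $x$ bounded away from $0$ and exact when $k=m$. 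For part 4, replacing $t^\star$ by $\lambda t^\star$ sends $\kappa\mapsto\lambda\kappa$, hence $b_i\mapsto\lambda^{-\mu_i}b_i$, while leaving every $\beta_i$ and the tie structure unchanged, so the limiting curve is the same expression with rescaled positive constants. For part 3, perturbing to $\gamma'=\gamma+\delta$ sends $\beta_i\mapsto\beta_i+\mu_i\delta$; the tied block $\{\beta_1,\dots,\beta_k\}$ ($k\ge2$) splits into generically distinct values with spread $\Theta(|\delta|)$, which remain below $\beta_{k+1}+\mu_{k+1}\delta$ for small $|\delta|$, so the gap between the smallest and second-smallest total exponent becomes $O(|\delta|)$ rather than $\epsilon$, making the finite-size error decay like $p^{-O(|\delta|)}$ --- strictly slower than $p^{-\epsilon}$ --- which is the claimed local optimality of $\gamma$.

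The main obstacle is the tie argument in part 1: making precise the sense in which $\kappa p^{\gamma}$ "is" the compute-optimal horizon (it is the leading asymptotics of the exact frontier optimizer, so plugging it into the first-order optimality condition must annihilate the leading term), and handling the degenerate configurations cleanly --- e.g. all $\mu_i=\nu_i$, which makes $\gamma$ genuinely undetermined, or repeated $\mu_i$ within the minimal block, which is really a repeated term. Once that is pinned down, parts 2--4 are routine manipulations of the Dirichlet sums $\sum_i b_i x^{-\mu_i}p^{-\beta_i}$ and their leading-order expansions.
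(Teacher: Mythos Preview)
Your proposal is correct and follows essentially the same route as the paper: the same constrained-derivative computation $\sum_i(\mu_i-\nu_i)b_i p^{-\beta_i}=0$ for part~1, the same factoring of $p^{-\beta_1}$ and splitting into tied/untied blocks for part~2, and identical perturbation arguments for parts~3--4. Your additional frontier-exponent characterization $f(\gamma)=\min_i(\mu_i\gamma+\nu_i)/(1+\gamma)$ is a nice alternative way to see the tie, and your explicit flagging of the degenerate case $\mu_1=\nu_1$ matches what the paper relegates to a footnote.
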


\begin{proof}

\paragraph{Compute-optimality forces a tie.} Fix the compute budget $c := 6tp$ and note $t(p) = c/(6p)$ so that $\dv{t}{p} = -t/p$. With $\beta_i := \mu_i \gamma + \nu_i$ and $b_i := a_i \kappa^{-\mu_i}$,
\begin{align}
   \dv{L}{p} &= \sum_{i=1}^{m} a_i \qty(\pdv{p} + \dv{t}{p} \pdv{t}) t^{-\mu_i} p^{-\nu_i} \\
   &= \sum_{i=1}^{m} a_i \qty(-\frac{\nu_i}{p} + \frac{t}{p}\frac{\mu_i}{t}) t^{-\mu_i} p^{-\nu_i} \\
   &= \frac{1}{p}  \sum_{i=1}^{m} a_i \qty(\mu_i - \nu_i) t^{-\mu_i} p^{-\nu_i} \\
   &= \frac{1}{p} \sum_{i=1}^{m} b_i(\mu_i - \nu_i) p^{-\beta_i}.
\end{align}
If $\beta_1 < \beta_2$, the leading term $b_1(\mu_1 - \nu_1) p^{-\beta_1}$ cannot cancel the rest for asymtotically large $p$, contradicting $\dv{L}{p} = 0$ required by compute-optimality. Hence at least two indices share the minimum exponent, yielding \Cref{eq:equal_beta}.\footnote{Here we assumed not all $\mu_i - \nu_i \neq 0$ for $i=1,\ldots,k,$ else these terms would not affect $dL/dp.$ If this is not true, then the loss $L(t,p)$ is not interesting because the it would asymptotically be a function of compute $c=6tp$ alone, independent of how we allocate $c$ between $t$ and $p.$}

\paragraph{Asymptotic collapse.} We compute $\ell(x,p)$ explicitly. First, evaluate the loss at the optimal horizon:
\begin{align}
L(t^\star(p),p) - L_0 &= \sum_{i=1}^{m} a_i (t^\star(p))^{-\mu_i} p^{-\nu_i} 
= \sum_{i=1}^{m} a_i (\kappa p^{\gamma})^{-\mu_i} p^{-\nu_i} \\
&= \sum_{i=1}^{m} a_i \kappa^{-\mu_i} p^{-\mu_i \gamma - \nu_i} 
= \sum_{i=1}^{m} b_i p^{-\beta_i}.
\end{align}
Since $\beta_1 = \beta_2 = \dots = \beta_k < \beta_{k+1} \leq \dots \leq \beta_m$, we can factor out $p^{-\beta_1}$:
\begin{align}
L(t^\star(p),p) - L_0 &= p^{-\beta_1} \qty(\sum_{i=1}^{k} b_i + \sum_{i=k+1}^{m} b_i p^{-(\beta_i - \beta_1)}) \\
&= p^{-\beta_1} \qty(\sum_{i=1}^{k} b_i) \qty(1 + O\qty(p^{-(\beta_{k+1} - \beta_1)})).
\end{align}
Similarly, for $t = x t^\star(p)$:
\begin{align}
L(x t^\star(p), p) - L_0 &= \sum_{i=1}^{m} a_i (x t^\star(p))^{-\mu_i} p^{-\nu_i} 
= \sum_{i=1}^{m} a_i x^{-\mu_i} (t^\star(p))^{-\mu_i} p^{-\nu_i} \\
&= \sum_{i=1}^{m} b_i x^{-\mu_i} p^{-\beta_i} 
= p^{-\beta_1} \qty(\sum_{i=1}^{k} b_i x^{-\mu_i}) \qty(1 + O\qty(p^{-(\beta_{k+1} - \beta_1)})).
\end{align}
Taking the ratio gives:
\begin{align}
\ell(x,p) 
&= \frac{p^{-\beta_1} \qty(\sum_{i=1}^{k} b_i x^{-\mu_i}) \qty(1 + O\qty(p^{-(\beta_{k+1} - \beta_1)}))}{p^{-\beta_1} \qty(\sum_{i=1}^{k} b_i) \qty(1 + O\qty(p^{-(\beta_{k+1} - \beta_1)}))} 
= \frac{\sum_{i=1}^{k} b_i x^{-\mu_i}}{\sum_{i=1}^{k} b_i} + O\qty(p^{-(\beta_{k+1} - \beta_1)}).
\end{align}
This produces \Cref{eq:ell_asymp} with $\epsilon = \beta_{k+1} - \beta_1 > 0$.

\paragraph{Locally fastest decay of finite-size error.} Let $\gamma$ be the optimal data exponent and perturb it by $\delta$, writing $\gamma' = \gamma + \delta$. For a small enough $|\delta| > 0$, the previously tied lowest exponents $\beta_1, \ldots, \beta_k$ split into distinct values $\beta'_1, \ldots, \beta'_k$, which remain the lowest $k$ exponents (since $\delta$ is small), and the gap between the lowest and second-lowest grows as $O(|\delta|)$, which is strictly smaller than the previous gap $\epsilon = \beta_{k+1} - \beta_1 > 0$ for sufficiently small $\delta$. Therefore, locally $\gamma$ maximizes the decay exponent of the finite-size error, i.e., it gives the best collapse locally.

\paragraph{Compute-optimality up to a constant suffices.} Replacing $t^\star$ by $\lambda t^\star$ multiplies each $b_i$ by $\lambda^{-\mu_i}$ and leaves the rest of the proof unchanged. 
\end{proof}

\paragraph{Remarks.}
\begin{enumerate}
    \item Compute-optimal data exponent implies asymptotic collapse, but the converse is not necessarily true when $m \geq 3$, since there can be multiple choices of $\gamma$ that lead to balanced dominant power laws, which imply collapse, but only one of them can be compute-optimal. 
    \item In general, when $m > 2$, asymptotic instead of exact collapse is the best we can hope for. But an asymptotic collapse alone is not that interesting, since under any choice of the data exponent $\gamma$ only the terms with the lowest $\beta_i$ enter the asymptotic normalized loss curve. For example, if $L(t,p) = t^{-\mu} + p^{-\nu},$ any $\gamma > \nu/\mu$ will cause only the $p^{-\nu}$ term to dominate, leading to $\ell(x,p) \to 1,$ whereas any $\gamma < \nu/\mu$ will cause $t^{-\mu}$ to dominate, leading to $\ell(x,p) \to x^{-\mu}.$
    The latter case is similar to the infinite-width limit in neural networks, where under $t = \Theta(1),$ the loss curves become bottlenecked by training time alone and not model size \citep{vyas2023feature,bordelon2022self,yang2023iv}.
    What is interesting about the collapse that happens under compute-optimal training is that $\gamma$ is tuned (to $\nu/\mu$ in this example) so that more than one such term exists, so the collapse reflects a balanced scaling of both training time and model size. This fine balance is also why minor perturbations to $\gamma$ from the optimal value can significantly disrupt the collapse, which is not true if there is only one dominant power law.
    
\end{enumerate}

\begin{figure}[H]
\centering
\begin{minipage}[b]{0.3\textwidth}
    \centering
    \includegraphics[width=\textwidth]{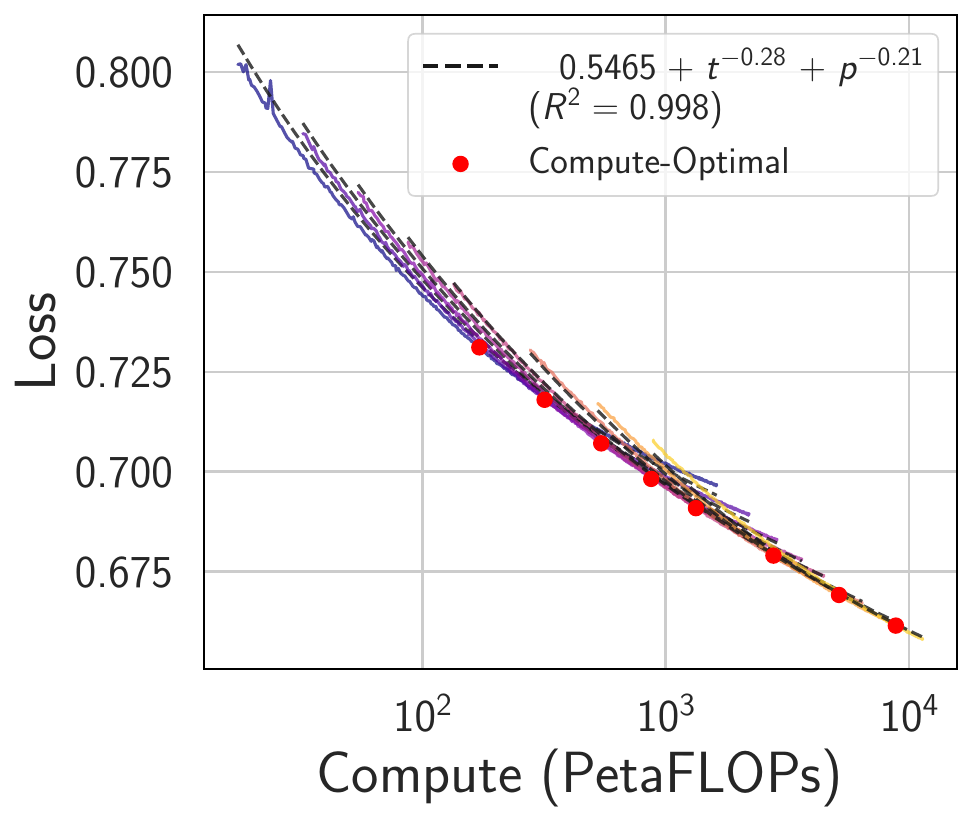}
    \subcaption{Transformer on Chess}
\end{minipage}%
\begin{minipage}[b]{0.3\textwidth}
    \centering
    \includegraphics[width=\textwidth]{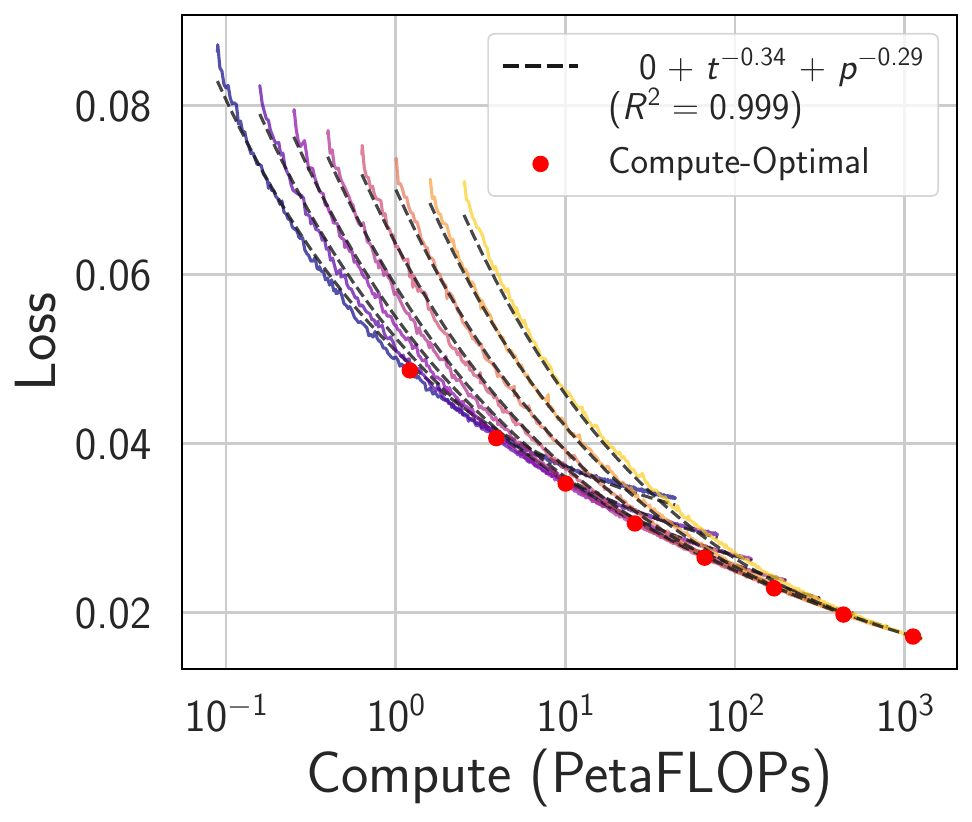}
    \subcaption{MLP}
    \label{fig:collapse-exp}
\end{minipage}
\caption{\textbf{Sum-of-power-laws fit on additional datasets.} Both tasks have loss curves that can be approximated by the sum of two power laws when using a constant learning rate schedule. Fitted constant multipliers are not shown in the legend. To not fit to early-time transients, we fit all steps after the first 0.1B tokens for MLPs, and all steps after the first $0.1\times$ the compute-optimal horizon for chess. The fit for chess is worse than the other datasets.
}
\label{fig:more-fits}
\end{figure}

\section{A Perturbative Model of Learning Rate Schedules}
\label{app:perturb}
Let $w'$ denote the parameter trajectory under the influence of gradient noise. The dynamics of stochastic gradient descent in gradient flow time are given by 
\begin{align}
\frac{dw'}{d\tau} = - \qty(\nabla L(w') + \Sigma^{1/2}(w')\xi(\tau)), \label{eq:sde}
\end{align}
with noise correlation $\E[\xi(\tau)\xi(\tau')^\top] = \eta(\tau) \delta(\tau - \tau')$. For convenience, we rewrite this using $\xi(\tau) = \eta^{1/2}(\tau)\tilde{\xi}(\tau)$ so that
\begin{align}
\frac{dw'}{d\tau} = - \qty(\nabla L(w') + \eta^{1/2}(\tau)\Sigma^{1/2}(w')\tilde{\xi}(\tau)), \label{eq:sde-gf}
\end{align}
where now $\E[\tilde{\xi}(\tau)\tilde{\xi}(\tau')^\top] = \delta(\tau - \tau')$. 

Our strategy is to solve $w'(\tau)$ as $w(\tau) + \delta w(\tau)$ where $w(\tau)$ is the deterministic trajectory satisfying $\dv{w}{\tau} = - \nabla L(w)$, up to leading order in the gradient noise scale $\eta\Sigma$. 

Letting $\delta w \coloneqq w' - w$, $g \coloneqq \nabla L$ and taking the difference of the two differential equations:
\begin{align}
\frac{d(\delta w)}{d\tau} = -\qty(g(w') - g(w) + \eta^{1/2}(\tau)\Sigma^{1/2}(w')\tilde{\xi}(\tau)). \label{eq:sde-delta}
\end{align}

At first order,
\begin{align}
g(w') \approx g(w) + H(w)\delta w
\end{align}
where $H(w) = \nabla^2 L(w)$ is the Hessian.

Our SDE for $\delta w$ becomes:
\begin{align}
\frac{d(\delta w)}{d\tau} = -H(w)\delta w - (\eta\Sigma)^{1/2}\tilde{\xi}(\tau)
\end{align}

We define the propagator $G(\tau, s)$ that satisfies:
\begin{align}
\frac{dG(\tau, s)}{d\tau} = -H(w(\tau))G(\tau, s)
\end{align}
with $G(s,s) = I$.

For time-dependent $H(w(\tau))$, the propagator is:
\begin{align}
G(\tau, s) = \mathcal{T}\exp\qty(-\int_s^\tau d\lambda \, H(w(\lambda)))
\end{align}
where $\mathcal{T}$ denotes time-ordering.

Assuming the initial perturbation $\delta w(0) = 0$, the solution for $\delta w$ is:
\begin{align}
\delta w(\tau) = - \int_0^\tau ds \, G(\tau, s)(\eta\Sigma)^{1/2}(s)\tilde{\xi}(s).
\end{align}

Now expanding $L(w') = L(w+\delta w)$ to second order in $\delta w$ gives
\begin{align}
\delta L(\tau) &= L\qty(w'(\tau)) - L\qty(w(\tau)) \nonumber\\
&\approx g\qty(w(\tau))^\top \delta w(\tau) + \frac{1}{2}\,\delta w(\tau)^\top H\qty(w(\tau))\,\delta w(\tau).
\end{align}

Since $\E[\tilde{\xi}(s)] = 0$ and $\delta w$ is linear in $\tilde{\xi}$, $\E[\delta w(\tau)] = 0$, so
\begin{align}
\E\qty[g(w(\tau))^\top\delta w(\tau)] = 0.
\end{align}
Thus the leading non-vanishing contribution to the expected loss shift comes from the quadratic term.

Using the solution for $\delta w$,
\begin{align}
\delta w(\tau)\,\delta w(\tau)^\top &= \int_0^\tau ds \int_0^\tau du \; G(\tau,s)(\eta\Sigma)^{1/2}(s)\tilde{\xi}(s) \tilde{\xi}(u)^\top(\eta\Sigma)^{1/2}(u)G(\tau,u)^\top.
\end{align}
Taking the expectation with $\E[\tilde{\xi}(s)\tilde{\xi}(u)^\top] = \delta(s-u)\,I$ gives
\begin{align}
\E\qty[\delta w(\tau)\,\delta w(\tau)^\top] = \E\qty[\int_0^\tau ds \; G(\tau,s)\,\eta(s)\Sigma\qty(w'(s))\,G(\tau,s)^\top].
\end{align}

Substituting this into the quadratic term gives
\begin{align}
\E[\delta L(\tau)] &= \frac{1}{2}\E\qty[\int_0^\tau ds \; \Tr\qty[H\qty(w(\tau))\,G(\tau,s)\,\eta(s)\Sigma\qty(w'(s))\,G(\tau,s)^\top]].
\end{align}

Using $\Tr[ABC] = \Tr[CAB]$,
\begin{align}
\E[\delta L(\tau)] = \frac{1}{2}\E\qty[\int_0^\tau ds \, \Tr\qty[G(\tau,s)^\top H\qty(w(\tau))\,G(\tau,s)\eta(s)\Sigma\qty(w'(s))]].
\end{align}

This equation is the exact leading-order expression for the noise-induced change in expected loss, which is the equivalent of $\noise(\tau)$ in \Cref{eq:quadratic-loss-terms}. The deterministic loss $L\qty(w(\tau))$ now plays the role of $\mathcal{F}(\tau)$ in \Cref{eq:loss_decoupled}. Conceptually, the derivation shows that—although the full dynamics are non-linear and the loss is not assumed quadratic—the perturbation generated by small gradient noises behaves in a simple, linear-quadratic fashion: the weight perturbation $\delta w$ is linear in the injected noise, and the resulting loss shift $\delta L$ is quadratic in that perturbation.

Consequently the derivation and final formula completely mirrors the familiar quadratic-loss result, the only difference being that the constant Hessian $H$ is now replaced by the time-dependent Hessian $H\qty(w(\tau))$ carried along the deterministic trajectory. In other words, small gradient noise ``sees'' the network through an instantaneous linearization, so all schedule effects enter through the propagator $G(\tau,s)$, the local Hessian, and the noise covariance, exactly as in the linear case.

\paragraph{Relation to Stochastic Asymptotic Expansion.}
\citet{li2017stochastic} used stochastic asymptotic expansion to expand the dynamics in orders of $\eta^{1/2}$, but they treat the gradient-noise covariance differently. The stochastic asymptotic expansion in \citet{li2017stochastic} first expands the diffusion term along the deterministic path, so $\Sigma$ can be propagated analytically order by order. Our derivation instead keeps the exact $\Sigma(w'(s))$ inside the leading-order integral, allowing an empirically measured covariance to be inserted without further approximation, at the cost of analytic closure.

\paragraph{Slow-Variation and Late-Time Limit.} 
As in the quadratic loss case, we can simplify the result under an adiabatic approximation where the Hessian, schedule, and noise covariance changes slowly compared to the time-scale set by the instantaneous Hessian. Specifically, if $H(w(t)) \approx H(w(\tau)) \coloneqq H$ over the support of $G(\tau,s)^\top H\qty(w(\tau))\,G(\tau,s)$, then $G(\tau,s) \approx e^{-H(\tau-s)}$ and $G(\tau,s)^\top H G(\tau,s) \approx H\,e^{-2H(\tau-s)}$. Assuming the exponential decay is fast compared to the variation of the noise scale $\eta\Sigma$, and taking $\tau \to \infty$, we have
\begin{align}
\E[\delta L(\tau)] \approx \frac{1}{4}\Tr\qty[\eta(\tau)\bar{\Sigma}\qty(w'(\tau))],
\end{align}
which agrees with the expression for $\noise(\tau)$ in \Cref{eq:loss_decoupled} for the quadratic loss setting.

\paragraph{Adaptive Optimizers.}
When using adaptive optimizers with a preconditioner $P(t)$, the SDE becomes
\begin{align}
\frac{dw'}{dt} = - \eta(t) P^{-1}(t) \qty(\nabla L(w') + \Sigma^{1/2}(w')\xi(t)).
\end{align}
The gradient flow time SDE in \Cref{eq:sde-gf} becomes
\begin{align}
\frac{dw'}{d\tau} = - P^{-1}(\tau) \qty(\nabla L(w') + \eta^{1/2}(\tau)\Sigma'^{1/2}\tilde{\xi}(\tau)),
\end{align}
where we abbreviated $\Sigma(w')$ as $\Sigma'.$ If the preconditioner varies slowly, the dynamics can be treated as if there is no preconditioner, but in a transformed coordinate system:
\begin{align}
\tilde{w}'(t) = P^{1/2}(t)\,w'(t).
\end{align}
Differentiating and neglecting the $O(\dot{P})$ term gives
\begin{align}
\frac{d\tilde{w}'}{d\tau} = P^{1/2}\frac{dw'}{d\tau} &= -P^{-1/2}\qty(\nabla L(w') + \eta^{1/2}\Sigma'^{1/2}\tilde{\xi}) \\
&= -\tilde{g}(\tilde{w}') - \underbrace{(\eta P^{-1}\Sigma')^{1/2}\tilde{\xi}}_{\text{noise}}.
\end{align}
The deterministic trajectory in this coordinate system ($\tilde{w}(t) = P^{1/2}(t)\,w(t)$) satisfies
\begin{align}
\frac{d\tilde{w}}{d\tau} = -\tilde{g}(\tilde{w}).
\end{align}
To dervie the SDE for $\delta \tilde{w} \coloneqq \tilde{w}' - w,$ at first order, we have
\begin{align}
    \tilde{g}(\tilde{w}') - \tilde{g}(\tilde{w}) = P^{-1/2} H \delta w = P^{-1/2} H P^{-1/2} \delta \tilde{w} \coloneqq \tilde{H}  \delta \tilde{w}, 
\end{align}
where $\tilde{H}$ is the preconditioned Hessian.
Therefore,
\begin{align}
\frac{d\delta\tilde{w}}{d\tau}= -\tilde{H} \delta\tilde{w}- (\eta P^{-1}\Sigma')^{1/2}\tilde{\xi}, \label{eq:sde-preconditioned}
\end{align}
It is also easy to show that the preconditioned Hessian indeed governs the leading order perturbation to the expected loss in the transformed coordinates
\begin{align}
\E[\delta L] \approx \frac{1}{2}\E[\delta \tilde{w}^\top \tilde{H}\delta \tilde{w}]. \label{eq:deltaL-preconditioned}
\end{align}
Given \Cref{eq:sde-preconditioned} and \Cref{eq:deltaL-preconditioned}, all steps in the previous derivation now applies after swapping $H \to \tilde{H}$ and $\Sigma' \to P^{-1/2}\Sigma'P^{-1/2} \coloneqq \tilde{\Sigma}'.$ The final result for the noise-induced loss perturbation in the slow-variation and late-time limit is
\begin{align}
\E[\delta L(\tau)] \approx \frac{1}{4}\Tr\qty[\eta(\tau)\bar{\tilde{\Sigma}}\qty(w'(\tau))].
\end{align}

\paragraph{Limitations.}
It is worth highlighting some limitations of our analysis. First, due to the non-linear nature of neural networks, it is known that gradient flow can fail to model full-batch gradient descent with a finite step size, which exhibits effects such as the Edge of Stability \citep{cohen2021gradient,cohen2022adaptive,cohen2024understanding}. Similarly, in the stochastic case, there is a strong coupling between learning rate, batch size, and the Hessian spectrum \citep{agarwala2024seos}. These phenomena suggest that it is unlikely that we can fully model the effect of a time-varying learning rate schedule simply as injecting a schedule-dependent noise component on top of the deterministic trajectory that is itself independent of the schedule, as changing the learning rate also pushes the model into different regions of the parameter space based on curvature (though it is possible that the leading-order perturbation theory can already capture this effect to some extent). 

Second, when dealing with adaptive optimizers, we assumed the preconditioner stays the same (as a function of $\tau$) between the deterministic and stochastic trajectories. For typical optimizers, such as Adam \citep{kingma2015adam}, this assumption is not correct as $P$ depends on the gradient covariance, which can differ between the two trajectories, i.e., $\Sigma(w(\tau)) \neq \Sigma(w'(\tau))$. This can introduce an additional term in the SDE for $\delta\tilde{w}$, present even at first order, which we did not model.

The empirical effectiveness of our model for predicting the loss curves under different schedules suggests it is nevertheless on the right track, and that there may be other ways to derive similar or improved predictions with more accurate assumptions.

\section{Computing $\tilde{\Delta}$ to Leading Order}\label{app:psi-diff}
Let $\psi(\tau) = \frac{\mathcal{L}(\tau) - \bar{\mathcal{L}}(\tau)}{\bar{\mathcal{L}}(\tau)}$ and define
\begin{align}
\tilde{\Delta}^2(\tau) = \E\qty[(\psi(\tau) - \psi(\tau^\star))^2]
\end{align}
with $\tau^\star = \tau + \delta\tau$ and $0 < \delta\tau \ll 1$.
Because $\bar{\mathcal{L}}(\tau^\star)^{-1} = \bar{\mathcal{L}}(\tau)^{-1} + O(\delta\tau)$,
\begin{align}
\tilde{\Delta}^2(\tau) = \bar{\mathcal{L}}^{-2}(\tau) \, \E\qty[\qty(\Delta\mathcal{L}(\tau) - \Delta\mathcal{L}(\tau^\star))^2] + O(\delta\tau^2),
\end{align}
where $\Delta\mathcal{L}(\tau) = \mathcal{L}(\tau) - \bar{\mathcal{L}}(\tau)$.
For the quadratic model,
\begin{align}
\Delta\mathcal{L}(\tau) = g(\tau)^\top\Delta w(\tau)
\end{align}
to first order in
\begin{align}
\Delta w(\tau) = \int_0^\tau ds \, e^{-H(\tau-s)}\Sigma^{1/2}(s)\,\xi(s)
\end{align}
with
\begin{align}
\E\qty[\xi(s)\xi(s')] = \eta(s)\delta(s-s')I.
\end{align}
Splitting the upper limit at $\tau$ and expanding $g(\tau^\star) = g(\tau) + O(\delta\tau)$ gives
\begin{align}
\Delta\mathcal{L}(\tau) - \Delta\mathcal{L}(\tau^\star) &= \underbrace{g(\tau)^\top \int_\tau^{\tau^\star} ds \, e^{-H(\tau^\star-s)}\Sigma^{1/2}(s)\,\xi(s)}_{O(\delta\tau^{1/2})} + O(\delta\tau)
\end{align}
where the remainder collects two subleading $O(\delta\tau)$ contributions:
\begin{align}
R_1 &:= g(\tau)^\top \int_0^\tau ds \, \qty[e^{-H(\tau^\star-s)} - e^{-H(\tau-s)}] \Sigma^{1/2}(s)\,\xi(s), \\
R_2 &:= \qty[g(\tau^\star) - g(\tau)]^\top\Delta w(\tau) = \dot{g}(\tau)^\top\Delta w(\tau)\,\delta\tau + O(\delta\tau^2).
\end{align}

Using $e^{-H(\tau^\star-s)} = I + O(\delta\tau)$, $\eta(s) = \eta(\tau) + O(\delta\tau)$, and $\bar{\Sigma}(s) = \bar{\Sigma}(\tau) + O(\delta\tau)$ inside the leading-order integral,
\begin{align}
\E\qty[\qty(\Delta\mathcal{L}(\tau) - \Delta\mathcal{L}(\tau^\star))^2] &= g(\tau)^\top \int_\tau^{\tau^\star} ds \, \eta(s)\,e^{-2H(\tau^\star-s)}\bar{\Sigma}(s) g(\tau) + O(\delta\tau^2) \\
&= g(\tau)^\top\eta(\tau)\bar{\Sigma}(\tau)g(\tau)\,\delta\tau + O(\delta\tau^2).
\end{align}
Substituting this into the expression for $\tilde{\Delta}^2(\tau)$ yields the desired result
\begin{align}
\tilde{\Delta}^2(\tau) = \bar{\mathcal{L}}^{-2}(\tau) \, g(\tau)^\top\eta(\tau)\bar{\Sigma}(\tau)g(\tau)\,\delta\tau + O(\delta\tau^2).
\end{align}

\section{Additional Results on Learning Rate Schedules}
\label{app:schedules}
\paragraph{MLP fits.} \Cref{fig:mlp_schedules} shows our predictions for MLP loss curves. With a single $\alpha = 0.26$ (very close to $1/4$), we obtain excellent fits across schedules, model sizes, and training horizons.
\begin{figure}[H]
\centering
\begin{minipage}{0.33\linewidth}
    \centering
    \includegraphics[width=\linewidth]{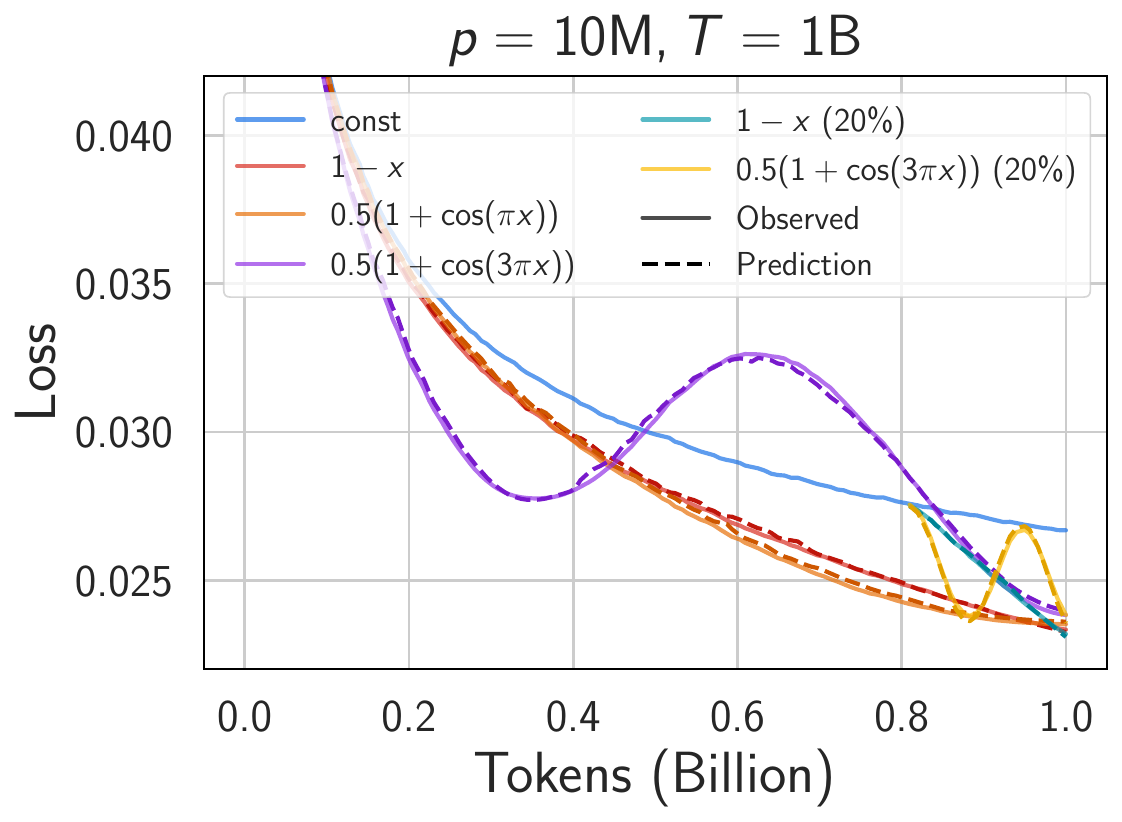}
    \subcaption{Vary Schedules}
    \label{fig:mlp-schedule-naive-fit}
\end{minipage} 
\hfill
\begin{minipage}{0.33\linewidth}
    \centering
    \includegraphics[width=\linewidth]{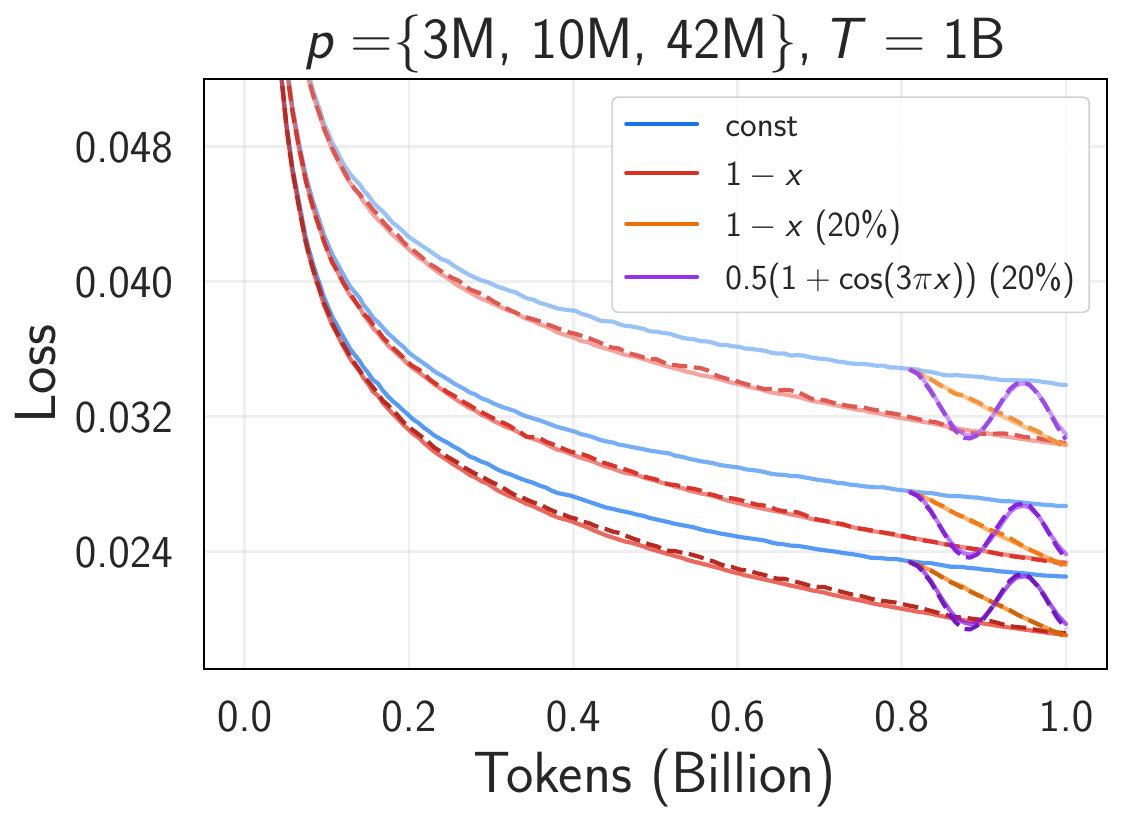}
    \subcaption{Scale Model Size}
    \label{fig:mlp-schedule-naive-fit}
\end{minipage} 
\hfill
\begin{minipage}{0.33\linewidth}
    \centering
    \includegraphics[width=\linewidth]{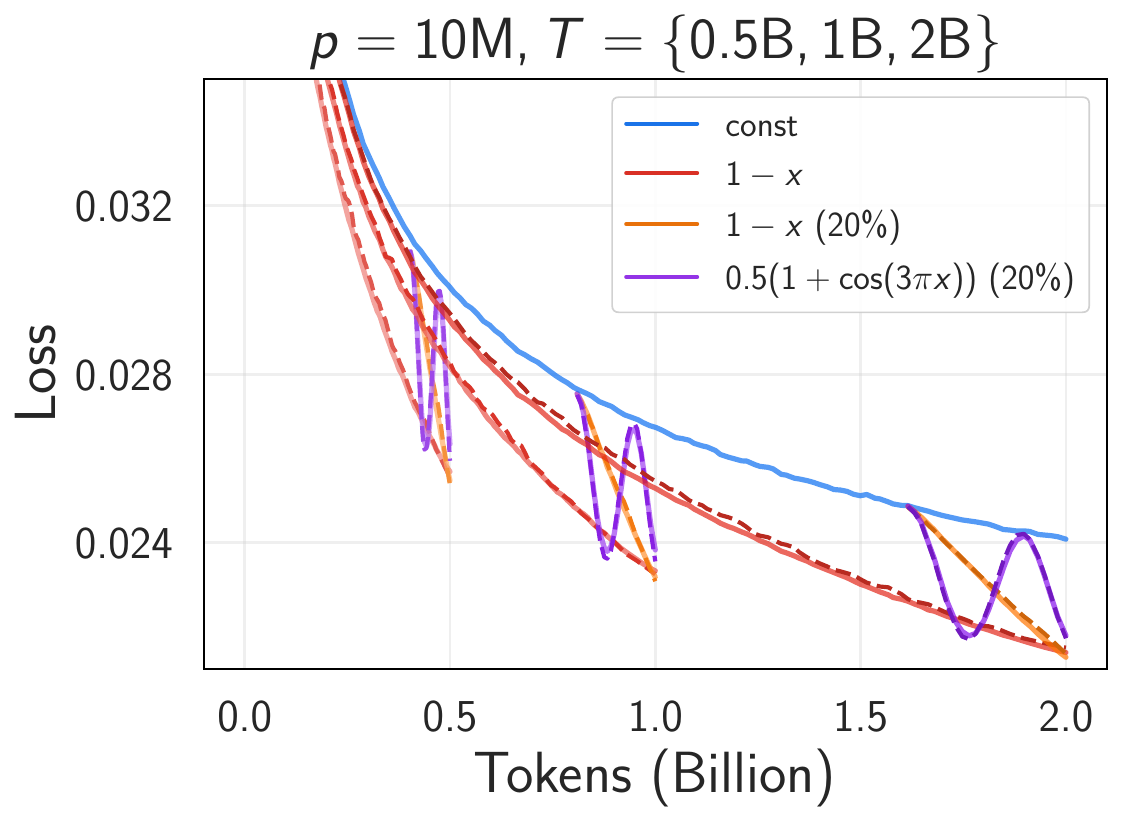}
    \subcaption{Scale Training Horizon}
    \label{fig:mlp-schedule-naive-fit}
\end{minipage}

\caption{\textbf{A simple model predicts MLP loss curves trained across learning rate schedules, model sizes $p$, and training horizons $T$ on the synthetictic regression task.} Dashed curves show the predicted loss as $L'(\tau) = L(\tau) + \alpha\, \delta\eta(\tau)\Tr(\Sigma'(\tau))$ (\Cref{eq:schedule-final}). $\alpha$ is the only free parameter and is set to $0.26.$ Each curve is smoothed with an exponential moving average with half life equal to $1\%$ of total steps. 
}
\label{fig:mlp_schedules}
\end{figure}

\paragraph{Effect of Dropping $\eta\Tr(\delta\Sigma)$.} \Cref{fig:compare_terms} (top row) shows $\delta\eta\Tr(\Sigma')$ is typically 3 to 10 times than $\eta\Tr(\delta\Sigma)$ in absolute value in both CIFAR-5M transformer and MLP regression experiments. Since we decay the learning rate to zero ($\delta\eta$ is comparable $\eta$), this means the gradient covariance does not change much between the constant learning rate and other schedules, which can happen if the Hessian trace does not change much. The fact that this ratio is only moderately large shows that the gradient covariance or the Hessian trace did change considerably due to decaying the learning rate, which is what we expect due to a generically inverse relation between learning rate and Hessian eigenvalues \citep{cohen2021gradient,agarwala2024seos}. However, somewhat puzzlingly, we found that including the smaller term $\eta\Tr(\delta\Sigma)$ can produce worse fits, particularly for the slow oscillatory schedule ($0.5\cos(3\pi x)$), and makes the optimal constant $\alpha$ more schedule-dependent. 

\begin{figure}[H]
\centering
\begin{minipage}{0.45\textwidth}
    \centering
    \includegraphics[width=\textwidth]{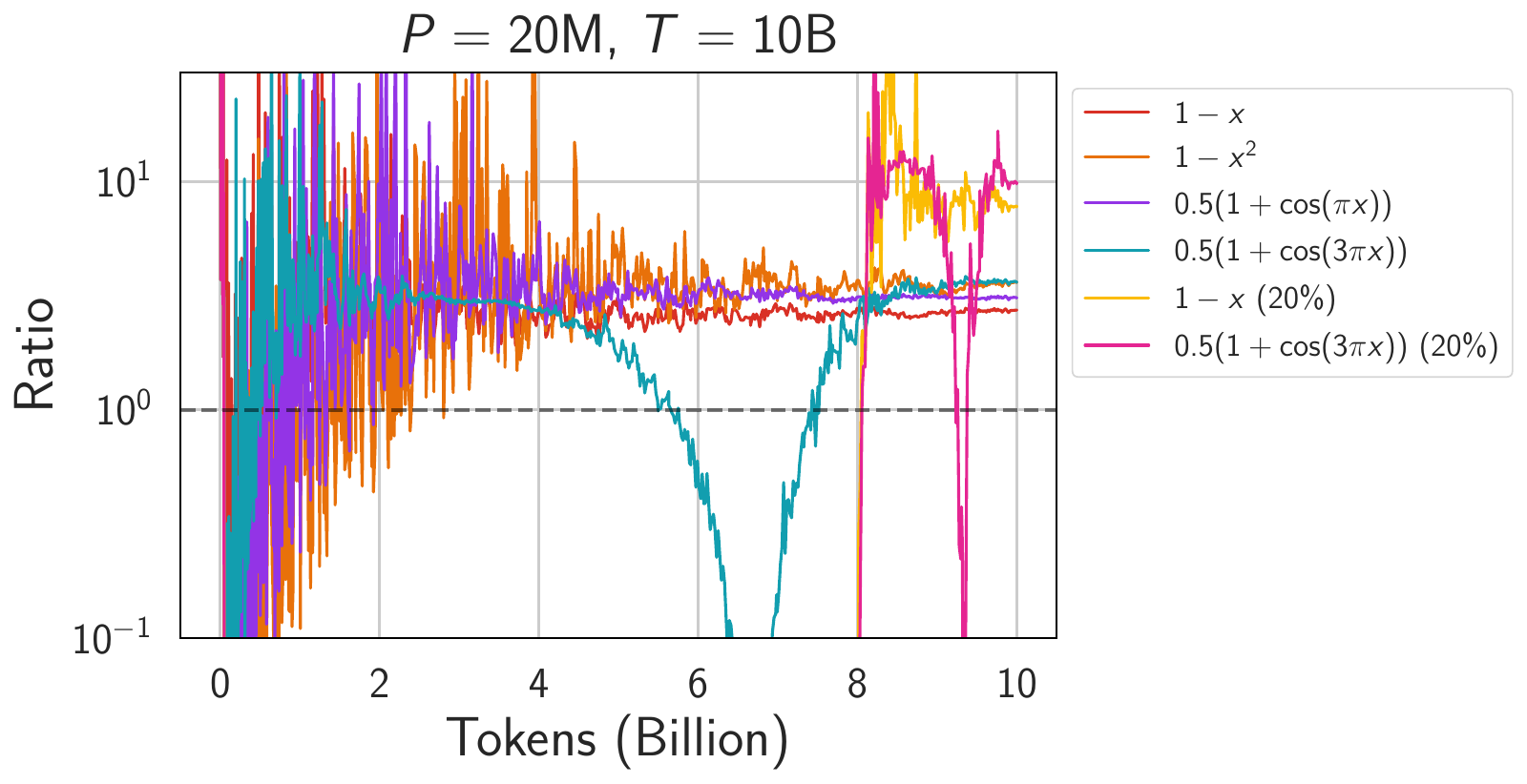}
    \subcaption{$\abs{\qty(\delta\eta\Tr(\Sigma'))/\qty(\eta\Tr(\delta\Sigma))}$ on CIFAR-5M}
\end{minipage}%
\hfill
\begin{minipage}{0.45\textwidth}
    \centering
    \includegraphics[width=\textwidth]{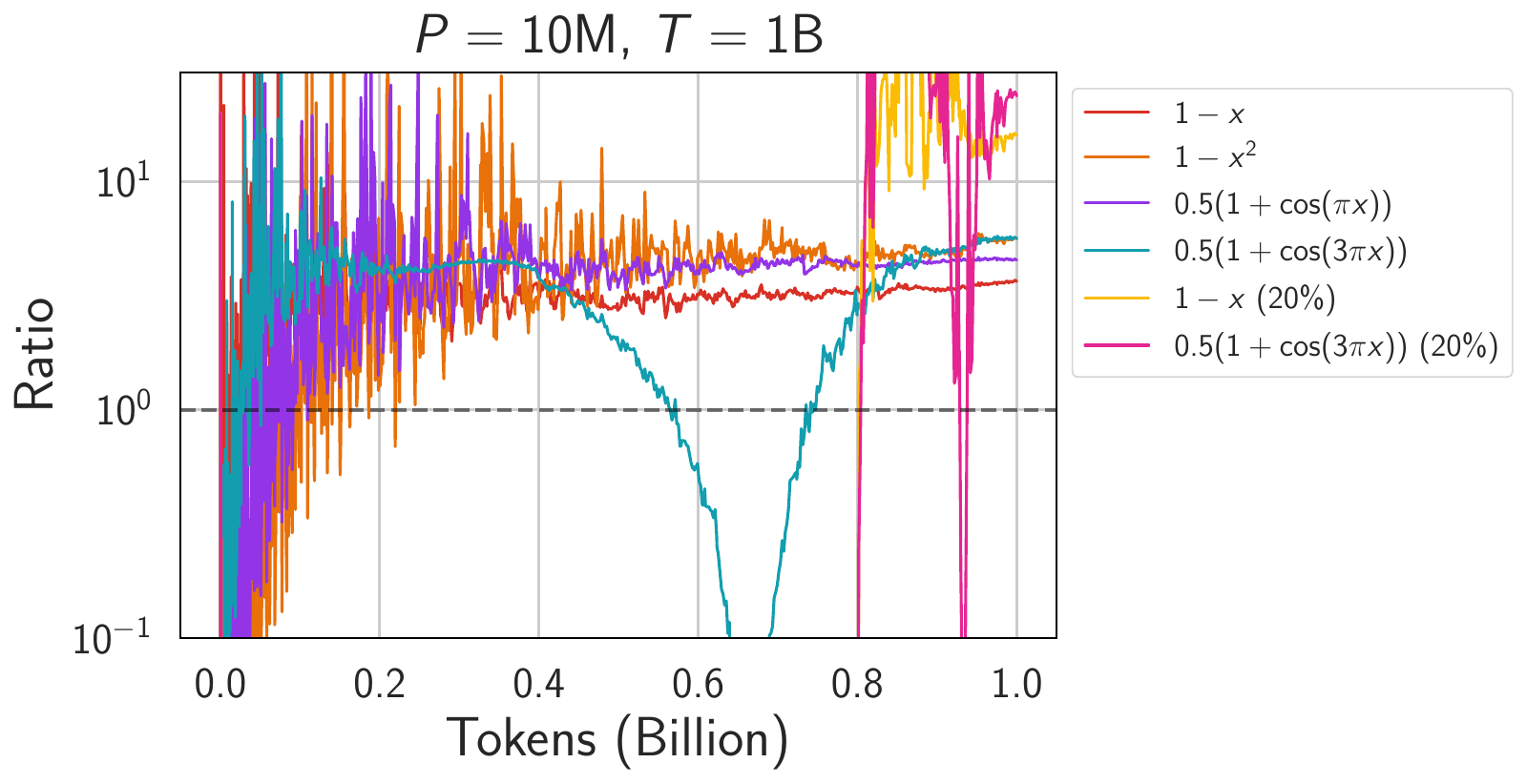}
    \subcaption{$\abs{\qty(\delta\eta\Tr(\Sigma'))/\qty(\eta\Tr(\delta\Sigma))}$ on MLP}
\end{minipage}%
\\
\begin{minipage}{0.49\textwidth}
    \centering
    \includegraphics[width=\textwidth]{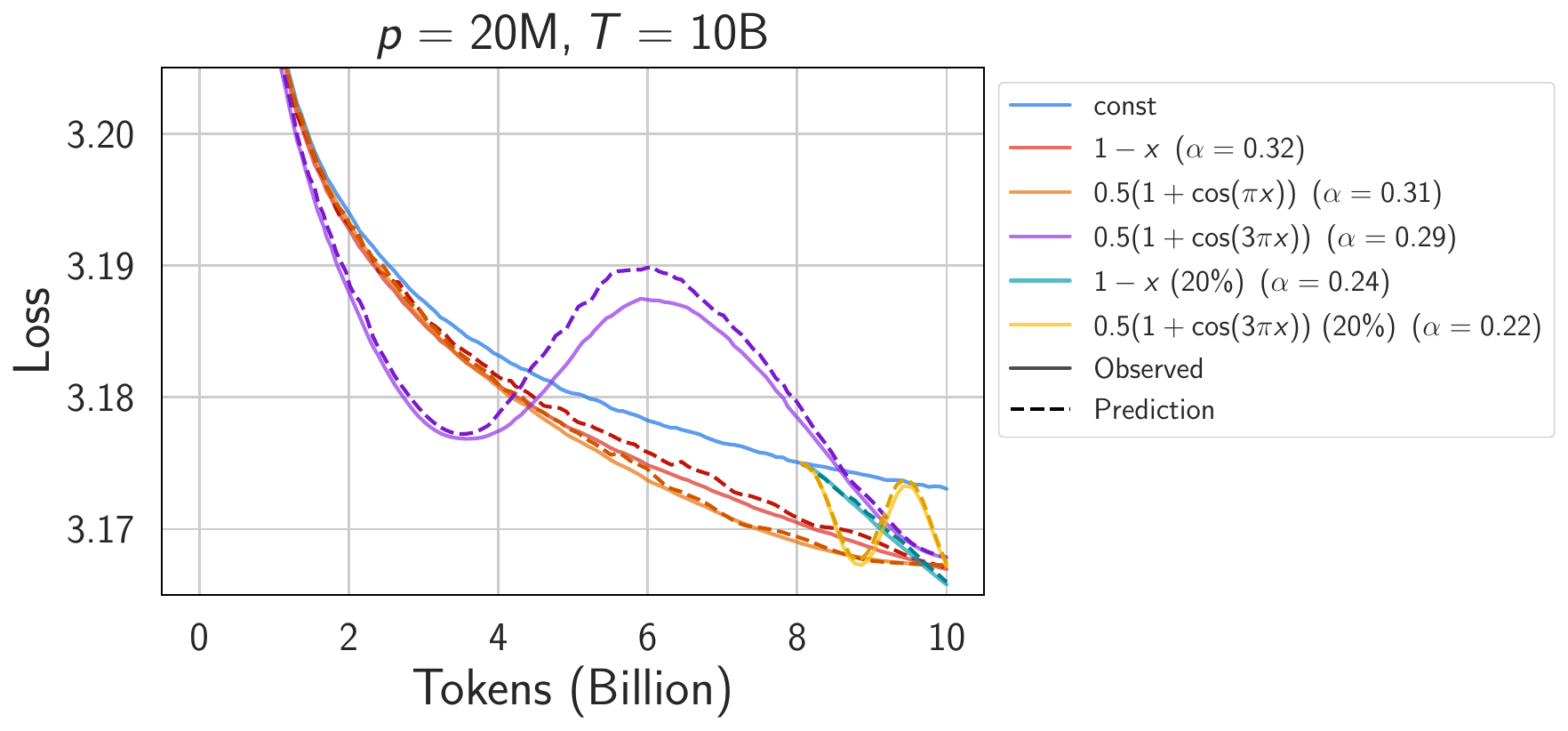}
    \subcaption{CIFAR-5M fit with both terms}
\end{minipage}
\hfill
\begin{minipage}{0.49\textwidth}
    \centering
    \includegraphics[width=\textwidth]{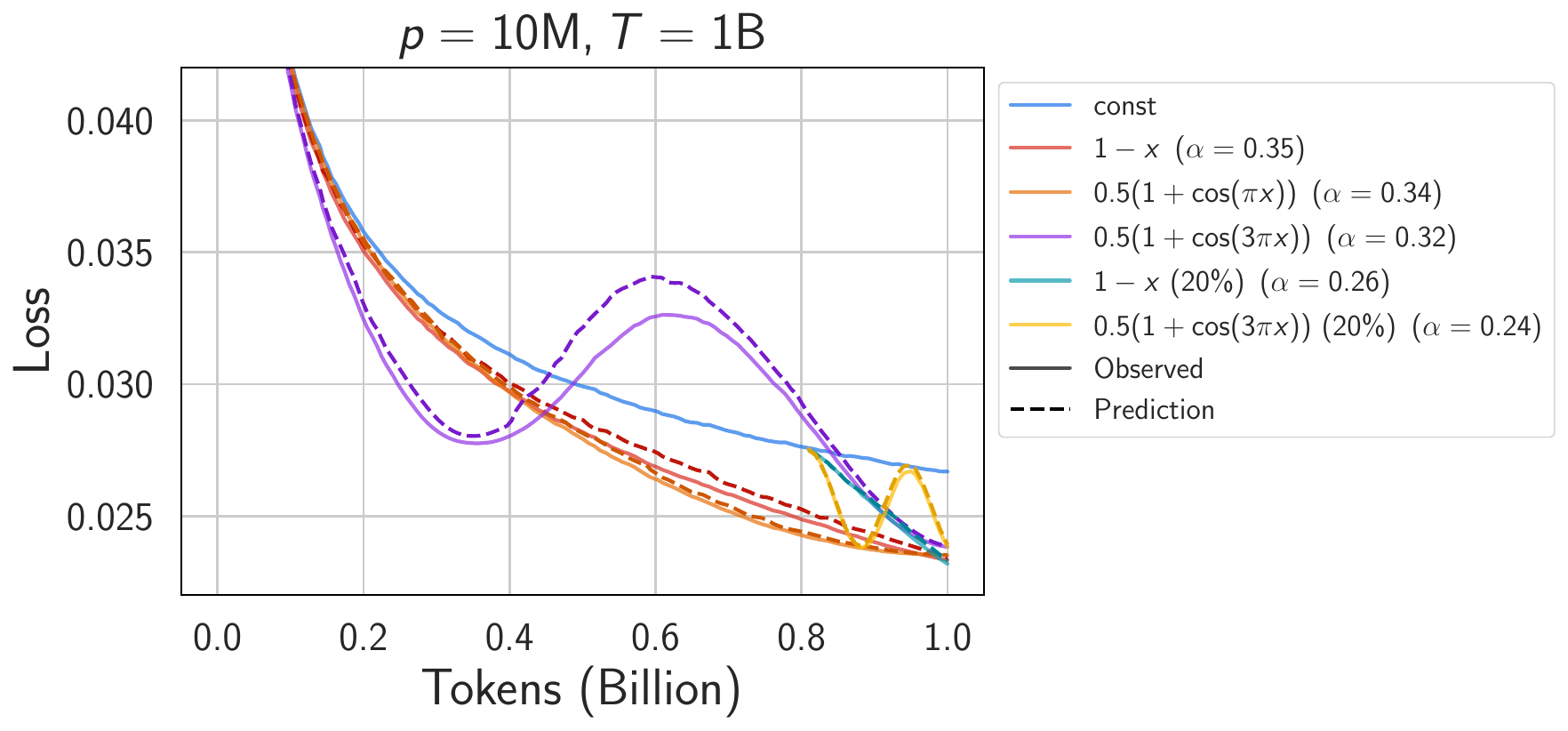}
    \subcaption{MLP fit with both terms}
\end{minipage}
\caption{Out of the two terms that make up $\Tr(\delta(\eta\Sigma))$, the term $\eta\Tr(\delta\Sigma)$ is typically $3$ to $10$ times smaller than $\delta\eta\Tr(\Sigma')$ (top row). Moreover, including it sometimes produces a worse fit and make the optimal $\alpha$ vary more across schedules (bottom row). We determine $\alpha$ for each schedule by matching the prediction with the observation at the end point.} 
\label{fig:compare_terms}
\end{figure}

\begin{figure}[H]
\centering
\begin{minipage}{0.4\linewidth}
    \centering
    \includegraphics[width=\linewidth]{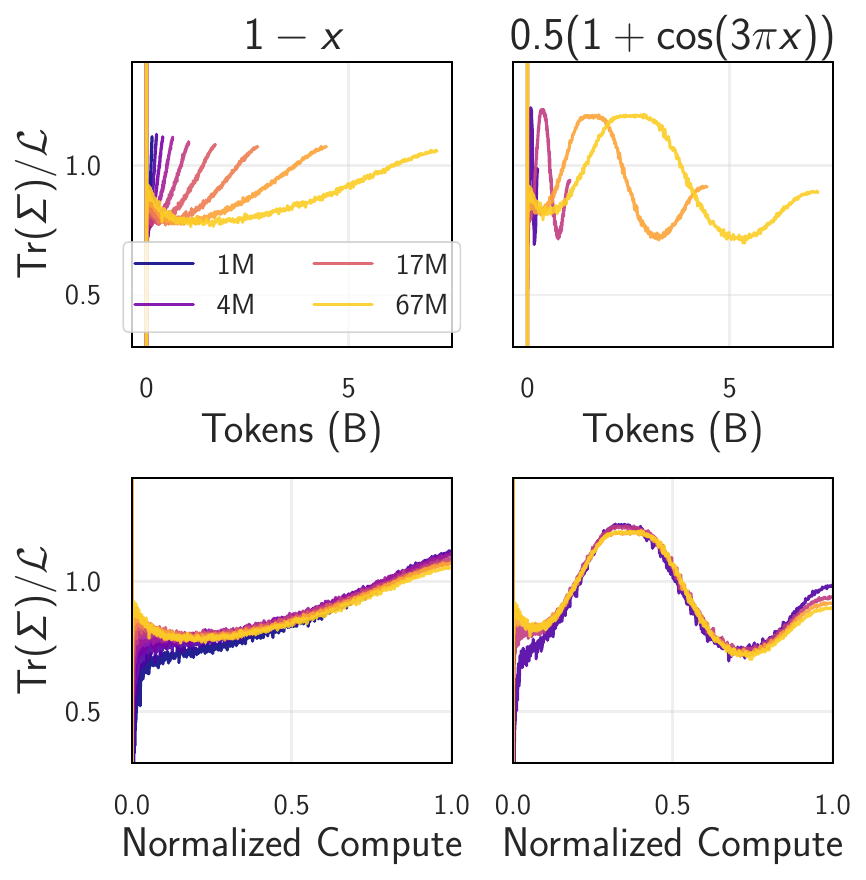}
\end{minipage}
\caption{\textbf{Universality of gradient noise in MLPs.} Fixing a learning rate schedule, the ratio $\Tr(\Sigma) / \L$ is approximately a function of normalized compute alone, independent of model size. On this regression task, the estimated irreducible loss is negligible so $\L \approx L.$
}
\label{fig:universality-of-noise-mlp}
\end{figure}

\end{document}

